\newcommandx{\iman}[2][1=]{\todo[linecolor=orange,backgroundcolor=orange!25,bordercolor=orange,author=Iman,#1]{#2}}
\theoremstyle{plain}
\newtheorem{thm}{Theorem}[section] % reset theorem numbering for each chapter
\theoremstyle{definition}
\newtheorem{defn}[thm]{Definition} % definition numbers are dependent on theorem numbers
\newtheorem{exmp}[thm]{Example} % same for example numbers
\newtheorem{asmp}[thm]{Assumption} % same for example numbers
\newtheorem{lemma}[thm]{Lemma}
\newtheorem{corollary}[thm]{Corollary}
\newtheorem{remark}{Remark}
\newcommand{\Swt}{\mathcal{Z}}
\newcommand{\Act}{\mathcal{A}}
\newcommand{\asms}{\mathcal{ASM}}
\newcommand{\world}{\mathbb{W}}
\tikzset{commutative diagrams/.cd,
mysymbol/.style={start anchor=center,end anchor=center,draw=none}
}
\newcommand{\event}{\lozenge}
\newcommand{\always}{\square}
\newcommand{\X}{\bigcirc}
\DeclareMathOperator*{\seq}{\rightarrow}
\DeclareMathOperator*{\fb}{?}
\DeclareMathOperator*{\prlseq}{\rightrightarrows}
\newcommand{\signalspace}{\mathbb{S}}
\newcommand{\returnvals}{\mathcal{R}}
\newcommand{\arcS}{\begin{tikzpicture}
    \Vertex[size=0.2,opacity=0,x=0,y=-0.1] {A}
    \Vertex[size=0.2,opacity=0,x=0.9,y=-0.1] {C}
    \Edge[,label=$\bm{s}$,fontscale=1.2,Direct,lw=1pt](A)(C)
\end{tikzpicture}}
\newcommand{\vertex}{\begin{tikzpicture}
    \Vertex[size=0.2,opacity=0]{V};
\end{tikzpicture}}
\title[Modularity in reactive control architectures]{On modularity in reactive control architectures, with an application to formal verification}
\author{Oliver Biggar}
\email{u7381193@anu.edu.au}
\affiliation{
\institution{CIICADA Lab, Australian National University}
\streetaddress{115 North Rd}
\city{Acton}
\country{Australia}
}
\author{Mohammad Zamani}
\email{mohammad.zamani@dst.defence.gov.au}
\affiliation{
\institution{Defence Science and Technology Group}
\streetaddress{506 Lorimer Street}
\city{Melbourne}
\state{Victoria}
\country{Australia}
}
\author{Iman Shames}
\email{iman.shames@anu.edu.au}
\affiliation{
\institution{CIICADA Lab, Australian National University}
\streetaddress{115 North Rd}
\city{Acton}
\country{Australia}
}
\begin{abstract}
    Modularity is a central principle throughout the design process for cyber-physical systems. Modularity reduces complexity and increases reuse of behavior. In this paper we pose and answer the following question: how can we identify independent `modules' within the structure of reactive control architectures? To this end, we propose a graph-structured control architecture we call a \emph{decision structure}, and show how it generalises some reactive control architectures which are popular in Artificial Intelligence (AI) and robotics, specifically Teleo-Reactive programs (TRs), Decision Trees (DTs), Behavior Trees (BTs) and Generalised Behavior Trees ($k$-BTs). Inspired by the definition of a module in graph theory~\cite{gallai1967transitiv} we define modules in decision structures and show how each  decision structure possesses a canonical decomposition into its modules, which can be found in polynomial time. We  establish intuitive connections between our proposed modularity and modularity in structured programming. In BTs, $k$-BTs and DTs the modules we propose are in a one-to-one correspondence with their subtrees. We show we can naturally characterise each of the BTs, $k$-BTs, DTs and TRs by properties of their module decomposition. This allows us to recognise which decision structures are equivalent to each of these architectures in quadratic time. Following McCabe~\cite{mccabe1976complexity} we define a complexity measure called \emph{essential complexity} on decision structures which measures the degree to which they can be decomposed into simpler modules. We  characterise the $k$-BTs as the decision structures of unit essential complexity. Our proposed concept of modules extends to formal verification, under any verification scheme capable of verifying a decision structure. Namely, we prove that a modification to a module within a decision structure has no greater flow-on effects than a modification to an individual action within that structure. This enables verification on modules to be done locally and hierarchically, where structures can be verified and then repeatedly locally modified, with modules replaced by modules while preserving correctness. To illustrate the findings, we present an example of a solar-powered drone completing a reconnaissance-based mission using a decision structure. We use a Linear Temporal Logic-based verification scheme to verify the correctness of this structure, and then show how one can repeatedly modify modules while preserving its correctness, and this can be verified by considered only those modules which have been modified.
\end{abstract}
\begin{document}

\maketitle
%\tableofcontents
\section{Introduction}

\begin{figure}
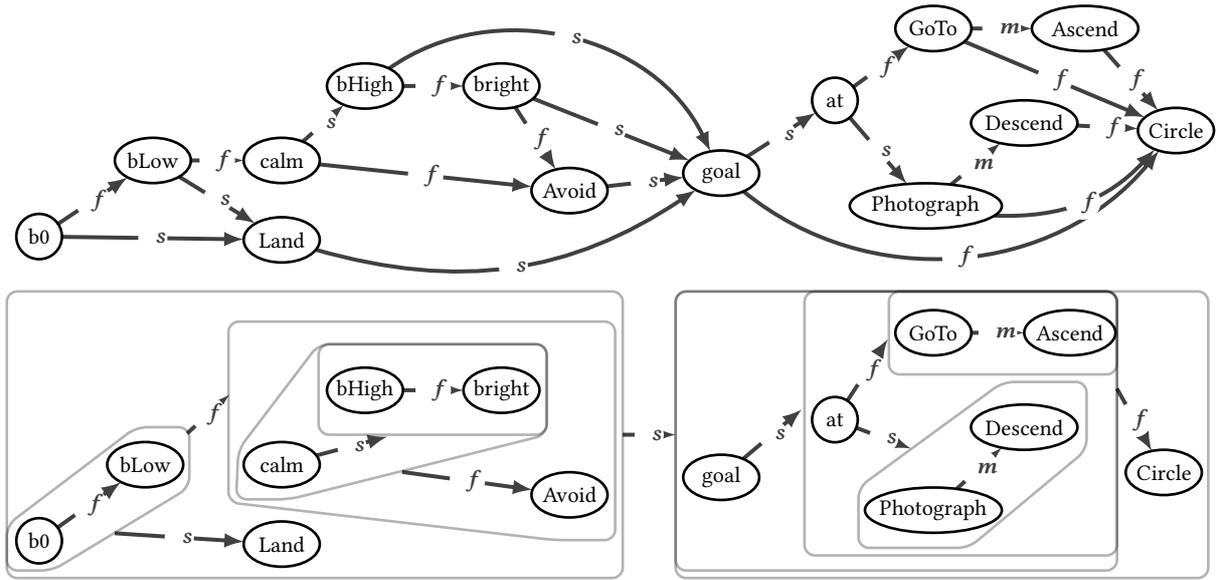

    \centering
    \includestandalone{example_tikz/d2}
    \includestandalone{example_tikz/d2modules}
    \caption{(\emph{Above}:) A \emph{decision structure} for control of a solar-powered drone, used in the example in Section \ref{sec:example}. Decision structures are control architectures which generalise BTs, TRs, DTs and $k$-BTs. (\emph{Below}:) Its unique \emph{module decomposition}. Each circled subset of the nodes is what we call a \emph{module}. We show how this concept captures the intuitive notion of modularity, and corresponds to subtrees in BTs, $k$-BTs and DTs. This structure has \emph{essential complexity} 2 (see Section \ref{sec:complexitymeasure}).}
    \label{fig:motivation}
\end{figure}

The concept of modularity is a fundamental principle of engineering, occurring in both cyber and physical systems. Modularity allows systems to be decomposed into simple and well-defined subparts, or \emph{modules}, which can be modified, reused or rearranged to form new structures. In this paper, we examine modularity in control architectures from a theoretical and practical perspective, taking inspiration from graph theory and structured and object-oriented programming.

Designing correct and intelligent behavior for autonomous agents is a classic problem in AI. In particular, \emph{control architectures} are a tool for constructing agents which react correctly to their environment. Control architectures encapsulate a strategy for deciding on actions while considering the state of the system and its environment. Architectures have been realised in a variety of forms, rising and falling in popularity in various niches of control and AI. These include Finite State Machines~\cite{hopcroft2001introduction}, Nilsson's Teleo-Reactive programs (TRs)~\cite{nilsson1993teleo}, Decision Trees and more recently Behavior Trees (BTs)~\cite{btbook} and generalised Behavior Trees ($k$-BTs)~\cite{kbts}. Each have strengths and weaknesses in various contexts, balancing out differing desirable traits such as modularity, simplicity, readability, expressiveness, reactiveness and richness of theory. As autonomous systems grow in complexity, modularity becomes an ever more important property in control architectures. However, lacking a formal definition, the concept is generally applied informally as an engineering principle. Cyber-physical systems are often more complex and have significantly higher costs of failure than entirely virtual AI systems. Therefore, there is increased burden on AI to be reliable and transparent in their use of modularity. We believe it is not enough to argue that a control architecture is modular---we must know \emph{why} it is modular, and understand fully the consequences for design, testing and verification.

The Oxford Dictionary defines a \emph{module} in its computing sense as ``any of a number of distinct but interrelated units from which a program may be built up or into which a complex activity may be analyzed." Building from this, and borrowing intuition from software design, we consider a `modular system' as one which is composed of individually simple and analysable \emph{modules}, where the whole can be constructed by composing the modules. Modules encapsulate data or policies, with the relationships between modules enforced through fixed interfaces, allowing modification to occur without cascading effects.

These ideas have played central roles in the development of programming and AI. In 1968 Dijkstra~\cite{gotoharmful} argued unstructured use of the \texttt{goto} statement was harmful and introduced errors to software. This formed the basis of the structured programming movement, which suggested that programs be composed of a small number of fixed constructs: sequencing, selection and iteration~\cite{dahl1972structured,dijkstra1970notes}. From the perspective of the \emph{control-flow graph}, which is a graph representation of the block structure of a program, this corresponds to building the graph recursively by combining subgraphs corresponding to each of these constructs~\cite{kosaraju1974analysis}. Later, McCabe~\cite{mccabe1976complexity} provided a complexity measure applied to the control-flow graph, called \emph{cyclomatic complexity}, which measured the difficulty of testing a program or function. This metric is now standard in software engineering~\cite{watson1996structured}. He then showed, similarly to Dijkstra, that control-flow graphs could be `decomposed' into proper subgraphs with single entry and exit nodes, with the graphs of structured programs being precisely those decomposable into a structure of complexity 1. The result was a measure which indicated the degree of `unstructuredness', or lack of modularity. Separately and almost concurrently, Gallai~\cite{gallai1967transitiv} developed the concept of a \emph{module} in graph theory, which can be thought of as a subgraph which presents a fixed interface to all other nodes of a graph. Moreover, the four key principles of object-oriented programming (encapsulation, abstraction, inheritance and polymorphism) all revolve around the concept of hiding information behind fixed interfaces. Modularity is also an important concept in AI, where modular control architectures such as Teleo-Reactive Programs and Behavior Trees are common. BTs first appeared in game AI~\cite{isla2015handling}, where constructing intelligent and reusable behavior has become critical. More recently, BTs are gaining significant popularity in robotics~\cite{btbook,surveyofbts,integratedARM}, with advocates arguing this is due to their reactiveness and modularity. BTs are considered modular because subtrees and individual nodes present a fixed interface, given by three \emph{return values}~\cite{btbook}. Hence subtrees of a BT are themselves BTs, and thus reusable and readable.

In this paper we propose a formal definition for `modules' in reactive control architectures (such as BTs, TRs, DTs and $k$-BTs). \emph{Reactive} architectures~\cite{nilsson1993teleo,kbts} are those which select an action on the basis of the current input only, and not any past inputs. Given a control architecture we provide a procedure for identifying structures within such architectures as modules that are agreeable to one's intuition. Following~\cite{mccabe1976complexity}, we develop a complexity measure defining the degree of modularity of control architectures and show how these results allow us to compare, analyse and formally verify control architectures in a modular way.
%Two ideas emerge from the above discussion as candidates for defining and analysing modularity in control architectures: grouping actions behind common interfaces, and defining control architectures recursively. These ideas form the core of our approach. Following Gallai~\cite{gallai1967transitiv}, we develop the concept of a \emph{module} within a control architecture. In our framework a module represents a collection of actions which can be considered independent within the decision-making of a control architecture. 

 We present definitions of control architectures and actions and use these to formalise a number of reactive control architectures (BTs, $k$-BTs, DTs and TRs). We then define a new architecture from directed acyclic graphs which we call a \emph{decision structure}, and show how all of the above architectures can be identified canonically with sets of decision structures. Following the graph-theoretic definition of~\cite{gallai1967transitiv}, we define \emph{decision structure modules}, which are subgraphs of decision structures grouped by their interfaces. We show how this provides a unique decomposition of any decision structure into modules which can be found in $O(n^2k)$ time, where $n$ is the number of nodes and $k$ the number of arc labels. Figure~\ref{fig:motivation} gives an example of a decision structure and its module decomposition. We show that modules in decision structures obey many of the properties of modules in graph theory. Further, we show how the decision structures corresponding to BTs, TRs and $k$-BTs and DTs can be naturally characterised by their modules. These characterisations show clearly which kind of action selections \emph{cannot} be constructed in each architecture.

Additionally, we prove a result with important consequences for hierarchical formal verification. We prove that the `independence' of modules holds in a formal sense, in that in any verification scheme, modifying a module within a verified structure has no more of an effect than modifying a single action. We show how we can verify the correctness of changes to actions locally, without repeating the verification on their hosting architecture. This allows verification to be done hierarchically, by verifying simple designs which are then refined. This makes possible the creation of libraries of verified decision-making for complex tasks which can be easily recombined while preserving correctness.

To demonstrate the power of the above result concretely, we define a new verification scheme capable of formally verifying any reactive architecture with Linear Temporal Logic (LTL). This scheme generalises the recent work of~\cite{biggar2020framework}, which showed that the `modularity' of BTs allows subtrees to be modified and formally verified separately while preserving the correctness of the tree. Our result is consistent with this, as we show that the subtrees of a BT are exactly the modules of the decision structure equivalent to that BT. Our result extends this theorem to a much wider class of architectures.

Finally, we provide an extensive practical example. We show a complex decision structure for control of a high-altitude solar-powered drone, inspired by~\cite{klockner2016behavior}. We conduct a formal verification of this structure given models of each action, and then show how to replace individual modules and guarantee the correctness of the modified structure while verifying only the modified modules. We then construct the module decomposition of the resultant structure, and determine which control architectures can equivalently represent it.

In summary, the main contributions of our paper are the proposed formalisation of reactive action selection, the concept of decision structures as a unifying framework for reactive architectures, modules in decision structures, the characterisations of each of these architectures, the complexity measure of control architectures and the result that modular structures can be verified in a modular way.

The paper is organised as follows. Section~\ref{sec:relatedwork} describes the related work, and Section~\ref{sec:btsandltl} presents the standard definitions of BTs, $k$-BTs, DTs, TRs and LTL. In Section~\ref{sec:cas and asms} we define control architectures and action selection. Section~\ref{sec:decisionstructures} introduces decision structures. Section~\ref{sec:modularity} focuses on modularity, defining modules and showing how decision structures can be decomposed into them. This is then used to define and prove the characterisations of these architectures. Section~\ref{sec:verrification} discusses formal verification and its relationship to modularity. Section~\ref{sec:example} goes through an example of the ideas introduced, and Section~\ref{sec:conclusions} concludes the paper. Few proofs of Theorems and Lemmas are found in the main text of the paper; most are instead in the Appendix.

\section{Related Work and Contributions} \label{sec:relatedwork}

This work investigates modularity and its consequences for reactive control structures. The ideas within draw on numerous sources, but we believe no other work has approached this precise question from this viewpoint.

From a broad perspective, this paper parallels developments in structured programming, classifying \emph{control-flow graphs} corresponding to structured programs~\cite{dahl1972structured}, as discussed in the introduction. Previous authors in the BT literature have used structured programming as a source of analogies. These papers~\cite{surveyofbts,btbook,ogren2012increasing,generalise} cite Dijkstra's paper `Goto statement considered harmful'~\cite{gotoharmful}, and use this as an analogy for the usefulness of modularity. Specifically, the tree structure of BTs is compared to the structure of a function call in structured programming, and contrasted against Finite State Machines (FSMs) whose unconstrained structure is similar to the use of the \texttt{goto} statement. This idea is examined more formally in~\cite{expressiveness}, comparing the result of~\cite{ogren2012increasing} to the B\"ohm-Jacopini Theorem~\cite{bohm1966flow} in structured programming. We take this farther by showing that decision structures corresponding to BTs can be recursively decomposed into minimally-complex modules, a result which closely resembles McCabe's result~\cite{mccabe1976complexity} showing that the control-flow graphs of structured programs can be recursively decomposed into minimally-complex graphs. Likewise, our definition of \emph{structural equivalence}, parallels the definition of \emph{weak reducibility} between control-flow graphs from~\cite{kosaraju1974analysis}. These connections have not been explored before, to the best of our knowledge.

The systems-theoretic principles behind our approach are influenced most by~\cite{willems2007behavioral,tabuada2009verification,lee2002embedded}. Our definitions of an action selection mechanism is similar to~\cite{expressiveness}, restricted to reactive action selection and with the addition of return values. This definition simplifies and generalises the formalism for BTs of~\cite{unifiedframework} by removing references to `ticks' and by interpreting control as an unspecified behavior map, following Lee~\cite{lee2002embedded}. This approach abstracts the continuous aspects to form a discrete problem, which as explained in Section~\ref{sec:decisionstructures}, also captures the discrete BT framework of~\cite{biggar2020framework}. This relies on some assumptions about reactiveness, for which we follow the arguments in~\cite{kbts}. Various authors in the fields of BTs and TRs have provided discussions of reactiveness, particularly~\cite{nilsson1993teleo,martens2018resourceful}. 

Prior to our construction of decision structures from BTs, some authors had performed similar translations of BTs to graphs. In~\cite{hannaford2019hidden}, a BT is translated to a graph, which is then used to construct a Hidden Markov Model. This graph is essentially a decision structure with the addition of `Success' and `Failure' nodes, and so can be directly translated into the representation here. However, this construction was not generalised, and the subsequent characterisation of BTs given was only a necessary and not a sufficient condition (see Remark~\ref{rem:hannaford}). We provide a clear approach to this translation consistent with the principles suggested in~\cite{kbts} and prove a characterisation of the BTs in these structures. Several papers have described Behavior Trees as Finite State Machines with special structure~\cite{isla2015handling,generalise,btbook}, with~\cite{generalise} providing an explicit translation from a BT to an (Hierarchical) FSM, in a manner similar to our construction of a decision structure. However, no efforts have been made to identify the class of FSMs to which these instances belong. FSMs are in general not reactive~\cite{expressiveness} as they make use of memory and internal state, which makes it difficult to compare them structurally to BTs. In this paper we use the decision structures as reactive analogues, and are able to determine the ``special structure" which provides their modularity. We are then able to argue which FSMs are BTs (under the translation from~\cite{generalise,btbook}, see Remark~\ref{rem:fsms}).
%We explain the relationship between decision structures and FSMs (see Section something), and using this we are able to answer the following question: if BTs are FSMs with special structure, what is that structure? Which FSMs are equivalent to BTs? One reason for the lack of an answer to this question was that until the paper \cite{kbts}, it was generally accepted [cite all] that BTs and FSMs were equivalent, which provided a trivial answer to the above question. This equivalence did not distinguish between meaningful differences in expressive power, as discussed in \cite{kbts}. Following the equivalence defined there, we show BTs are truly a proper subset of FSMs and identify that subset.

One of our results is that modifications to modules in decision structures can be verified as easily as modifications to actions. This result allows structure to be verified and have their correctness preserved by locally verifying the change to a module. A similar theorem is stated for BTs in Theorem 5.1 of~\cite{biggar2020framework}, where it shows that subtrees of BTs can be refined and verified locally. We show that the modules in decision structures corresponding to a BT are exactly the subtrees, and so this becomes a special case of our Theorem~\ref{verificationcorrespondence}, which generalises the result to all reactive architectures representable as decision structures, and to all verification schemes. The concrete verification scheme presented in Section~\ref{sec:verrification} generalises the method given in~\cite{biggar2020framework} and provides a simple and intuitive method for verifying any reactive architecture.
\section{Preliminaries} \label{sec:btsandltl}

\subsection{Mathematical notation}

 Given a function $f:X\to Y$, we denote the \emph{preimage} of a set $Z\subset Y$ as $f^{-1}(Z)$, and for $z\in Z$ write $f^{-1}(z)$ to mean $f^{-1}(\{z\})$. A \emph{graph} $G=(N,A)$ is a set of \emph{nodes} $N$ and \emph{arcs} $A$, where $A\subseteq N\times N$. Graphs in this paper are assumed to be directed. We use $N(G)$ and $A(G)$ to denote the node and arc sets. For an arc $(v_1,v_2)$, we call $v_2$ the \emph{head} and $v_1$ the \emph{tail}. A \emph{source} is a node that is the head of no arc and a \emph{sink} is the tail of no arc. A graph is labelled if there are maps $\ell:A(G)\to X$, $\eta:N(G)\to Y$ to some sets $X,Y$ of labels. A graph $K$ is a \emph{subgraph} of $G$, written $K\leq G$ if $N(K)\subseteq N(G)$ and $A(K) = \{ (v_1,v_2)\in A(G)\ |\ v_1,v_2\in N(K)\}$ (this is often called an \emph{induced subgraph}). If we are given a set of nodes $X\subset N(G)$, we will write the subgraph induced by $X$ in $G$ as $G[X]$. %Given a set $H\subseteq N(G)$, the \emph{successors of H in G} is the set $S_H^G = \{v\in N(G)\ |\ \exists h\in H,\ (v,h)\in A(G)\}$ of nodes outside $H$ which are the heads of arcs from $H$. We may omit the $G$ where it is clear from context. If $G$ is labelled, $S_H$ is a map $S_H: X \to 2^{N(G)}$, where where ${S_H}(x) = \{v\in N(G)\ |\ \exists h\in H,\ (v,h)\in A(G),\ \ell(v,h) = x\}$, with $x\in X$ and $2^{N(G)}$ representing the power set of $N(G)$. We will abuse notation by allowing ourselves to write $v\in S_H$ for $v\in \cup_{x\in X}S_H(x)$ and $S_n$ for $S_{\{n\}}$. We similarly define the \emph{predecessors} of $H$ as the nodes which are tails of arcs into $G$.
A sequence of nodes $v_1,\dots,v_n$ is a \emph{path} if for $i\in\{1,\dots,n-1\}$, $(v_i,v_{i+1})\in A(G)$. We say a node $q$ is an ancestor of a node $v$ if there is a path from $q$ to $v$. A path is a \emph{cycle} if it has length at least two and its first and last node are the same. A graph is acyclic if it has no cycles. A \emph{tree} is a graph where there is a node $r$, called the \emph{root}, such that there is exactly one path from $r$ to any other node (sometimes called an \emph{arborescence}). Trees are acyclic and the root is the only source. Sinks in trees are called \emph{leaves}, and we shall refer to other nodes in a tree as \emph{internal nodes}. Two labelled graphs $H$ and $G$ are \emph{isomorphic} if there exists a bijection $\Gamma: N(G) \to N(H)$ such that $(v_1,v_2)\in A(G)$ iff $(\Gamma(v_1),\Gamma(v_2))\in A(H)$ and $\ell(v_1,v_2) = \ell((\Gamma(v_1),\Gamma(v_2)))$.%A Boolean algebra is a 5-tuple $(B,\lor,\land,0,1)$ where $B$ is a set, $\land,\lor: B\times B\to B$ and $0,1\in B$, which obey the Boolean algebra axioms. Given a set $X$ of sets, \emph{the Boolean algebra generated by $X$}, written $(X,\lor,\land,\emptyset,\bigcup X)$ is the set of all sets that can be written as finite unions of intersections of elements of $X$, with $\cap = \land$ and $\cup =\lor$. If $X$ is finite, given an element $x$ in a set in $X$, we write $\iota(x)$ for the injection of $x$ into the intersection $\bigwedge_{x\in s} s$ of all sets in $X$ containing $x$. 
%Given matrices $M, P$ whose entries are elements of some Boolean algebra, the product $M\otimes P$ is given by the usual matrix multiplication with $+,\times$, replaced by $\lor,\land$. 
A permutation is a bijection from a finite set to itself.%Given a set $X$, we write $X^n$ for the set of $n$-length tuples of elements of $X$, and write $X^*$ for the set of tuples of $X$ of any finite length.

\subsection{Behavior Trees, Generalised Behavior Trees, Teleo-reactive programs and Decision Trees}
\subsubsection{Behavior Trees}
BTs are control architectures which take the form of trees. The execution of a BT occurs through signals called `ticks', which are generated by the root node and sent to its children. A node is executed when it receives ticks. Internal nodes tick their children when ticked, and are called \emph{control flow nodes} and leaf nodes are called \emph{execution nodes}. When ticked, each node can return one of three possible return values: `Success' if it has achieved its goal, `Failure' if it cannot operate properly and `Running' otherwise, indicating its execution is underway. Typically, there are four types of control flow nodes (Sequence, Fallback, Parallel, and Decorator) and two types of execution node (Action and Condition)~\cite{btbook}. Note that Fallback is sometimes called Selector. A Condition (drawn as an ellipse) checks some conditional statement, returning Success if true and Failure otherwise. An Action node (drawn as a rectangle) represents an action taken by the agent. Sequence nodes (drawn as a $\seq$ symbol) tick their children from left to right. If any children return Failure or Running that value is immediately returned by Sequence, and it returns Success only if every child returns Success. Fallback (drawn as $\fb$) is analogous to Sequence, except that it returns Failure only if every child returns Failure, and so on. The Parallel node (drawn as $\prlseq$) has a success threshold $M$, and ticks all of its $N$ children simultaneously, returning Success if $M$ of its children return Success, Failure if $N-M +1$ return Failure and Running otherwise. The Decorator node returns a value based on some user-defined policy regarding the return values of its children. For a more detailed discussion, we refer the reader to~\cite{btbook}. In this paper, we will not discuss the Parallel node, as the associated questions of concurrency are complex~\cite{improvingparallel} and tangential from the main results of this paper. Similarly we will also omit discussion of the Decorator node---its extreme flexibility makes formal analysis difficult and it can violate the readability and reactiveness properties of BTs~\cite{kbts}. Hence we will consider BTs to consist of leaf nodes which are Actions and Conditions and two internal nodes, Sequence and Fallback. This approach is taken in some other papers reasoning about BTs such as~\cite{biggar2020framework,martens2018resourceful}. Following~\cite{biggar2020framework}, we will often write BTs in infix notation, such as $A\seq (B \fb C) \seq D$ where the control flow nodes are interpreted as associative operators over the leaf nodes, and the tree structure is the syntax tree.
\subsubsection{Generalised Behavior Trees}
With this interpretation of BTs, it is straightforward to define the generalised Behavior Trees, also called the $k$-BTs~\cite{kbts}. A $k$-BT is a tree which executes by ticks from the root. When ticked, nodes can return one of $k$ values, which we interpret as integers from 1 to $k$. The control flow nodes come in $k$ types, which we write as $*_1,*_2,\dots,*_k$. Each control flow node $*_i$ ticks its children from left to right. If any child returns a value other than $i$, the node immediately returns that value, and $*_i$ returns $i$ only if all its children return $i$. $k$-BTs do not have an explicit ``Running" value; any value other than $1,\dots,k$ is interpreted as Running. If we interpret the return value 1 as ``Success", 2 as ``Failure", as in~\cite{kbts}, we can observe that the definitions of $*_1$ and $*_2$ correspond exactly to the Sequence and Fallback nodes for BTs. In this sense, the 2-BTs are exactly the classical Behavior Trees as described above, where any other return values are interpreted as ``Running".
\subsubsection{Teleo-reactive programs}

Teleo-reactive programs (TRs)~\cite{nilsson1993teleo} are lists of condition-action rules.\begin{align*}
    k_1 &\to a_1 \\ 
    k_2 &\to a_2 \\
    &\vdots \\
    k_n &\to a_n 
\end{align*} These are executed by continuously scanning the list of conditions $k_i$ in order, and executing the action $a$ associated with the first satisfied condition. If another condition higher in the list becomes true then the executing action switches immediately. The \emph{teleo} indicates that such lists are goal-oriented while \emph{reactive} is intended to describe how they react constantly to changes in the environment. The Teleo-reactive programs are equivalent to the 1-BTs~\cite{kbts}.
\subsubsection{Decision Trees}
Decision Trees (DTs) are decision-making tools which resemble the structure of if-then statements. Specifically, DTs are binary trees where leaves represent actions and internal nodes represent predicates. The two arcs out of each predicate are labelled by `True' and `False'. Execution of DTs occurs by beginning at the root and evaluating each predicate on the current input state until a leaf is reached, at which point that action is executed. At a predicate node, if it is true in the current input state the execution proceeds down the `True' arc, and otherwise down the `False' arc. Like TRs, DTs are executed by continuously checking the predicates against the current state of the world.

\subsection{Linear Temporal Logic}

Linear Temporal Logic (LTL)~\cite{ltl} is an extension of propositional logic which includes qualifiers over linear paths in time. It has long been used as a tool in formal verification of programs and systems. Given a set $AP$ of atomic propositions, the syntax of LTL is given by the following grammar~\cite{modelchecking}: 
\[
\varphi ::= p\ |\ \neg p\ |\ \varphi \lor \varphi\ |\ \X\varphi\ |\ \varphi\ \mathcal{U} \varphi
\]
where $p\in AP$. The temporal operators are \emph{next} $\X a$, which indicates $a$ is true in the subsequent state, and \emph{until} $a \mathcal{U} b$ indicating $a$ is true until a state where $b$ is true. We can derive other common operators from these as follows: \emph{and} $\varphi\land \psi :=\neg(\neg\varphi \lor\neg\psi)$, True $:= p\lor\neg p$, False $:=\neg$ True, \emph{implies} $\varphi \Rightarrow \psi := \neg \varphi \lor \psi$, \emph{eventually} $\event \varphi := \text{True}\ \mathcal{U} \varphi$ and always $\always \varphi := \neg\event\neg\varphi$. An LTL formula is satisfied by a sequence of truth assignments over $AP$. If $\phi$ and $\psi$ are LTL formulas, then $\phi\models \psi$ if for every sequence of states $\sigma$ where $\phi$ is satisfied, $\psi$ is also satisfied in $\sigma$ and we say $\phi$ entails $\psi$. We refer the reader to~\cite{modelchecking} for a more thorough discussion of LTL. (LTL is only needed for Sections~\ref{sec:verrification} and \ref{sec:example}.)

\section{Action Selection Mechanisms} \label{sec:cas and asms}

In this section we formalise \emph{action selection} by presenting definitions of `actions' and `selection', along with an abstraction of the `world' in which this process takes place. We aim for definitions applicable to a broad range of objects, so we will state them in generality, borrowing ideas from hybrid systems theory, particularly Tabuada~\cite{tabuada2009verification} and Willems~\cite{willems2007behavioral}.

 Let $\signalspace$ be a set we call the \emph{signal space}, and $\world$ a set we call the \emph{state space}. An element $x\in\signalspace$ we call a \emph{signal}, and an element $w\in\world$ we call a \emph{state}. We begin with a \emph{system} $\mathcal{W}$ (as in~\cite{tabuada2009verification}) we call the \emph{world}, where the inputs to this system is the set $\signalspace$ and the outputs are states in $\world$. In general we will assume $\mathcal{W}$ is non-deterministic.
\begin{defn}
A \emph{behavior} $\mathcal{B}$ is a map $\world\to\signalspace$, that is, a function taking a state in the state space and producing a corresponding signal in the signal space.
\end{defn}
Behaviors are the fundamental unit of our approach to control architectures. A behavior is an abstraction for a model of a controller, which reads some state as input from the world and produces some signal which in turn influences the evolution of that world, as in Figure \ref{fig:controldiagram}. We describe this by saying $\mathcal{B}$ \emph{acts on} $\mathcal{W}$. Within this definition, we allow a behavior to be arbitrarily complex. This definition is generic enough to allow for many models of computation (as in~\cite{lee2002embedded}) appropriate for embedded and cyber-physical systems.
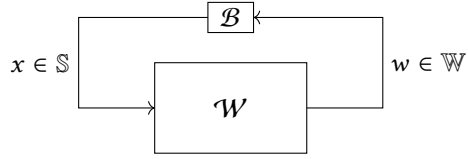
\begin{figure} 
    \centering
    \begin{tikzpicture}
%    \node at (-1.2,0.2) {Environment};
%    \node at (0.6,0) {$e\in\mathbb{E}$};
    \draw (1.5,1.2) rectangle (3.5,0);
    \node at (2.5,0.6) {$\mathcal{W}$};
%    \draw [->] (-0.1,0.2) to (1.5,0.2);
    \draw [->] (3.5,0.6) -- (4.5,0.6) -- 
    (4.5,1.8) -- (2.8,1.8);
    \draw [->] (2.2,1.8) -- (0.5,1.8) -- (0.5,0.6) -- (1.5,0.6);
    \draw (2.2,2) rectangle (2.8,1.6);
    \node at (2.5,1.8) {$\mathcal{B}$};
    \node at (5.1,1.2) {$w\in\world$};
    \node at (0,1.2) {$x\in\signalspace$};
    \end{tikzpicture}
    \caption{Model of the interaction of a behavior with the world.}
    \label{fig:controldiagram}
\end{figure}
\begin{defn}
An \emph{action} $\alpha$ is a pair $(\alpha_B, \alpha_R)$ where $\alpha_B$ is a behavior and $\alpha_R:\world\to \returnvals$ is a function we call the \emph{return value function}, where $\mathcal{R}$ is some fixed set of symbols we call \emph{return values}. %We call the image $\alpha_R(\world)\subseteq \mathcal{R}$ the \emph{return values of $\alpha$}, and call the size of this set the \emph{degree} of $\alpha$, writing $\deg(\alpha) := |\alpha_R(\world)|$. 
We use $\Act$ to denote the set of all actions in $\mathcal{W}$.
%, and $\Act_X$ to denote the set of actions in $\Act$ whose return values are subsets of $X\subseteq \mathcal{R}$.
\end{defn}

The \emph{return value function} $\alpha_R$ provides metadata regarding the behavior $\alpha_B$. We will use this metadata to construct composite actions (Definition \ref{def:asm}), reflecting the operation of control architectures. To motivate this definition, consider an Action node $A$ in a Behavior Tree. This corresponds with what we define above as an \emph{action}. The signals produced by that node in a given state form the behavior $A_B$. For BTs, fix some return values $\bm{s},\bm{f}\in\returnvals$. Then, the return value $A_R$ is $\bm{s}$ or $\bm{f}$ precisely when the node $A$ returns Success or Failure respectively. This shows that this definition is able to encode the BT concept of Action nodes. Throughout this paper, we shall write return values in boldface.

\begin{defn} \label{def:asm}
Let $\Delta\subseteq \Act$ be a finite set of actions. An \emph{action selection mechanism} (ASM) is a map $M:\world\to \Delta$, where there exists a finite set $X\subseteq \returnvals$ such that for all states $w_1,w_2\in\world$, if all actions $\alpha\in\Delta$ agree on the return values $\bm{r}\in X$, then $M(w_1) = M(w_2)$. That is, $\forall w_1,w_2\in\world$, $(\forall\alpha\in\Delta,\forall r\in X,\ (\alpha_R(w_1)=\bm{r} \Leftrightarrow \alpha_R(w_2) = \bm{r})) \implies M(w_1) = M(w_2)$. Each ASM $M$ has a derived action $(M_B,M_R)$ where $M_B(w) = M(w)_B(w)$, $M_R(w) = M(w)_R(w)$. We will write $\asms$ for the set of ASMs in $\mathcal{W}$.
\end{defn}

In essence, an ASM is a map from states to actions which is based on the value of a finite number of Boolean variables. These variables are exactly the predicates `$\alpha_R(w) = \bm{r}$' with $\alpha\in\Delta$ and $\bm{r}\in X$. While the set $\returnvals$ is not assumed to be finite, we assume any given ASM uses only a finite subset $X$, which enforces that there are only a finite number of variables overall.

An action selection mechanism is a tool for aggregating individual behaviors into a larger behavior, based on a finite set of their return values. This is best viewed in the context of Actor-oriented design, as in~\cite{lee2002embedded}, where actions are the actors, with the return values describing the information shared with the port by which they interface with other actors. A specific action selection mechanism fixes a model of computation for the system, assigning a meaning to individual components and to the links between them as mediated by their ports.

The overall concept of action selection is very general. The definition of action selection given here is intended to model how BTs, $k$-BTs, TRs and DTs work, and we will give examples of how these definitions correspond in Section~\ref{sec:instances}. One possible extension of this concept is to allow ASMs to access some \emph{memory}, such as in Finite State Machines, which also select actions but do so on the basis of both the input state and the previous action selected. We focus here on `reactive action selection' because it allows for convenient discussions of modularity via \emph{decision structures} (introduced in Section~\ref{sec:decisionstructures}). See also Section~\ref{reactiveness} for a discussion of reactiveness.

\subsection{Instances of ASMs} \label{sec:instances}

Behavior Trees (BTs), generalised Behavior Trees ($k$-BTs), Teleo-reactive programs (TRs) and Decision Trees (DTs) can all be formalised as Action Selection Mechanisms. 

We start with BTs. To begin, fix some values $\bm{s},\bm{f}\in\returnvals$, to represent `Success' and `Failure'. All other return values in $\returnvals$ will be ignored in the architecture, as so are treated as the classical `Running' return value. We do not assign `Running' itself to a value, because this prevents actions from potentially returning more values which are not handled by BTs (see Remark~\ref{rem:bt return}). Then consider any Behavior Tree. The associated ASM is defined through the structure and semantics of the BT. Specifically, the ASM is constructed inductively by the following rules, where $a$ and $b$ are actions:
\begin{align*}
    \forall w\in\world,\ (a\seq b)(w) = \begin{cases} b, & a_R(w) = \bm{s} \\ a, &\text{otherwise}\end{cases} &\qquad
    (a\fb b)(w) = \begin{cases} b, & a_R(w) = \bm{f} \\ a, &\text{otherwise}\end{cases}
\end{align*}
The following is a concrete example.
\begin{exmp}\label{ex:BTasCA}
Consider the BT $a \fb (b \seq c)$. We obtain an ASM $M$:
\begin{equation*}
\forall w\in\world,  M(w) = \begin{cases}
a, & a_R(w) \neq \bm{f} \\
    b, & b_R(w)\neq \bm{s}\land a_R(w) = \bm{f} \\
    c, & b_R(w) = \bm{s} \land a_R(w) = \bm{f}
    \end{cases}
\end{equation*}

Conceptually, each action is selected if it would be the last leaf ticked in an execution of that tree. Hence the above translates to ``select $a$ unless $a$ returns Failure, in which case select $b$ unless $b$ returns Success, in which case select $c$."
\end{exmp}
\begin{remark}[Return values in Behavior Trees] \label{rem:bt return}
Those familiar with BTs will note at this point that an action can be selected by the BT even if it returns Success or Failure---for instance, in the previous example, $a$ is selected whenever it returns Success. Traditionally in a BT, Success indicates that the action need not be selected, and actions are selected only if they return `Running'. This information is still captured here, because the return value of $M$'s derived action is the same as the return value of $a$. That is, even though $a$ is selected, the whole tree still returned Success. This allows this tree (and hence $M$'s derived action) to be nested within another tree (or ASM) and still behave as expected, an idea which is key to modularity. It is for similar reasons that we interpret `Running' as any other return value. This allows actions to be defined with potentially numerous return values, which if not handled within a given ASM are ignored~\cite{kbts}. For instance, an action returning three values $\bm{s},\bm{f}$ and $\bm{m}$ may act as if it returns only $\bm{s}$ and $\bm{f}$ within a BT, but if this BT is used as a subtree of a $k$-BT which handles the value $\bm{m}$ then this return value becomes relevant. See Section~\ref{sec:example} for examples of such actions.
\end{remark}
The $k$-BTs can be constructed in the same way, by composing the ASM from the $k$ operators. Fix some $k$ values $\{\bm{1},\dots,\bm{k}\}\subseteq \returnvals$, with the return value $i$ corresponding to the operator $*_i$. A $k$-BT is again an ASM where the action selected is that which labels the final leaf ticked in a traversal of the tree.

The argument is similar for Teleo-reactive programs.  Given a TR as a list of preconditions and actions, we fix some values $d_i\in\returnvals$ to represent the negation of the $i$th precondition. The ASM then selects the top of the list unless its return value is $d_1$, in which case it checks the second item in the list unless its return value is $d_2$, and so on. We negate the precondition to make TRs more consistent with BTs, which use return values to indicate that the action should \emph{not} be selected.

For DTs, as for BTs, we interpret predicates/conditions to be actions which always return one of exactly two return values, which we interpret as True and False (or Success and Failure in the BT case~\cite{btbook,biggar2020framework}). Fix values $\bm{\top},\bm{\bot}\in\returnvals$ to represent a predicate being True or False. The ASM is defined by the usual DT semantics. Given a state $w\in \world$, at each internal node labelled by action $\alpha$, proceed down the `True' arc if $\alpha_R(w) = \bm{\top}$ and proceed down the `False' arc if $\alpha_R(w) = \bm{\bot}$, until a leaf is reached which is then selected. Note that although we needed to assume that predicates returned precisely two values, the actions labelling leaves can return any number of values. While these values are not handled by DTs, if this ASM were nested within another one, they could be handled as suggested in Remark~\ref{rem:bt return}. This will become critical once we have translated these architectures to decision structures.

\begin{remark}[A note on return values]
In explaining how each of these architectures are interpreted as ASMs, we chose a number of `fixed' return value symbols, such as $\bm{s},\bm{f},\bm{d},\bm{1},\bm{2},\bm{\top},\bm{\bot}\dots\in\returnvals$. While we associated each of these with corresponding concepts for each architecture, such as $\bm{s}$ representing Success, it is important to recall that these have no intrinsic meaning, nor are they necessarily distinct values. As an example, the pair of constructions corresponding to the Sequence and Fallback operators for BTs made use of $\bm{s}$ and $\bm{f}$, but would have made sense for any fixed pair of values. This means that any ASM for which we can choose some return values which allows us to interpret it as a BT we will consider to be a BT, and we will refer to its return values as $\bm{s}$ and $\bm{f}$. As another example, we generally assume $\bm{s} = \bm{1}$ and $\bm{f} = \bm{2}$ when comparing BTs and $k$-BTs, or similarly $\bm{d} = \bm{f}$ to compare BTs and TRs, as in~\cite{generalise}. In the next section we will formalise this idea. Later we will compare decision structures which are labelled by return values, and there it will be useful to consider two structures essentially the same if we can translate between the return values used as labels.
\end{remark}
\subsection{Equivalence of ASMs}
So far, ASMs have depended crucially on the actions from which they select. However, modularity is a structural property of decision-making. Thus, we need an essentially `action-independent' notion of equivalence between architectures, which depends only on the structure.
\begin{defn} \label{def:equivalence}
Consider a reactive control architecture, such as a BT, $k$-BT, DT or TR, where the actions are labelled on `nodes'. For BTs and $k$-BTs, these `nodes' are the leaf nodes, for TRs these are the items in the list, and for DTs (and later also decision structures) these are all nodes in the tree. Any two architectures $X$ and $Y$ are structurally equivalent if they have the same number ($n$) of nodes and there exist orderings of both node sets such that labelling the nodes of both structures in those orders by any $n$ actions gives the same ASM for both architectures.
\end{defn}
This equivalence removes dependence on the actions, and is consistent with our intuition regarding `structure'. For instance, any two BTs (such as $a\fb(b\seq c)$ and $d\fb(e\seq f)$ as considered in Example~\ref{ex:BTasCA}) which have the same structure as unlabelled trees are structurally equivalent. This equivalence goes further than this, however. Some BTs are different but semantically identical, such as $a\seq (b\seq c)$ and $(a\seq b)\seq c$, and these are structurally equivalent, as they always give the same ASM when labelled in this order. A more subtle point to note is that the trees $a\seq (b\fb c)$ and $(a\seq b)\fb (a\seq c)$, though equal as ASMs for any fixed $a,b,c\in\Act$, are \emph{not} structurally equivalent because in general the second structure $(\circ_1\seq \circ_2)\fb (\circ_3\seq \circ_4)$ is labelled by four actions, and so the ASMs are only equal in the special case where the actions labelling $\circ_1$ and $\circ_3$ are the same. Essentially, the equality between these trees cannot be determined without specifying the input actions, while the structural equivalence $a\seq (b\seq c)\equiv (a\seq b)\seq c$ can be determined without instantiating the tree. We omit the proof, but one can show that each $k$-BT is structurally equivalent to a unique $k$-BT in `\emph{compressed form}', where all of the children of a $*_i$ node are either leaves or are not themselves $*_i$ nodes. We reach this form by merging any parent-child pairs labelled by the same operator. While we have focused on BTs in this discussion, the same principles hold for other ASMs.

\subsection{Reactiveness} \label{reactiveness}

In arguing that the aforementioned structures are ASMs, we have implicitly argued that they select actions `memorylessly', based only on the knowledge of the current state. This property is called \emph{reactiveness}~\cite{btbook}. In general, this is how these architectures are interpreted~\cite{btbook,nilsson1993teleo,biggar2020framework,martens2018resourceful,kbts}. However, this subtly relies on some assumptions about the interpretations of these architectures, as discussed in~\cite{martens2018resourceful,kbts,expressiveness}. The analytical benefit of a `purely' reactive ASM $M$ is that if the state of the world is $w\in\world$, then the action currently selected is definitely $M(w)$. This `reactiveness property' allows us to more easily verify whether the architecture reacts appropriately to changes in the world state. In our paper we assume these architectures are event-triggered, where any relevant change in the world system $\mathcal{W}$ generates an output state $w\in\world$ which is considered as the event. If we assume (as in~\cite{biggar2020framework,surveyofbts}) that the ticks are frequent compared to the world system time scale, BTs can be fairly considered to be event-triggered.
\begin{asmp}
We assume that computing the output of an ASM takes negligible time on the world time scale.
\end{asmp}
This ensures the world state does not change between when the ASM begins selecting the action and when it completes the selection, allowing us to interpret them as reactive.

\section{Comparing control architectures using decision structures} \label{sec:decisionstructures}

It is difficult to make general statements about control architectures because it is difficult to compare architectures that are presented in stylistically different manners, such as BTs and DTs. Here we introduce a new type of control architecture, the \emph{decision structure}, which subsumes many other classes of architectures and provides us with a framework to examine relationships among them. Later, decision structures allow us to formalise the property of modularity.

\begin{defn}\label{def:ASM-DG}
A \emph{decision structure} is an labelled acyclic graph $Z = (N,A,\ell:A\to \returnvals, \eta:N\to\Act)$ with a unique source, where all arcs out of a given node have distinct labels. That is, for any $(q,v_1),(q,v_2)\in A$, $v_1\neq v_2 \implies \ell(q,v_1) \neq \ell(q,v_2)$. If there is an arc labelled $\bm{r}$ out of a node $v$, we shall refer to it uniquely as the \emph{$\bm{r}$ arc out of $v$}. We will denote the set of all such graphs by $\Swt$.
\end{defn}
A decision structure $Z$ is an ASM. We interpret it in this way as follows. 
For any state $w\in\world$,
\begin{itemize}
    \item Begin at the source.
    \item Let $\alpha$ be the action labelling the current node, and suppose $\alpha_R(w)=\bm{r}\in\returnvals$. If the $\bm{r}$ arc exists, go to the head of the $\bm{r}$ arc and repeat.
    \item Otherwise, select $\alpha$.
\end{itemize}
This selects an action for any state $w\in\world$, based only on Boolean combinations of a finite number of return values (the labels on the arcs), so this is an ASM.

The execution model of decision structures as an ASM is superficially similar to that of an FSM. The selection process begins at the initial node, and proceeds along arcs based on the return values of each node in the input state until it reaches a node where none of the return values match outgoing arcs, at which point it selects the action labelling that node. This is reactive because, unlike FSMs, for every selection we begin at the source, independent of which action was selected at the previous time step.

\begin{defn}
Given a set $T$ of architectures, such as the set of BTs or TRs, we call a map $\kappa_T:T\to \Swt$ a \emph{construction map} if for any $x\in T$, $\kappa_T(x)$ is structurally equivalent to $x$ (Definition \ref{def:equivalence}). We call a set $T$ of architectures \emph{realisable} if there exists a construction map $\kappa_T:T\to \Swt$.
\end{defn}
\begin{lemma} \label{isomorphism of digraphs}
Two decision structures are structurally equivalent if and only if they are isomorphic as labelled graphs.
\end{lemma}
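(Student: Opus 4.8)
The plan is to prove both directions of the biconditional separately, with the forward direction being the more delicate one. Fix two decision structures $Z_1 = (N_1, A_1, \ell_1, \eta_1)$ and $Z_2 = (N_2, A_2, \ell_2, \eta_2)$.

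For the ``if'' direction, suppose $\Gamma: N(Z_1) \to N(Z_2)$ is a label-preserving graph isomorphism. I would show directly that, for any assignment of $n$ actions to the nodes of $Z_1$ in some order and the corresponding assignment to $Z_2$ via $\Gamma$, the two resulting ASMs coincide. This follows by tracing the execution model of Definition~\ref{def:ASM-DG}: starting at the unique source (which $\Gamma$ maps to the unique source of $Z_2$, since isomorphisms preserve the property of being the unique node that is the head of no arc), at each step the current node carries the same action in both structures, hence the same return value $\bm r$ in a given state $w$; since $\Gamma$ preserves arcs and their labels, the $\bm r$ arc exists out of the current node in $Z_1$ iff it exists out of its image in $Z_2$, and the heads correspond under $\Gamma$. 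By induction on the length of the execution path (finite, since the graphs are acyclic), the two executions terminate at corresponding nodes and therefore select the same action. Hence the ASMs are equal, so $Z_1$ and $Z_2$ are structurally equivalent.

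For the ``only if'' direction — the harder part — suppose $Z_1$ and $Z_2$ are structurally equivalent. Then they have the same number of nodes $n$, and there are orderings of $N_1$ and $N_2$ such that labelling both by any $n$ actions yields the same ASM; let $\Gamma: N_1 \to N_2$ be the induced bijection on nodes. I need to show $\Gamma$ preserves arcs and arc-labels. The key idea is to choose actions cleverly so that the ASM ``reveals'' the graph structure. Concretely, for a target node $v$ and a target label $\bm r$, I would design actions whose return value functions, on a suitable probe state $w$, force the execution to walk to $v$ and then test whether the $\bm r$ arc out of $v$ exists: assign to $v$ an action returning $\bm r$ at $w$, and to other nodes actions whose return values route the execution appropriately (or terminate it). Whether the resulting ASM selects $v$ or some successor node at $w$ then detects the presence or absence of the $\bm r$ arc out of $v$, and — when it is present — identifies its head. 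Since structural equivalence forces the same ASM output under the $\Gamma$-corresponded labelling, the $\bm r$ arc out of $v$ exists in $Z_1$ iff the $\bm r$ arc out of $\Gamma(v)$ exists in $Z_2$, with corresponding heads; running over all $v$ and all labels $\bm r$ appearing in either structure gives that $\Gamma$ is a label-preserving isomorphism.

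The main obstacle is the construction in the ``only if'' direction: I must build, for each pair $(v, \bm r)$, an assignment of actions realising the desired probing behavior, and I must check such actions actually exist in $\mathcal{W}$ (i.e., that the return value functions I want are admissible — this should be fine since $\alpha_R$ can be an arbitrary function $\world \to \returnvals$, and $\world$ may be taken large enough, or a single probe state suffices). A secondary subtlety is that the orderings witnessing structural equivalence are fixed in advance and must work for \emph{all} action labellings simultaneously, so the bijection $\Gamma$ is fixed before I choose my probing actions; I must ensure the chosen actions are compatible with $\Gamma$ in the sense that node $v$ in $Z_1$ and node $\Gamma(v)$ in $Z_2$ receive the same action. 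Care is also needed at the source: since the execution always begins there, I should first pin down that $\Gamma$ maps source to source (choose an action at the source that returns a value labelling no outgoing arc, forcing immediate selection of the source's action — structural equivalence then forces the same in $Z_2$, and a short argument identifies $\Gamma(\text{source})$ as the source of $Z_2$). Once the source is handled, the inductive probing along paths goes through.
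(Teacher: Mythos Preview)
Your proposal is correct and follows essentially the same strategy as the paper: the backward direction is handled by tracing execution through the isomorphism, and the forward direction is handled by choosing probing action-labellings that force the ASM to reveal the arc structure, beginning with the source and proceeding inductively along paths. The paper's version is slightly more concrete in one respect worth noting: it fixes a return value $\bm r$ not used as a label in either structure (invoking that $\returnvals$ may be taken arbitrarily large) and uses actions with \emph{constant} return value functions, so that labelling a node with $\alpha_{\bm r}$ makes it a trap and labelling a predecessor with $\alpha_{\bm j}$ routes along its $\bm j$ arc; this cleanly identifies the head of each outgoing arc in topological order, and avoids having to speak of a ``probe state''.
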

\begin{corollary} \label{uniqueconstructionmap}
For any set $T$ of architectures, if there exists a construction map $\kappa_T:T\to\Swt$ then it is unique.
\end{corollary}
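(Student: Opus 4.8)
The plan is to deduce the corollary directly from Lemma~\ref{isomorphism of digraphs}, using the single observation that two decision structures which are each structurally equivalent to the same architecture $x$ must be structurally equivalent to each other. So, suppose $\kappa_T$ and $\kappa_T'$ are both construction maps for $T$, and fix an arbitrary $x\in T$. By the definition of a construction map, $\kappa_T(x)$ is structurally equivalent to $x$ and $\kappa_T'(x)$ is structurally equivalent to $x$.

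The one ingredient needed is that structural equivalence (Definition~\ref{def:equivalence}) is an equivalence relation, in particular symmetric and transitive. Symmetry is immediate, since the condition in Definition~\ref{def:equivalence} is already symmetric in the two architectures. Transitivity is the only slightly fiddly point: if $X$ and $Y$ are structurally equivalent via some orderings of their node sets, and $Y$ and $Z$ via some (possibly different) orderings, then the two orderings chosen for $Y$ differ by a permutation $\pi$, and since the defining condition quantifies over \emph{all} assignments of $n$ actions to the nodes, relabelling the common action tuple through $\pi$ lets us chain the two equalities of ASMs into a single one for $X$ and $Z$. This permutation bookkeeping is routine, so I would state it briefly rather than grind through it.

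With transitivity in hand, $\kappa_T(x)$ and $\kappa_T'(x)$ are structurally equivalent to each other, so by Lemma~\ref{isomorphism of digraphs} they are isomorphic as labelled graphs. Since decision structures are considered up to isomorphism (indeed Lemma~\ref{isomorphism of digraphs} says structural equivalence on $\Swt$ coincides with graph isomorphism, so there is no finer notion to distinguish them), this gives $\kappa_T(x)=\kappa_T'(x)$, and as $x\in T$ was arbitrary, $\kappa_T=\kappa_T'$. The main obstacle, such as it is, lies entirely in this last step: one must be explicit that ``unique'' means unique up to graph isomorphism, which is precisely the identification that Lemma~\ref{isomorphism of digraphs} justifies; everything else is the transitivity argument above.
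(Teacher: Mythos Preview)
Your proposal is correct and follows essentially the same route as the paper's own proof: both argue that two construction maps applied to the same $x$ yield decision structures each structurally equivalent to $x$, hence to each other, and then invoke Lemma~\ref{isomorphism of digraphs} to conclude they are isomorphic. You are more explicit than the paper about the transitivity of structural equivalence and about ``unique'' meaning up to labelled-graph isomorphism, but these are elaborations of the same argument rather than a different one.
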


We now know that if it is possible to express a given set of architectures as decision structures, then that representation is unique. This result is useful because the standard presentations of many sets of architectures are \emph{not} unique; as was already discussed, there are distinct BTs which are structurally equivalent. We will use this result to compare architectures that are otherwise superficially different, such as BTs and TRs. To do this though, we must show they are in fact realisable.

\begin{lemma} \label{bt_construction_map}
For BTs, the unique construction map $\kappa_{BT}$ is given by the following procedure:
\begin{itemize}
    \item Let $A_1,\dots, A_n$ be the action nodes read left to right in the BT.
    \item For each action $A_i$, construct a node $v_i$ in the decision structure, labelled by $A_i$.
    \item For each node $v_i$ labelled by $A_i$, if $A_j$ is the action subsequently ticked when $A_i$ returns Success/Failure, then send an arc labelled $\bm{s}$/$\bm{f}$ from $v_i$ to $v_j$.
\end{itemize}
\end{lemma}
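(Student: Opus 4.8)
The plan is to invoke Corollary~\ref{uniqueconstructionmap}: a construction map for the BTs is unique if it exists, so it suffices to verify that the procedure in the statement (i) outputs a genuine element of $\Swt$ and (ii) produces a graph structurally equivalent to the given BT (Definition~\ref{def:equivalence}). Both parts rest on one elementary structural fact about BT execution, which I would isolate first: when a leaf $A_i$ is ticked and returns Success (resp.\ Failure), this return value propagates up the syntax tree \emph{unchanged} until it either reaches a Sequence (resp.\ Fallback) ancestor with a not-yet-ticked child --- whereupon execution passes to the leftmost leaf of that child --- or exits the root, terminating the tree. (A Fallback returns the value of its first non-Failure child and a Sequence returns Failure at its first non-Success child, so no control-flow node can alter a Success/Failure value en route.) Consequently ``the leaf $A_j$ subsequently ticked when $A_i$ returns Success/Failure'' is well defined, so the procedure is unambiguous.

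For validity: each node $v_i$ receives at most one outgoing $\bm{s}$-arc and at most one outgoing $\bm{f}$-arc, and since $\bm{s}\neq\bm{f}$ the out-arcs of any node carry distinct labels. Because the branching described above always jumps to the leftmost leaf of a subtree lying strictly to the right of the one containing $A_i$, every arc $v_i\to v_j$ satisfies $i<j$ in the left-to-right leaf order; hence the graph is acyclic, and the node of the leftmost leaf --- and only that node --- is the head of no arc, giving a unique source. So the output lies in $\Swt$.

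For structural equivalence I would induct on the syntax tree, using the natural node orderings (left-to-right leaves for the BT, $v_1,\dots,v_n$ for the decision structure). The base case, a single leaf, is immediate. For $T=T_1\seq T_2$, the propagation fact shows that $\kappa_{BT}(T)$ is the disjoint union of $\kappa_{BT}(T_1)$ and $\kappa_{BT}(T_2)$ together with one $\bm{s}$-arc from each \emph{$\bm{s}$-sink} of $\kappa_{BT}(T_1)$ (a node with no outgoing $\bm{s}$-arc) to the source of $\kappa_{BT}(T_2)$: a leaf of $T_1$ whose Success return makes $T_1$ return Success instead passes control, inside $T$, to the first leaf of $T_2$, while ``subsequently ticked leaf'' computed within $T_1$ agrees with that computed within $T$ whenever it does not leave $T_1$, and for leaves of $T_2$ it always agrees. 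Now fix any actions on the leaves and compare the BT's ASM, given by $(a\seq b)(w)=b(w)$ if $a_R(w)=\bm{s}$ and $a(w)$ otherwise (with $a,b$ the derived actions of $T_1,T_2$), against the walk defining the ASM of $\kappa_{BT}(T)$. By the induction hypothesis the walk through the $\kappa_{BT}(T_1)$-part ends at the node of the leaf that $T_1(w)$ selects; if that action's return value at $w$ is $\bm{s}$ the walk is sitting at an $\bm{s}$-sink of $\kappa_{BT}(T_1)$ (it had to be, since the walk stopped despite return value $\bm{s}$), so it now follows the new $\bm{s}$-arc into $\kappa_{BT}(T_2)$ and, again by the hypothesis, selects what $T_2(w)$ selects --- matching $b(w)$; otherwise no outgoing arc applies (we added no $\bm{f}$-arcs, and the walk inside $T_1$-nodes is exactly that of $\kappa_{BT}(T_1)$), so the walk stops and selects what $T_1(w)$ selects --- matching $a(w)$. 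The case $T=T_1\fb T_2$ is symmetric with $\bm{f}$ in place of $\bm{s}$. Thus the two ASMs agree on every state, which is structural equivalence.

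The main obstacle is the bookkeeping in the inductive step: proving precisely that $\kappa_{BT}(T_1\seq T_2)$ decomposes as claimed (which needs both the propagation fact and the locality statement that arcs out of $T_1$-nodes stay in $T_1$ except for the new $\bm{s}$-arcs, and that the restriction to $T_2$-nodes is exactly $\kappa_{BT}(T_2)$), and carefully identifying \emph{where} a walk through a decision structure terminates --- in particular that a node which was a stopping point of the walk in $\kappa_{BT}(T_1)$ because it is an $\bm{s}$-sink gains a new outgoing arc in $\kappa_{BT}(T)$, whereas a node that stopped the walk for any other reason (return value not $\bm{s}$, or $\bm{f}$ with no $\bm{f}$-arc) does not. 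Everything else is routine.
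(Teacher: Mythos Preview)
Your proof is correct, but the route differs from the paper's. The paper argues directly: the output is acyclic because ticks only pass rightward, has the leftmost leaf as unique source, and the ASMs agree because the walk in the decision structure from the source \emph{is} the sequence of leaves ticked by the BT (each step follows exactly the ``subsequently ticked'' arc), so both terminate at the same leaf. That is essentially a one-line bijection between BT execution traces and decision-structure walks.

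You instead run a structural induction on the syntax tree, explicitly decomposing $\kappa_{BT}(T_1\seq T_2)$ as $\kappa_{BT}(T_1)$ glued to $\kappa_{BT}(T_2)$ via new $\bm{s}$-arcs from the $\bm{s}$-sinks of the first into the source of the second, and then matching the inductive BT semantics against the composite walk. This is more work, but it buys something real: the decomposition you isolate is precisely the module expansion of Definition~\ref{def:moduleexpansion} (the $T_1$ and $T_2$ parts are modules of $\kappa_{BT}(T)$), so your argument already contains the forward direction of Theorem~\ref{kbt_characterisation} for BTs. The paper's direct argument is shorter and closer to ``by construction'', while yours is more explicit about the graph-level structure and anticipates the module decomposition results of Section~\ref{sec:modularity}.
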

\begin{figure}[ht]
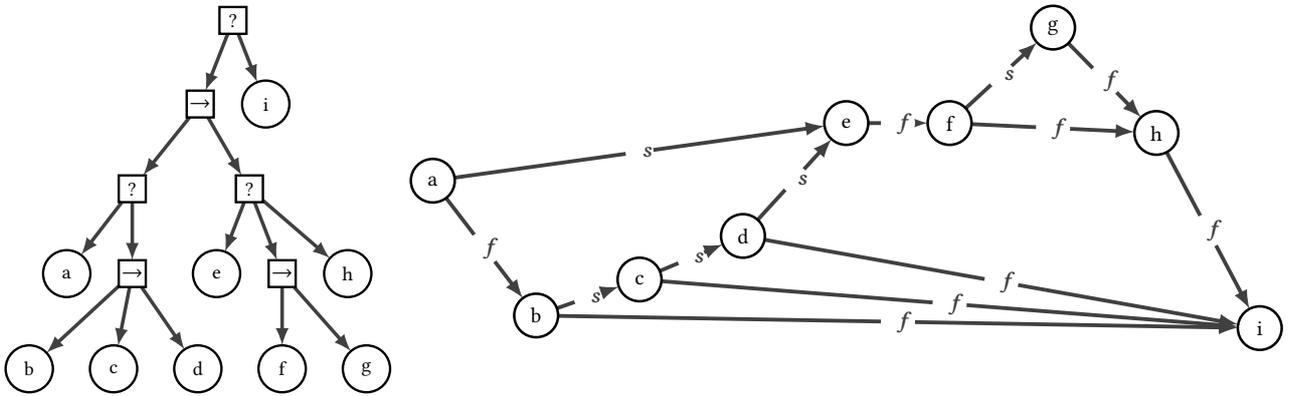

\centering
\caption{An example of a BT with its corresponding decision structure}
\label{fig:bt_ss}
\includestandalone[width = 0.3\columnwidth]{otherfigs/bt_with_ss}
\includestandalone[width=0.65\columnwidth]{otherfigs/bt_to_ss}
\end{figure}

Generalising the above construction, we derive the case for $k$-BTs.
\begin{lemma}
For $k$-BTs, the unique construction map $\kappa_{k\text{-BT}}$ is given by the following procedure:
\begin{itemize}
    \item Let $A_1,\dots, A_n$ be the action nodes read left to right.
    \item For each action, construct a node $v_i$ in the decision structure labelled by that action.
    \item For each node $v_i$ labelled by $A_i$, if $A_j$ is the action subsequently ticked when $A_i$ returns $1,\dots,k$, then send an arc labelled $1,\dots,k$ to $v_j$.
\end{itemize}
\end{lemma}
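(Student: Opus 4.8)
The plan is to reduce the claim to its existence half and then lean on the uniqueness machinery already established. By Corollary~\ref{uniqueconstructionmap}, any two construction maps for the $k$-BTs coincide, so it suffices to check that the stated procedure \emph{is} a construction map: for every $k$-BT $x$ it must output a well-formed decision structure $\kappa_{k\text{-BT}}(x)\in\Swt$ that is structurally equivalent to $x$. This is the natural generalisation of Lemma~\ref{bt_construction_map} (which is the case $k=2$, where $*_1,*_2$ are Sequence and Fallback and $\bm 1,\bm 2$ are $\bm s,\bm f$), so the argument should follow the same lines, with ``$2$'' replaced by ``$k$'' throughout.

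The first step is to record the combinatorial fact underpinning everything: in a traversal of a $k$-BT, when a leaf $A_i$ is ticked and returns a value $\bm m$, the subsequently ticked leaf is determined purely by the tree structure and $\bm m$. One walks up the ancestors of $A_i$; the value $\bm m$ propagates unchanged through each ancestor until one reaches the first ancestor $*_m$ whose on-path child is not its rightmost child, whereupon the traversal descends into the next child and stops at that subtree's leftmost leaf; if no such ancestor exists (in particular whenever $\bm m\notin\{\bm 1,\dots,\bm k\}$) the traversal ends and $A_i$ is selected. This shows $\mathrm{next}(A_i,\cdot)$ is a well-defined partial function, and moreover it is \emph{injective} in the return value: distinct handled values $\bm m_1\neq\bm m_2$ trigger distinct ancestors (carrying operators $*_{m_1},*_{m_2}$) whose next-children are disjoint subtrees and hence have distinct leftmost leaves. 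Injectivity is precisely what guarantees the output lies in $\Swt$: no two out-arcs of $v_i$ share a head, so $A\subseteq N\times N$ is respected, and out-arcs carry distinct labels by construction.

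Next I would verify the remaining defining conditions of a decision structure. Acyclicity: if $\mathrm{next}(A_i,\bm m)=A_j$, the subtree descended into lies entirely to the right of the subtree containing $A_i$, so $i<j$; hence all arcs respect the order $v_1,\dots,v_n$ and the graph is acyclic. Unique source: the leftmost leaf $A_1$ is the first leaf ticked and is never a value of $\mathrm{next}$ (every subtree with leftmost leaf $A_1$ is a first child, so the traversal never enters it from a left sibling), so $v_1$ is a source; conversely, for $j\geq 2$ let $C$ be the inclusion-maximal subtree with leftmost leaf $A_j$ — it is a proper, non-first child of some $*_p$ — and ticking the rightmost leaf of the sibling immediately left of $C$, returning $\bm p$, leads exactly to $A_j$, so $v_j$ has an incoming arc. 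Thus $v_1$ is the unique source and $\kappa_{k\text{-BT}}(x)\in\Swt$. Finally, for structural equivalence I would fix the orderings $A_1,\dots,A_n$ on the leaves of $x$ and the corresponding $v_1,\dots,v_n$, fix arbitrary actions $\alpha_1,\dots,\alpha_n$ and a state $w$, and argue by induction on the length of the $k$-BT traversal in state $w$ that the decision-structure walk visits exactly the sequence of nodes corresponding to the ticked leaves: the base case is $v_1$ = source corresponding to $A_1$ = first ticked leaf, and the inductive step is immediate from the construction, since the $\bm r$-arc out of $v_i$ exists and points to $v_j$ exactly when the traversal continues from $A_i$ to $A_j$ on return value $\bm r$ (and $v_i$ has no $\bm r$-arc exactly when $A_i$ is selected on $\bm r$). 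Hence both architectures select the same action for every $w$ and every labelling, so they are structurally equivalent; with Corollary~\ref{uniqueconstructionmap} this identifies $\kappa_{k\text{-BT}}$ as the unique construction map.

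I expect the main obstacle to be the first step: pinning down the semantics of $k$-BT traversal and, in particular, proving that $\mathrm{next}(A_i,\cdot)$ is injective in the return value and that $\mathrm{next}(A_i,\bm m)=A_j$ forces $i<j$. Without these the output need not even be a legal decision structure and the lockstep induction has nothing to stand on; once they are in hand, the rest is a routine transcription of the BT case.
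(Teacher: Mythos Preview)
Your proposal is correct and follows essentially the same approach as the paper, which simply remarks ``Generalising the above construction, we derive the case for $k$-BTs'' and defers to the proof of Lemma~\ref{bt_construction_map}. Your treatment is in fact more careful than the paper's: you explicitly verify injectivity of $\mathrm{next}(A_i,\cdot)$ to rule out parallel arcs (needed because $A\subseteq N\times N$), a point the paper's BT proof glosses over.
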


\begin{lemma}
The unique construction map $\kappa_{DT}$ for DTs is given by the identity map on the underlying tree with the arc labels out of predicates given by $\bm{\top}$ on the `Yes' arc and $\bm{\bot}$ on the `No' arc.
\end{lemma}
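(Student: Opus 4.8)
The plan is to verify the two defining conditions of a construction map — that $\kappa_{DT}$ sends each DT into $\Swt$, and that the image is structurally equivalent to the input — and then to obtain uniqueness for free from Corollary~\ref{uniqueconstructionmap}. The lemma is essentially a matter of carefully unwinding the two execution models, so the work is in the bookkeeping rather than in any substantive new idea.

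First I would check well-definedness. Fix a DT $x$; its underlying graph is a tree, hence acyclic with a unique source (the root). The proposed map keeps every node, relabels each leaf by the action it already carries and each internal node by its predicate (regarded as an action with $\alpha_R$ valued in $\{\bm{\top},\bm{\bot}\}$), and labels the two arcs leaving an internal node by $\bm{\top}$ on the `True'/`Yes' arc and $\bm{\bot}$ on the `False'/`No' arc. Since predicates return \emph{exactly} two values we have $\bm{\top}\neq\bm{\bot}$, and leaves have no outgoing arcs, so the requirement that arcs out of a common node carry distinct labels is met; thus $\kappa_{DT}(x)\in\Swt$, and it has the same node set — hence the same number of nodes — as $x$.

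Next, the crux, I would establish structural equivalence in the sense of Definition~\ref{def:equivalence} using the identity ordering on the shared node set. Given any labelling of the nodes by actions, I would show that for every input $w\in\world$ the DT execution and the decision-structure execution of Section~\ref{sec:decisionstructures} traverse exactly the same sequence of nodes, and hence select the same action, so the derived ASMs coincide. Both traversals begin at the root. At an internal node labelled by $\alpha$: the decision structure follows the $\bm{\top}$ arc when $\alpha_R(w)=\bm{\top}$ and the $\bm{\bot}$ arc when $\alpha_R(w)=\bm{\bot}$, which is exactly the `True'/`False' branching of the DT; if $\alpha_R(w)$ is some other value then neither arc exists and $\alpha$ is selected, matching the natural DT convention of selecting the action when its value is not among the recognised ones — and when the internal label is a genuine predicate, as the DT formalism assumes, this case does not arise. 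At a leaf labelled by $\alpha$: the decision structure has no outgoing arc there, so it selects $\alpha$, which is precisely the DT rule of selecting the action at the leaf that has been reached. Since the tree is finite and acyclic the traversal terminates, and a short induction on its length shows the two runs are identical for every choice of actions; hence $\kappa_{DT}$ is a construction map and DTs are realisable.

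Finally, uniqueness is immediate from Corollary~\ref{uniqueconstructionmap} (equivalently, by Lemma~\ref{isomorphism of digraphs} any construction map's output on $x$ must be isomorphic to $\kappa_{DT}(x)$). The only real obstacle I anticipate is the alignment in the third step between the DT's ``reach a leaf, then select'' rule and the decision structure's ``no matching outgoing arc, then select'' rule, together with the boundary case of an internal node whose action returns a value outside $\{\bm{\top},\bm{\bot}\}$; this is resolved either by invoking the DT typing assumption that internal nodes are predicates, or by fixing the obvious fall-through convention, consistently with the treatment of unhandled return values in Remark~\ref{rem:bt return}.
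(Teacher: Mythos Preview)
Your proposal is correct and follows the natural line of argument. The paper itself does not supply a proof of this lemma; it treats the result as essentially immediate, remarking right afterward that decision structures are by design a generalisation of decision trees, so that the identity (with the arc relabelling) is the construction map. Your write-up is therefore more detailed than anything in the paper, but it is exactly the argument one would give if asked to spell it out: check that the output lies in $\Swt$, match the two execution semantics node-by-node under the identity ordering, and invoke Corollary~\ref{uniqueconstructionmap} for uniqueness.
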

\begin{remark}[Decision trees as decision structures]
Having the identity as a construction map means that DTs are just decision structures with a particular form. The naming of decision structures is intended to highlight this, as their execution is essentially a generalisation of decision trees.
\end{remark} 

\begin{corollary}
The unique construction map $\kappa_{TR}$ for TRs is given by constructing a path graph with the nodes labelled by actions in the order given by the TR, with the single arc out of each node labelled $\bm{d}$.
\end{corollary}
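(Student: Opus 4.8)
The plan is to exhibit the described path graph explicitly as a construction map for the TRs and then invoke Corollary~\ref{uniqueconstructionmap} for uniqueness. Concretely, fix a TR with rules $k_1\to a_1,\dots,k_n\to a_n$, and let $Z$ be the graph with nodes $v_1,\dots,v_n$ labelled in this order, with an arc $(v_i,v_{i+1})$ labelled $\bm{d}$ for each $i<n$. This is a decision structure in the sense of Definition~\ref{def:ASM-DG}: it is acyclic, $v_1$ is its unique source, and every node has at most one outgoing arc, so the distinct-labels condition holds vacuously.

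First I would check that $Z$, read as an ASM via the decision-structure execution rule, computes exactly the TR's ASM when both are labelled by the same arbitrary actions $\alpha_1,\dots,\alpha_n$. Unwinding the decision-structure semantics on a path: starting at $v_1$, the walk advances from $v_i$ to $v_{i+1}$ precisely while $(\alpha_i)_R(w)=\bm{d}$, and it selects $\alpha_i$ at the first index where $(\alpha_i)_R(w)\neq\bm{d}$ (and selects $\alpha_n$ if the walk reaches $v_n$). This is verbatim the TR interpretation given in Section~\ref{sec:instances}: scan the list in order and select the action at the first rule whose precondition-negation value is not returned, defaulting to the last rule otherwise. Hence $Z$ is structurally equivalent to the TR (Definition~\ref{def:equivalence}), so $\kappa_{TR}$ as described is a construction map and the TRs are realisable; uniqueness is then immediate from Corollary~\ref{uniqueconstructionmap}.

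As a consistency check, the same graph can be obtained by composing the known fact that TRs are structurally equivalent to the compressed-form $1$-BTs --- a single $*_1$ node over leaves $a_1,\dots,a_n$ in list order --- with the $k$-BT construction lemma specialised to $k=1$: in $*_1(a_1,\dots,a_n)$, ticking passes from $a_i$ to $a_{i+1}$ exactly when $a_i$ returns $1$, and terminates as soon as some child returns a non-$1$ value (or when $a_n$ returns $1$), which yields precisely the $\bm{1}$-labelled path $v_1\to\cdots\to v_n$.

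I do not expect a real obstacle here, since this is a direct specialisation of the $k$-BT construction together with the uniqueness corollary. The only genuine point of care is bookkeeping of return-value symbols: the ASM interpretation of a TR in Section~\ref{sec:instances} nominally uses possibly-distinct symbols $d_1,\dots,d_n$ for the negated preconditions, whereas the path graph places a single label $\bm{d}$ on every arc (equivalently the value $\bm{1}$ under the $1$-BT identification). As emphasised in the note on return values, these symbols carry no intrinsic meaning and need not be distinct, so one is free to take $d_1=\dots=d_n=\bm{d}$, and with this choice the two ASMs agree exactly, making the structural equivalence --- and hence the corollary --- go through.
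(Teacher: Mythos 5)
Your proposal is correct. The paper does not actually supply a proof for this corollary --- it is left as an immediate consequence of the $k$-BT construction lemma, via the stated fact that TRs are exactly the $1$-BTs, together with Corollary~\ref{uniqueconstructionmap} for uniqueness --- and your argument matches that intended route: your ``consistency check'' via the compressed-form $1$-BT is precisely the paper's implicit derivation, while your direct unwinding of the path semantics against the TR scanning rule is a harmless (and slightly more self-contained) addition. Your handling of the $d_1,\dots,d_n$ versus single $\bm{d}$ bookkeeping is the right way to reconcile the ASM interpretation of TRs in Section~\ref{sec:instances} with the single-label path in the corollary, and is consistent with the paper's remark that return-value symbols carry no intrinsic meaning and need not be distinct.
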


Now we have unique ways of expressing these realisable architectures in the common execution model given by decision structures. We can compare the structures by the sets $BT$, $DT$, $k$-$BT$, $TR\subseteq\Swt$. Any ASM that belongs to multiple sets has an structurally equivalent interpretation in all classes of ASMs of which it is a member. We will now freely refer to `the' decision structure of an ASM, by which we mean its image under a construction map. 

However, we have still not justified why decision structures particularly are the `right' abstraction. Why translate BTs, DTs and TRs to decision structures, and not some other fairly general architecture? Essentially, we would like to be able to explore the concept of \emph{modularity}, and it turns out that this property is meaningfully preserved by the translation to a decision structure. This is the focus of the next section.

\section{Modularity in decision structures} \label{sec:modularity}

In this section we formalise modularity of control architectures. Modularity of a system informally suggests that the system is composed of simple parts, interacting via fixed interfaces. Here we build upon the definition of a module in a graph to construct modules in decision structures, which are in essence subgraphs obeying a fixed interface with the rest of the graph.  This definition allows decision structures to be broken down into simpler pieces, a statement which we quantify using a complexity measure we derive from that of~\cite{mccabe1976complexity}. This concept captures intuitive notions of `modular subparts' in $k$-BTs and DTs, as in these cases modules correspond precisely to subtrees. In addition, we characterise the $k$-BTs, BTs, TRs and DTs by the properties of their modules.

\subsection{Modules}
Let us begin with the following definition of a module in a directed graph, which will inspire the definition of a module in a decision structure.
\begin{defn}[Modules in graphs~\cite{gallai1967transitiv}]
Let $G=(N,A)$ be a graph. A subset $X\subseteq N$ is a \emph{module} if for every $v\not\in X$, $v$ either has arcs to all vertices of $X$ or none, and $v$ either receives arcs from all vertices of $X$ or none.
\end{defn}

A module in a graph can therefore be thought of as a subgraph with a uniform `interface' with the rest of the graph. This interface is as general as possible, with all arcs in or out of this subgraph going to all or none of the nodes of the module. Graph-theoretic modules have been found to be useful for improving the efficiency of a number of graph algorithms~\cite{mcconnell1999modular}. For undirected graphs, there is a canonical decomposition of the graph into nested modules, called the modular decomposition, which is useful for recognising a number of graph classes~\cite{mcconnell1999modular}. This decomposition is constructed by repeatedly taking the \emph{graph quotient} over a partition induced by modules, where the modules become \emph{factors}. These notions are defined next.
\begin{defn}
Let $G$ be a graph, and $P=\{S_1,\dots,S_n\}$ a partition of its node set. The \emph{quotient graph} $G/P$ is the graph whose node set is $P$ and where there is an arc $(S_i, S_j)\in A(G/P)$ if and only if there exists nodes $n_i\in S_i, n_j\in S_j$ with $(n_i,n_j)\in A(G)$. The subgraphs $G[S_1], G[S_2],\dots$ are called the \emph{factors} of $G$.
\end{defn}

Now we extend this idea to decision structures, by finding an appropriate `interface' and thus a definition of a module. Many of the following definitions will be named after their corresponding concept for modules in graph theory, and a number of our results will be similar to corresponding results in the graph theory literature.

\begin{defn}[Modules in decision structures] \label{def:modules}
Let $Z$ be a decision structure. Let $X\subseteq N(Z)$ be a subset where $Z[X]$ is also a decision structure. We say $X$ is a \emph{module} if for every node $v\in N(Z)\setminus X$, any arc from $v$ into $X$ goes to $X$'s source, and if there is an arc labelled $\bm{r}$ out of $X$ to $v$, then for every $x\in X$ the $\bm{r}$ out of $x$ exists and goes either to $v$ or to another element of $X$.
\end{defn}

We often think of modules via the subgraphs they induce in the decision structure, so we can refer to their `sources' and `internal arcs' without confusion. This `interface' can be summarised informally in the following way. Arcs into a module go to its source, and if any arc exits the module with a specific return value $\bm{r}$, it must go to a specific node. In other words, whenever we leave a module, the subsequent node is determined by only the return value on which we left the module. Just as in the semantics of decision structure we cannot select the action labelling a node if the return value matches an out-arc, we cannot select the entire module if, treated as an individual ASM, it would return a value $\bm{r}$ which matches any arc out of that module. Modules are therefore node subsets which are `independent' of the rest of the decision structure. Modules depend on arc labels, but are completely independent of node labels, so any two structurally equivalent decision structures have the same modules.
\begin{remark}[Motivation for modules]
For some motivation behind Definition~\ref{def:modules}, consider a function call in a structured programming
language. We can think of modules in the same way, as a function call from the `outer' decision structure to the module. Al arcs into the module go to the source, just as functions have a single entry point. Moreover, all arcs labelled $\bm{r}$ out of the module go to the same node, so the outer decision structure is independent of which node in the node produced that value. Similarly, a function call always returns to the same point in the calling program, with subsequent execution determined only by the value it returns. In fact, this relationship can be made precise. If we interpret an action in a decision structure as a function call to another decision structure, then the result is equivalent to a \emph{module expansion} (Definition~\ref{def:moduleexpansion}) of the outer structure by the inner. The correctness of this interpretation is Theorem~\ref{contraction}. As further motivation, consider a subclassing (inheritance) relationship in object-oriented programming. If a method is not handled specifically by a subclass, the superclass implementation is used. In the same way, if a module does not handle internally any particular return value, the structure containing it should handle the value, agnostic to which part of the module produced it.
\end{remark}
As shown later, Definition~\ref{def:modules} allows us to construct a decomposition of the decision structure by graph quotient, with the modules as factors, where the original decision structure can be reconstructed from its decomposition. Further, we will show this decomposition is unique.

Modules will provide us with a method to distinguish between more and less modular structures. However, the mere existence of modules does not provide any useful information, because all decision structures have at least some \emph{trivial modules}.
\begin{lemma} \label{trivialmodules}
For any decision structure $Z$, the sets $N(Z)$ and $\{v\}$ for any $v\in N(Z)$ are modules. 
\end{lemma}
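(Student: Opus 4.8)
The plan is to check the two defining clauses of Definition~\ref{def:modules} directly for each of the two claimed modules, after first dispatching the prerequisite that the relevant induced subgraph is itself a decision structure. For $X = N(Z)$ this is immediate: $Z[N(Z)] = Z$ is a decision structure by hypothesis, and since $N(Z)\setminus X = \emptyset$ there are no external nodes $v$ to test, so both the ``arcs into $X$ go to its source'' clause and the ``arcs out of $X$ are determined by their label'' clause hold vacuously. Hence $N(Z)$ is a module.

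For $X=\{v\}$ with $v\in N(Z)$ arbitrary, I would first confirm $Z[\{v\}]$ is a decision structure. Its node set is $\{v\}$ and its arc set is $\{(v,v)\in A(Z)\}$, which is empty because $Z$ is acyclic; a single node with no arcs is acyclic, has $v$ as its unique source, and vacuously satisfies the distinct-out-label condition, while inheriting the label maps $\ell,\eta$ by restriction. Now for the module clauses: any arc from a node $u\neq v$ into $\{v\}$ is an arc $(u,v)$ whose head $v$ is precisely the source of $Z[\{v\}]$, so the first clause holds; and if there is an arc labelled $\bm{r}$ out of $\{v\}$ to some $u\notin\{v\}$, this is exactly the (unique, since out-arcs of a node in a decision structure carry distinct labels) $\bm{r}$ arc out of $v$, so for the sole element $x=v$ of $X$ the $\bm{r}$ arc out of $x$ exists and goes to $u$, as required by the second clause. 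Hence $\{v\}$ is a module.

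I do not expect a real obstacle: the lemma is a routine consistency check that the definition behaves as intended, establishing the ``trivial'' modules analogous to those in graph-theoretic modular decomposition. The only point worth flagging in the writeup is that acyclicity of $Z$ is invoked exactly once, to rule out a self-loop at $v$ so that $Z[\{v\}]$ genuinely qualifies as a (loop-free, source-unique) decision structure rather than a degenerate object.
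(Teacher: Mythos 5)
Your proof is correct and follows essentially the same route as the paper's: a direct, case-by-case verification of Definition~\ref{def:modules}, with the $N(Z)$ case vacuous and the singleton case reducing to the fact that all in-arcs hit the source $v$ and out-arcs are determined by their labels. Your version is somewhat more careful than the paper's (explicitly confirming that $Z[\{v\}]$ is a decision structure and noting that acyclicity excludes a self-loop), but the substance is the same.
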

\begin{defn}
We call the above modules \emph{trivial modules}, and a structure with only trivial modules is called \emph{prime}.
\end{defn}
 Clearly a prime decision structure is in a sense the least modular structure possible---there is no way of breaking it down into simpler pieces. We will later develop a complexity measure which determines the complexity of a structure by the complexity of its most complex prime module. Using such a measure, a prime structure is as complex as it is---it cannot be decomposed into a simpler structure by breaking it into non-trivial modules. However, before we can define this there is a more immediate question, which is whether we can compute the modules of a structure. It turns out this can be done efficiently. 
 
 \begin{thm} \label{moduletimecomplexity}
Let $Z$ be a decision structure, with $|N(Z)|=n$, with $k$ distinct arc labels. The set of all modules in $Z$ can be found in time $O(n^2k)$.
\end{thm}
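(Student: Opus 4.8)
The plan is to reduce ``$X$ is a module'' to a local, one-pass test, develop a closure calculus for modules that bounds both the number of modules and the search needed to find them, and then organise the computation as a partition refinement so that the total cost is $O(n^2k)$ rather than a higher power of $n$.

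First I would record that Definition~\ref{def:modules} is \emph{local}: whether a given $X\subseteq N(Z)$ is a module can be decided in one pass over $A(Z)$. Marking the nodes of $X$ and scanning each arc once, classify it as internal to $X$, entering $X$, or leaving $X$; while scanning, record for each label $\bm{r}$ the head of the $\bm{r}$-labelled arc leaving $X$ (reject if two differ), verify that $Z[X]$ has a unique source and that every arc entering $X$ ends at it, and verify that whenever an $\bm{r}$-arc leaves $X$ every node of $X$ emits an $\bm{r}$-arc. Since out-arcs at a common node carry distinct labels, $|A(Z)|\le nk$, so this test is $O(nk)$. The obstacle is that there can be $\Theta(n^2)$ modules --- for a directed path, every contiguous sub-path is a module --- so one cannot test all subsets, and indeed the output must be a compact description (a decomposition tree with node types) rather than an explicit list; moreover computing the minimal enclosing module of each of the $\Theta(n^2)$ candidate seeds naively costs $\Theta(n)$ scans each, which is already $\Theta(n^4k)$ and too slow.

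The engine is a closure calculus mirroring the one for graph modules. I would prove the key lemma: if $X,Y$ are modules with $X\cap Y\neq\emptyset$ then $X\cup Y$, $X\cap Y$ and $X\setminus Y$ are modules, by checking each clause of Definition~\ref{def:modules} for each set (the arc-label clauses are the delicate point, having no undirected analogue: an $\bm{r}$-arc leaving $X\cup Y$ leaves both $X$ and $Y$, so the two permitted exit heads coincide, and the sources of $Z[X]$ and $Z[Y]$ coincide because the modules overlap). This yields a unique smallest enclosing module $\mathrm{cl}(S)$ for every nonempty $S\subseteq N(Z)$, and that the \emph{strong} modules (those overlapping no other module) form a laminar family, hence a tree on $O(n)$ nodes which, together with the ``type'' of each tree node (prime, or a linear/degenerate node whose intervals-of-children are exactly the remaining modules), encodes \emph{all} modules.

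Finally I would realise this by a partition refinement that amortises the closure computations across seeds. Using each $s\in N(Z)$ in turn as a pivot, a single $O(nk)$ scan refines the current partition of $N(Z)\setminus\{s\}$ according to how the remaining nodes ``see'' $s$ through the arcs and their labels; after all $n$ pivots the refined partition together with the recorded $\mathrm{cl}(\cdot)$ data exposes the strong-module tree and the node types. Summing $n$ scans of cost $O(nk)$ gives $O(n^2k)$. The main obstacle, and the bulk of the proof, is twofold: (i) establishing the closure lemma and the ensuing structure theorem in the arc-labelled directed setting --- the two extra conditions of Definition~\ref{def:modules} (one exit head per label; every internal node emitting each exit label) must be shown to propagate correctly through unions, intersections, differences and quotients; and (ii) checking that the refinement step really is implementable in one $O(nk)$ pass per pivot, with enough shared bookkeeping that the total does not degrade beyond $O(n^2k)$.
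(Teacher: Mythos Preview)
Your approach is genuinely different from the paper's, but it has real gaps.

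First, the closure lemma you state is false as written. Take the $\bm r$-labelled path $a\to b\to c\to d$. Both $X=\{a,b,c,d\}$ and $Y=\{b,c\}$ are modules with nonempty intersection, yet $X\setminus Y=\{a,d\}$ induces a graph with two sources and so is not even a decision structure, let alone a module. At minimum you need proper overlap (neither set contained in the other), and even then your justification is wrong: you assert that ``the sources of $Z[X]$ and $Z[Y]$ coincide because the modules overlap'', but on the path $a\to b\to c$ the overlapping modules $\{a,b\}$ and $\{b,c\}$ have distinct sources $a$ and $b$. The paper's own analysis of overlapping modules (in the proof of Lemma~\ref{uniquedecomposition}) shows that exactly one of the two sources lies in the intersection, not both, and that argument uses maximality in an essential way to conclude the differences are modules. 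You would need to redo this carefully without maximality, and the two extra clauses in Definition~\ref{def:modules} (every internal node must emit each exit label; all arcs in hit the unique source) do not obviously propagate through set differences.

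Second, your partition refinement is too vague to carry the bound. Classical pivot refinement for graph modules works because the module relation is symmetric: two nodes are ``equivalent modulo $s$'' iff they have identical adjacency to $s$. Here the definition is asymmetric and global --- a set is a module only if it has a single internal source and if, for every label that exits, \emph{every} internal node emits that label. It is not clear what ``how the remaining nodes see $s$'' means in a way that captures these conditions in a single $O(nk)$ scan per pivot.

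By contrast, the paper's proof is a direct sweep that sidesteps all of this. It adds one auxiliary sink $R_c$ per label $c$ and an arc to $R_c$ from every node lacking a $c$-arc, so that every original node has out-degree exactly $k$. Then it observes that a set $M$ with source $v$ is a module iff (i) every node of $M$ is reachable only through $v$, and (ii) $M$ has exactly $k$ successors. For each candidate source $v$ it grows $M$ in topological order, adding a node $w$ exactly when all of $w$'s predecessors already lie in $M$; whenever the current $M$ has $k$ successors it records a module. This is $O(nk)$ per source, hence $O(n^2k)$ total, with no closure calculus and no refinement machinery. The DAG structure and the auxiliary-sink trick are doing the work that your proposal tries to push onto a structure theorem you have not established.
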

\begin{proof}
An algorithm to find all modules in a decision structure is given as Algorithm~\ref{alg:findmodules} in the Appendix. The proof of its correctness and time complexity is also in the Appendix.
\end{proof}
In order to explain how a structure is decomposed into modules, we will need the following definitions.
\begin{defn} 
Let $Z$ be a decision structure. We call a module \emph{maximal} if it is a proper subset of $N(Z)$ and is contained in no other module except $N(Z)$. A \emph{modular partition} $P$ is a partition of $N(Z)$ formed by modules, that is, every element of the partition is a module. A \emph{maximal partition} is a modular partition where all modules are maximal.
\end{defn} 
 \begin{lemma} \label{prime if maximal}
 Let $Z$ be a decision structure and $P$ a modular partition. Then the quotient $Z/P$ is also a decision structure. Moreover, if $P$ is maximal then $Z/P$ is prime.
 \end{lemma}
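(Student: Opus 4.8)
The plan is to prove the two assertions separately. Throughout I read the quotient $Z/P$ as a simple graph, discarding the self-loops that internal arcs of blocks would otherwise produce (these are excluded anyway, since a decision structure must be acyclic). For Step~1, showing $Z/P$ is a decision structure, I first record two consequences of Definition~\ref{def:modules} for any module $S$ of $Z$: (i) every arc entering $S$ from outside lands on the source of $Z[S]$; and (ii) for each return value $\bm{r}$, all arcs labelled $\bm{r}$ that leave $S$ have one common head --- otherwise, applying the module condition with one such head to a node of $S$ whose $\bm{r}$-arc leaves $S$ through a different external node yields a contradiction. From (i) and (ii) an $\bm{r}$-arc out of a block $S_i$ can reach at most one block, so I label the quotient arc $(S_i,S_j)$ by the label of the $Z$-arcs realising it. To see this is well defined I must rule out two $Z$-arcs from $S_i$ to $S_j$ carrying distinct labels $\bm{r}_1\neq\bm{r}_2$; here acyclicity enters. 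Take a sink $z$ of the decision structure $Z[S_i]$ (one exists, it being a finite acyclic graph). Since $S_i$ emits external arcs with both labels, the module condition forces $z$ to have both an $\bm{r}_1$-arc and an $\bm{r}_2$-arc; as $z$ has no out-arc inside $S_i$, both must leave $S_i$, necessarily to the common target, which by (i) is the source of $S_j$. Then the single arc from $z$ to that node would carry two labels, impossible since $\ell$ is a function. Node labels are immaterial to Definition~\ref{def:ASM-DG}, so $\eta$ may be chosen arbitrarily on the blocks, and with $\ell$ fixed, distinctness of out-labels at $S_i$ is exactly statement (ii).

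For acyclicity and the unique source of $Z/P$: I would lift a hypothetical cycle $S_{i_1}\to\dots\to S_{i_m}\to S_{i_1}$ of $Z/P$, each quotient arc lifting to a $Z$-arc whose head, by (i), is the source of the next block; splicing in a path from that source to the tail of the next lifted arc inside each block --- possible since in a decision structure every node is reachable from the source (walk backwards along in-arcs in the finite acyclic graph until the unique in-degree-$0$ node is reached) --- gives a closed walk of positive length in $Z$, contradicting acyclicity of $Z$. For the source, if $s$ is the source of $Z$ and $S$ its block then $s$ is the source of $Z[S]$ (it has in-degree $0$); any arc into $S$ from another block would, by (i), land on $s$, which is impossible, so $S$ has in-degree $0$ in $Z/P$; conversely any in-degree-$0$ block of $Z/P$ contains an in-degree-$0$ node of $Z$, hence contains $s$, hence equals $S$. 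Thus $Z/P\in\Swt$.

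For Step~2, that $P$ maximal forces $Z/P$ prime, the crux is the pull-back lemma: if $\mathcal{M}\subseteq P$ is a module of $Z/P$, then $U:=\bigcup_{S\in\mathcal{M}}S$ is a module of $Z$. The interface conditions transfer level by level. An arc into $U$ from a block $S'\notin\mathcal{M}$ induces an arc into $\mathcal{M}$ in $Z/P$, which lands on $\mathcal{M}$'s source-block $S^{*}$; by (i) the $Z$-arc then lands on the source of $Z[S^{*}]$, and the source analysis of Step~2, run inside $\mathcal{M}$, shows this node is the source of $Z[U]$ (and that $Z[U]$ is itself a decision structure). Dually, an $\bm{r}$-arc out of $U$ to some $v$ induces an $\bm{r}$-arc out of $\mathcal{M}$ in $Z/P$, so every block $S\in\mathcal{M}$ emits an $\bm{r}$-arc in $Z/P$ --- to the block of $v$, whose source is $v$ by (i), or to another block of $\mathcal{M}$ --- and the module condition for $S$ in $Z$ then forces every node of $S$ to have its $\bm{r}$-arc going to $v$ or into $U$. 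Hence $U$ is a module of $Z$. Now if $Z/P$ had a non-trivial module $\mathcal{M}$, i.e.\ $|\mathcal{M}|\geq 2$ and $\mathcal{M}\subsetneq P$, then $U=\bigcup\mathcal{M}$ would be a module of $Z$ with $S_0\subsetneq U\subsetneq N(Z)$ for any $S_0\in\mathcal{M}$ (strict on the left since $\mathcal{M}$ contains a further nonempty block, strict on the right since some block lies entirely outside $U$), contradicting the maximality of the module $S_0\in P$. So $Z/P$ is prime.

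The step I expect to be the main obstacle is Step~2's pull-back lemma, and within Step~1 the well-definedness of the quotient's arc labels; both reduce to pushing the single axiom of Definition~\ref{def:modules} through the quotient, leaning on acyclicity to supply the sink and the reachability-from-the-source facts.
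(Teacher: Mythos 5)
Your proof is correct and follows essentially the same route as the paper's: well-definedness of quotient arc labels via a sink of the block (the paper uses the last node in a topological order, which is the same thing), acyclicity and the unique source by lifting to $Z$, and primeness by pulling a non-trivial module of $Z/P$ back to a module of $Z$ that violates maximality. The only difference is that you spell out the pull-back lemma in detail where the paper merely asserts that a union of blocks forming a module of $Z/P$ is a module of $Z$; your verification of that step is sound.
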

 To begin, we formalise how a module can be treated as an independent subpart. We show that we can contract modules to individual nodes within a structure, and reverse this transformation by expanding nodes into modules. 
 \begin{defn}[Module contraction]
 Let $Z$ be a decision structure, and $Q$ a module of $Z$. The \emph{module contraction} of $Q$ in $Z$ is the graph $Z/\{Q,\{v_1\},\dots,\{v_m\}\}$ where $\{v_1,\dots,v_m\}=N(Z)\setminus Q$. For simplicity we shall write this as $Z/Q$, as the modular partition $\{Q,\{v_1\},\dots,\{v_m\}\}$ is defined implicitly by $Q$.
 \end{defn}
 The module contraction is the graph formed by treating $Q$ as a single node in $Z$. Because all arcs into and out of a module go to specific nodes depending on their label, this contraction is reversible. In other words, without any additional information, we can recover $Z$ from $Z/Q$ given $Q$ and the node that needs to be `expanded' into $Q$. The following definition formalises this.
 \begin{defn}[Module expansion] \label{def:moduleexpansion}
 Let $Z$ and $Q$ be decision structures, with $v\in N(Z)$. The \emph{module expansion} of $Q$ at $v$, written $Z\cdot^v Q$ is the graph formed as follows. Replace $v$ with a copy of $Q$, where arcs into $v$ in $Z$ go to $Q$'s source in $Z\cdot^v Q$. For every arc \begin{tikzcd} v \ar[r,"\bm{r}"] & z\end{tikzcd} in $Z$, add arcs \begin{tikzcd} q \ar[r,"\bm{r}"] & z\end{tikzcd} from every node $q\in Q$ which does not already have an $\bm{r}$ arc within $Q$.
 \end{defn}
 Essentially, we are adding precisely the requirements for $N(Q)$ to be a module within $Z$, and $(Z\cdot^v Q)[N(Q)] \cong Q$, with no modifications to nodes other than $v$. It is straightforward to see that $N(Q)$ is a module of $Z\cdot^v Q$ and further $(Z\cdot^v Q)/N(Q)\cong Z$. Figure~\ref{fig:trans_exmp} shows an example of a module expansion.
 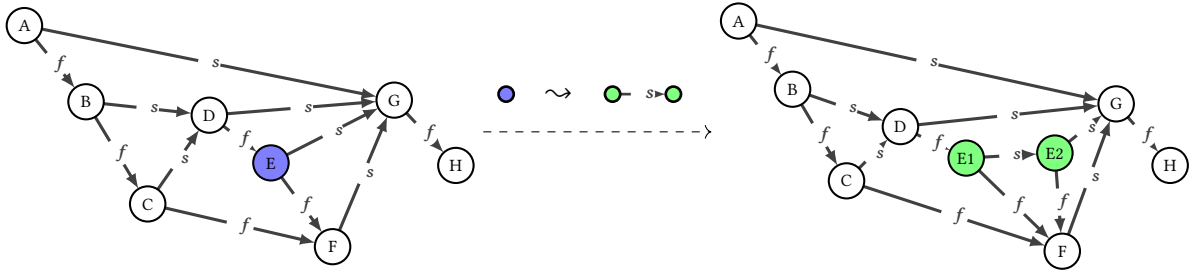
\begin{figure}
    \centering
    \includestandalone[width=0.35\columnwidth]{otherfigs/ss_trans1}
    \begin{tikzpicture}
    \SetEdgeStyle[LineWidth=1pt]
    \clip (0,0) rectangle (3,5);
    \draw[->,dashed] (0,2) -- (3,2);
    \Vertex[x=0.3,y=2.5,size=0.2,color=blue,opacity=0.5]{Q};
    \Vertex[x=1.7,y=2.5,size=0.2,color=green,opacity=0.5]{V1};
    \Vertex[x=2.5,y=2.5,size=0.2,color=green,opacity=0.5]{V2};
    \Edge[,Direct,label=$\bm{s}$](V1)(V2);
    \node at (1,2.5) {$\leadsto$};
    %\node at (1.4,3) {$t_{\bm{s}}$};
    
    \end{tikzpicture}
    \includestandalone[width=0.35\columnwidth]{otherfigs/ss_trans2}
    \caption{An example of a module expansion, where the decision structure \arcS{} is expanded at the node E. Both decision structures correspond to BTs, and this expansion can be thought of as replacing an action by a subtree of two actions rooted by a Sequence operator.}
    \label{fig:trans_exmp}
\end{figure}

\subsection{The module decomposition}
 In this section we study the decomposition of decision structures into modules, which is the process of breaking down decision structures into their constituent modules. The process involves repeatedly constructing modular partitions then taking the graph quotient with these modules as factors, then repeating the process on the factors until all factors are prime. In graph theory, undirected graphs have been shown to possess a unique decomposition into modules. We show here that decision structures do likewise.
 
\begin{lemma} \label{uniquedecomposition}
For any decision structure $Z$, exactly one of the following is true.
\begin{enumerate}
    \item there exists a unique maximal partition, or
    \item there exists a unique modular partition $P$ such that the quotient $Z/P$ is isomorphic to a path of length at least two with all arcs having the same label, and that path has maximal length among all modular partitions $R$ for which $Z/R$ is a path.
\end{enumerate}
\end{lemma}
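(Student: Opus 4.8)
The plan is to adapt the classical theory of modular decomposition of (di)graphs to the interface condition of Definition~\ref{def:modules}, with the whole argument resting on a closure lemma for modules. First I would prove that the modules of a decision structure are closed under the usual operations on overlapping sets: if $M$ and $M'$ are modules that \emph{overlap} — meaning $M\cap M'\neq\emptyset$ and neither contains the other — then $M\cap M'$, $M\cup M'$, $M\setminus M'$ and $M'\setminus M$ each induce a decision structure and satisfy the arc-interface of Definition~\ref{def:modules}. The proof is a finite case check: given an external node $v$ and an out-arc of $v$, one asks whether its head lies in $M$, in $M'$, in both, or in neither, and similarly for arcs leaving the candidate set, and the module conditions for $M$ and for $M'$ then pin down where that arc must go. I would also record two elementary facts — every proper module is contained in a maximal module (finiteness), and two maximal modules are never nested (maximality) — and use the already-invoked fact that a modular partition has a well-defined quotient decision structure (this itself follows from the module conditions, since all arcs from one block to another carry a common label).

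The heart of the argument is a two-way case split according to whether some two maximal modules overlap. If no two maximal modules overlap then, being also non-nested, they are pairwise disjoint; since every node lies in some maximal module they cover $N(Z)$, so the collection of all maximal modules is a modular partition all of whose parts are maximal — a maximal partition. Any maximal partition is a cover of $N(Z)$ by pairwise-disjoint maximal modules, hence equals this collection, so it is unique: this is case~(1). If instead maximal modules $M_1,M_2$ overlap, then $M_1\cup M_2$ is a module properly containing $M_1$, so by maximality $M_1\cup M_2=N(Z)$; setting $D=M_1\cap M_2$, $A_1=M_1\setminus M_2$, $A_2=M_2\setminus M_1$ gives three nonempty modules partitioning $N(Z)$, and a short computation with the module conditions for $M_1$, $M_2$ and for $A_1,D,A_2$ shows that the arcs of $Z$ between these three blocks form a path (say $A_1\to D\to A_2$ after relabelling) with both arcs carrying the same label — a ``cherry'' or a chorded path would violate one of the module conditions. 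Thus $Z$ already has a uniform-label path quotient of length $2$. Taking $P$ to be a modular partition of maximal size among those whose quotient is a uniform-label path, $P$ witnesses case~(2), its path having length $\geq 2$ by the previous sentence; uniqueness of $P$ follows from the closure lemma, since the common refinement of two modular partitions with uniform-label path quotients is again such a partition, so a longest one is unique.

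Finally I would close the dichotomy. The two cases of the split are exhaustive and disjoint, and cases~(1) and~(2) are themselves mutually exclusive: if $Z/R$ is a path $R_1\to\cdots\to R_m$ of length $\geq 2$, then $R_1\cup\cdots\cup R_{m-1}$ and $R_2\cup\cdots\cup R_m$ lift to overlapping proper modules of $Z$ (checked directly from the module conditions and the chordlessness of a path), which extend to overlapping maximal modules, so no maximal partition can exist; conversely, in the disjoint-maximal-modules case no such $R$ exists. Hence exactly one of (1) and (2) holds. (I would state the lemma for $|N(Z)|\geq 2$, the one-node structure being a degenerate boundary case.)

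The step I expect to be the real obstacle is the structural analysis inside the overlapping case: showing that overlapping maximal modules force the \emph{global} quotient to be a uniform-label path rather than any branching or chorded configuration, and pinning down the unique longest such partition. This is exactly where the features peculiar to decision structures — the unique source and the requirement that the arcs out of a node carry distinct labels — do the work, ruling out the ``complete/parallel'' degenerate type that occurs for undirected graphs and forcing the single shared label along the path. The subsidiary claim that the common refinement of two path-quotient partitions is again one, and the compatibility of taking quotients with the closure operations, are the fiddliest points, each needing a careful but routine verification of the interface condition of Definition~\ref{def:modules}.
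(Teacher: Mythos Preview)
Your plan follows the same architecture as the paper's proof: split on whether two maximal modules overlap, obtain the unique maximal partition in the disjoint case, and in the overlapping case show $M_1\cup M_2=N(Z)$ and that $\{M_1\setminus M_2,\,M_1\cap M_2,\,M_2\setminus M_1\}$ is a modular partition whose quotient is a same-label path. The paper does not, however, prove your general closure lemma; it establishes the needed facts only for a \emph{maximal} overlapping pair, and its arguments lean explicitly on maximality and on tracking the two sources $s_1,s_2$ and the last node of $M_1\cap M_2$ in a topological order to pin down the single label $\bm{r}$. Your ``finite case check'' undersells this: the awkward point for decision-structure modules is the \emph{unique-source} requirement on $Z[X]$, which is not a local arc condition and does not fall out of a bare in/out case analysis --- this is exactly what the paper's source-tracking argument handles. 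Your uniqueness step also differs: you argue via common refinement of two longest path-partitions, whereas the paper takes two putative maximum-length partitions, compares them at the first differing index, shows one block contains the other (same source), and splits the larger block to produce a strictly longer path --- a direct contradiction that avoids needing closure under intersection for arbitrary blocks. Both routes work, but the paper's is more self-contained; if you pursue the closure-lemma packaging, be prepared to do real work on the unique-source condition rather than a case check.
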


Maximal partitions generally form a canonical choice for the decomposition, as their quotients are prime. This lemma shows that in the one case where this choice is not unique, there is instead a unique partition which constructs a maximum length path as its quotient.

\begin{defn}
Let $Z$ be a decision structure. If case (1) of above holds for $Z$, let $P$ be the maximal partition. If instead case (2) holds, let $P$ be the unique modular partition which gives the longest path. Construct the quotient graph $Z/P$. Repeat this recursively on each factor, until every factor is a trivial module. We call this the \emph{module decomposition} of $Z$.
\end{defn}

This provides a nested set of quotient graphs, as shown in Figures~\ref{fig:motivation} and~\ref{fig:bt_and_decision}. We obtain the following result.
\begin{lemma} \label{primequotients}
Every quotient graph in the module decomposition is either prime or is a path of length at least two with the same label on all arcs.
\end{lemma}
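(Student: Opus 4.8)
The plan is to argue directly from the definition of the module decomposition, since Lemmas~\ref{uniquedecomposition} and~\ref{prime if maximal} between them already carry essentially all the content. First I would observe that every quotient graph appearing in the module decomposition has the form $Z'/P$ for some decision structure $Z'$ (namely $Z$ itself, or a factor produced at some earlier level of the recursion) together with the partition $P$ selected by the two-case rule in the definition. Appealing to the first assertion of Lemma~\ref{prime if maximal}, each such $Z'/P$ is again a decision structure; this is what makes the recursion well-defined, so that ``the quotient graphs in the module decomposition'' is an unambiguous notion and it suffices to analyse a single generic step $Z' \mapsto Z'/P$.

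Now fix such a $Z'$ and apply Lemma~\ref{uniquedecomposition}: exactly one of its two cases holds. If case~(1) holds, then by construction $P$ is the unique maximal partition of $Z'$; in particular $P$ is maximal, so the ``moreover'' part of Lemma~\ref{prime if maximal} gives that $Z'/P$ is prime. If instead case~(2) holds, then by construction $P$ is the unique modular partition whose quotient is isomorphic to a path of length at least two with all arcs carrying the same label, so $Z'/P$ is itself of exactly that form. Since these two cases are exhaustive, every quotient graph in the module decomposition is either prime or a path of length at least two with the same label on all arcs, which is the claim.

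I do not anticipate a genuine obstacle: the substance has been front-loaded into Lemmas~\ref{uniquedecomposition} and~\ref{prime if maximal}, and what remains is bookkeeping. The two points that merit a line of care are (i) termination of the recursion, for which one notes that whenever $Z'$ has at least two nodes every block of $P$ is a \emph{proper} subset of $N(Z')$ --- a maximal module is by definition proper, and a path of length at least two has at least two blocks --- so the size of the largest factor strictly decreases and the terminal factors are single nodes; and (ii) whether those single-node terminal factors should be counted as (degenerate) quotient graphs, in which case one simply remarks that a one-node decision structure has only trivial modules and is therefore vacuously prime. Neither point affects the argument above.
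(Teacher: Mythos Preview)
Your proposal is correct and follows exactly the paper's approach: the paper's own proof is the single line ``This follows immediately from Lemmas~\ref{prime if maximal} and~\ref{uniquedecomposition},'' and you have simply unpacked that appeal (and added the harmless bookkeeping about termination and one-node factors). There is no substantive difference.
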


The uniqueness of the module decomposition follows immediately from Lemma~\ref{uniquedecomposition}. The choice of partition in case (2) of Lemma~\ref{uniquedecomposition} is a canonical one, as it corresponds directly in the $k$-BT case to the unique $k$-BT in \emph{compressed form}, as discussed earlier. That is, where each internal node has at least two children and all children have different root operators than their parent. For BTs these are trees where the Success and Failure nodes alternate at every tier in the tree. The next result justifies constructing the decomposition recursively.

\begin{lemma} \label{moduleinmodule}
Let $Z$ be a decision structure, $Y$ a module in $Z$, and $X\subseteq Y$. Then $X$ is a module in $Z$ if and only if $X$ is a module in $Z[Y]$.
\end{lemma}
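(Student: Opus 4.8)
The plan is to unwind Definition~\ref{def:modules} on both sides and verify that its two interface conditions---every in-arc from outside hits the source, and every out-arc carrying a fixed label $\bm{r}$ behaves uniformly---transfer between $Z$ and $Z[Y]$. First I would record two preliminary observations. Since $Y$ is a module in $Z$, $Z[Y]$ is itself a decision structure, so ``module in $Z[Y]$'' is meaningful and $Y$'s source $s_Y$ (the unique source of $Z[Y]$) is well defined. And since $X \subseteq Y$, the induced subgraph satisfies $(Z[Y])[X] = Z[X]$, so the precondition ``$Z[X]$ is a decision structure'' appearing in Definition~\ref{def:modules} is literally the same statement whether we regard $X$ as sitting in $Z$ or in $Z[Y]$; hence it remains only to compare the two interface conditions.

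For the forward direction, the external nodes of $X$ inside $Z[Y]$ are exactly $Y \setminus X \subseteq N(Z)\setminus X$, and every arc of $Z[Y]$ is an arc of $Z$. Moreover, whenever the module property of $X$-in-$Z$ constrains an arc out of $X$, the head of that arc is either the witnessing external node $v$ (which here lies in $Y$) or another node of $X \subseteq Y$; either way the relevant arcs stay inside $Y$ and so survive in $Z[Y]$. Thus both conditions restrict verbatim from $Z$ to $Z[Y]$.

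The converse is where the actual work lies. The external nodes of $X$ in $Z$ split as $(Y\setminus X)\cup(N(Z)\setminus Y)$. For $v\in Y\setminus X$, the module property of $X$ in $Z[Y]$ applies directly, since the arcs between $v$ and $X$ lie in $Z[Y]$. For $v\in N(Z)\setminus Y$ I would invoke that $Y$ is a module in $Z$. Condition (1): an arc from $v\notin Y$ into $X\subseteq Y$ must, by $Y$'s module property, land on $s_Y$; its head lies in $X$, so $s_Y\in X$, and since $s_Y$ has no in-arcs in $Z[Y]$ it has none in $Z[X]$, making $s_Y$ a source of the decision structure $Z[X]$, hence $s_Y = s_X$, so the arc goes to $X$'s source. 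Condition (2): given an $\bm{r}$-arc from some $x_0\in X$ to $v\notin Y$, the module property of $Y$ already forces every $x\in X$ to have an $\bm{r}$-arc whose head is $v$ or lies in $Y$; I must exclude the head lying in $Y\setminus X$. If some $x\in X$ had $x\to y'$ labelled $\bm{r}$ with $y'\in Y\setminus X$, then inside $Z[Y]$ this is an $\bm{r}$-arc out of $X$, so the module property of $X$ in $Z[Y]$ would force the $\bm{r}$-arc out of $x_0$ to have head $y'$ or in $X$ --- contradicting that $x_0$'s $\bm{r}$-arc goes to $v\notin Y$. Hence that head lies in $X$ or equals $v$, as required.

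The main obstacle is precisely this last step of the converse together with the accompanying ``$s_Y = s_X$'' identification: one must notice that it is not enough to use the properties of $Y$ in $Z$ alone, and that the module property of $X$ relative to the \emph{intermediate} structure $Z[Y]$ has to be applied to a deliberately chosen arc (the escaping $\bm{r}$-arc $x\to y'$) to derive the contradiction. Everything else is routine bookkeeping about which arcs and which sources are preserved under passing to induced subgraphs.
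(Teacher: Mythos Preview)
Your proof is correct and follows essentially the same route as the paper's: the forward direction is immediate restriction, and for the converse you split the external nodes into $Y\setminus X$ and $N(Z)\setminus Y$, handling the latter via the module property of $Y$ together with the contradiction that the escaping $\bm{r}$-arc from $x_0$ leaves no $\bm{r}$-arc for $x_0$ inside $Z[Y]$. Your write-up is in fact slightly more careful than the paper's in two spots---you make the identity $(Z[Y])[X]=Z[X]$ explicit, and you spell out why an incoming arc forces $s_Y=s_X$ rather than merely asserting that a non-source of $X$ cannot be the source of $Y$---but the underlying argument is the same.
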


This allows us to conclude that any module found anywhere in the module decomposition is a module of the entire graph. In fact, except for the case of paths of length greater than two, for which \emph{every connected subset is a module}, the converse also holds; every subgraph induced by any module is a quotient graph in the module decomposition. As an example, consider the decompositions in Figure \ref{fig:bt_and_decision}. The non-trivial modules are $\{a,b\},\{h,i\},\{d,e\},\{e,f\},\{d,e,f\},\{d,e,f,g\},\{d,e,f,g,h,i\}$ and $\{c,d,e,f,g,h,i\}$. Their subgraphs are all quotients in the module decomposition, except for $\{d,e\}$ and $\{e,f\}$, which can be derived from the length-two path $\{d,e,f\}$.

Observe that any modular partition can be formed by a sequence of module contractions. In fact, the entire module decomposition can be formed by a sequence of module contractions, from the smaller to the larger modules. Similarly, we can construct the structure by a sequence of module expansions from the one-node graph, where the graphs expanded are precisely the quotients of the decomposition from largest to smallest.

\subsection{Characterising modular architectures} \label{sec:characterisations}

In Section \ref{sec:decisionstructures} we showed how BTs, DTs, TRs and $k$-BTs could be translated into decision structures. However, it was not necessarily clear whether a given decision structure was structurally equivalent to an architecture in one of these classes. In this section we show how the definition of a module allows for an elegant characterisation of the decision structures which are structurally equivalent to these architectures.

\begin{thm} \label{kbt_characterisation}
Let $Z$ be a decision structure with $k$ distinct arc labels. Then $Z$ is structurally equivalent to a $k$-BT if and only if every quotient graph in $Z$'s module decomposition is a path.
\end{thm}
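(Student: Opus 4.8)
The plan is to prove both implications by matching the module decomposition of a decision structure against the syntax tree of a \emph{compressed} $k$-BT (one in which no node repeats its parent's operator). Since every $k$-BT is structurally equivalent to a unique compressed $k$-BT, and by Corollary~\ref{uniqueconstructionmap} together with Lemma~\ref{isomorphism of digraphs} structural equivalence of decision structures is just graph isomorphism, it is enough to understand $Z\cong\kappa_{k\text{-BT}}(B)$ for compressed $B$. The key claim I would isolate is: \emph{for compressed $B$, the module decomposition of $\kappa_{k\text{-BT}}(B)$ mirrors the syntax tree of $B$} --- if $B$'s root is a leaf then $\kappa_{k\text{-BT}}(B)$ is a single node with empty decomposition, and if $B=*_i(B_1,\dots,B_m)$ then the top quotient of $\kappa_{k\text{-BT}}(B)$ is the path $v_1\xrightarrow{\bm{i}}v_2\xrightarrow{\bm{i}}\cdots\xrightarrow{\bm{i}}v_m$ whose $j$-th factor is $\kappa_{k\text{-BT}}(B_j)$, and one then recurses into each factor. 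The theorem is this claim read in the two directions.

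For the forward direction I would induct on the number of leaves of $B$. The inductive step reduces to verifying that $\kappa_{k\text{-BT}}(*_i(B_1,\dots,B_m))$ is exactly the module expansion (Definition~\ref{def:moduleexpansion}) of the single-label path $v_1\to\cdots\to v_m$ at each $v_j$ by $\kappa_{k\text{-BT}}(B_j)$: unwinding the definition of $\kappa_{k\text{-BT}}$, the only arc out of $v_j$ is the $\bm{i}$-arc to $v_{j+1}$, and the nodes of $\kappa_{k\text{-BT}}(B_j)$ with no internal $\bm{i}$-arc are precisely the leaves of $B_j$ that, on returning $i$, end $B_j$'s execution and hence cause $*_i$ to tick $B_{j+1}$. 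Thus $P=\{N(\kappa_{k\text{-BT}}(B_1)),\dots,N(\kappa_{k\text{-BT}}(B_m))\}$ is a modular partition with path quotient, and I would argue via Lemma~\ref{uniquedecomposition} that $P$ is the partition the module decomposition selects at this level: it is the longest single-label-path partition, because by compressedness the top quotient of each $\kappa_{k\text{-BT}}(B_j)$ carries the label of $B_j$'s root operator, which is not $i$, so no $\kappa_{k\text{-BT}}(B_j)$ can be refined so as to lengthen an $\bm{i}$-labelled path. Each factor equals $\kappa_{k\text{-BT}}(B_j)$ with $B_j$ compressed, so by the inductive hypothesis its module decomposition has only path quotients; combined with the top quotient being a path, so does $Z$'s (Lemma~\ref{moduleinmodule} guarantees the sub-decompositions sit correctly inside $Z$'s).

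For the converse, assume every quotient in $Z$'s module decomposition is a path. By Lemma~\ref{primequotients} each such quotient is a path on at least two nodes all of whose arcs share one label (the one-arc case is single-labelled trivially), so each quotient step determines an operator index $i$; I would build a $k$-BT $B$ by sending that step to $*_i$ applied to the $k$-BTs recursively obtained from its factors, with single-node factors becoming leaves labelled by their action. The resulting $B$ is compressed, because the maximum-length-path clause of Lemma~\ref{uniquedecomposition} prevents any factor from itself having an $\bm{i}$-labelled path as its top quotient (it would have been absorbed into the parent path). Finally I would verify $\kappa_{k\text{-BT}}(B)\cong Z$: both $\kappa_{k\text{-BT}}(B)$ and $Z$ are reconstructed from the same hierarchy of path quotients by the same sequence of module expansions --- for $\kappa_{k\text{-BT}}(B)$ this is exactly the computation used in the forward direction, and for $Z$ it is the observation that a decision structure is obtained from its module decomposition by expanding its quotients from largest to smallest. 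Hence $Z$ is structurally equivalent to the $k$-BT $B$.

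The step I expect to be the main obstacle is making the ``module decomposition $\leftrightarrow$ compressed-$k$-BT syntax tree'' correspondence fully rigorous: that the canonical choices built into the module decomposition (the maximal partition of Lemma~\ref{uniquedecomposition}(1), or the \emph{maximum}-length single-label path of Lemma~\ref{uniquedecomposition}(2)) coincide precisely with the compressed form. In the forward direction one must exclude any modular partition of $\kappa_{k\text{-BT}}(*_i(B_1,\dots,B_m))$ --- possibly one cutting across the sets $N(\kappa_{k\text{-BT}}(B_j))$ --- that yields a strictly longer $\bm{i}$-path than $P$, and this is exactly where compressedness ($B_j$'s root operator $\neq *_i$) is needed; in the converse one must check dually that consecutive factors along a path quotient genuinely arise from distinct nested operators rather than an over-split single one. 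The remaining pieces --- the direct unwinding of $\kappa_{k\text{-BT}}$ into a module expansion, and the handling of one- and two-node quotients --- are routine.
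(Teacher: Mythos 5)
Your proposal is correct and follows essentially the same route as the paper's proof: reduce to compressed form, induct on the tree, show the root's subtrees form a modular partition whose quotient is a single-label path, use compressedness plus Lemma~\ref{uniquedecomposition} to see this is the canonical (maximum-length) partition, and reverse the construction for the converse by reading operators off the quotient labels and checking the two module decompositions coincide. The only cosmetic difference is that you phrase the inductive step via module expansions of a path rather than directly via ticking order, which amounts to the same computation.
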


In fact, this theorem shows that the modules in the module decomposition correspond precisely to subtrees of the $k$-BT's unique compressed form representation. This is shown in Figure~\ref{fig:bt_and_decision}. This result further motivates our definition of a module, as it represents precisely the `well-defined subparts' in the $k$-BT case. Due to the relationship between $k$-BTs, BTs and TRs, we obtain the following corollaries immediately.
\begin{corollary}
A decision structure $Z$ is structurally equivalent to a BT or TR respectively if and only if every module in its decomposition is a path and it is labelled by no more than 2 or 1 distinct labels, respectively.
\end{corollary}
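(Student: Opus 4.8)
The plan is to read this off Theorem~\ref{kbt_characterisation} together with two facts recorded in Section~\ref{sec:btsandltl}: the $2$-BTs are exactly the classical BTs, and the $1$-BTs are exactly the TRs. The only thing that needs care is the bookkeeping between the quantity ``$k=$ number of distinct arc labels $Z$ actually uses'', which is what appears in Theorem~\ref{kbt_characterisation}, and the fixed arities $2$ and $1$. First note the easy half of the label condition: if $Z$ is structurally equivalent to a BT (resp.\ a TR), then by Corollary~\ref{uniqueconstructionmap} its decision structure is $\kappa_{BT}$ (resp.\ $\kappa_{TR}$) of that BT (resp.\ TR), which by Lemma~\ref{bt_construction_map} (resp.\ the construction map for TRs) uses arc labels drawn only from $\{\bm{s},\bm{f}\}$ (resp.\ only $\{\bm{d}\}$); hence $Z$ has at most $2$ (resp.\ at most $1$) distinct arc labels.

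The one genuinely new ingredient is the analysis of the degenerate regime where $Z$ uses at most one distinct arc label. I claim that in that case $Z$ is a path (or a single node). Indeed, since $Z$ has a unique source it is connected, and every node is reachable from the source by a directed path (trace in-arcs backwards: any node other than the source has in-degree at least one, and the backward chain terminates, by acyclicity and finiteness, only at the source); and since all out-arcs of a node have distinct labels but only one label is available, every node has out-degree at most one, so the forward structure from the source is a single directed path that therefore exhausts $N(Z)$. Such a path is precisely $\kappa_{TR}$ of the TR whose list of actions is the sequence of node labels, so $Z$ is structurally equivalent to a TR, and therefore (as a TR is a $1$-BT and a $1$-BT is a $2$-BT) also to a BT; moreover the module decomposition of a path consists only of paths, so the ``every module/quotient is a path'' condition holds automatically.

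With these pieces the corollary assembles directly. For the BT statement, ($\Leftarrow$) assume every quotient in the module decomposition is a path and $Z$ has at most $2$ arc labels; if it has exactly $2$, Theorem~\ref{kbt_characterisation} (with $k=2$) gives that $Z$ is structurally equivalent to a $2$-BT, i.e.\ a BT, and if it has at most $1$ then $Z$ is a path and hence a BT by the previous paragraph. ($\Rightarrow$) assume $Z$ is structurally equivalent to a BT; the label bound was shown above, and if $Z$ uses exactly $2$ labels then a BT is a $2$-BT so Theorem~\ref{kbt_characterisation} gives that every quotient is a path, while if $Z$ uses at most $1$ label it is a path and its decomposition is trivially all paths. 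The TR statement is identical with ``$2$'' replaced by ``$1$'': by the previous paragraph $Z$ is structurally equivalent to a TR exactly when it uses at most one arc label (and then it is a path, so ``every module is a path'' is automatic), and the forward label bound is as above. Throughout I read ``every module in the decomposition is a path'' as ``every quotient graph in the module decomposition is a path'' (the condition in Theorem~\ref{kbt_characterisation}); these coincide because the quotient at a module $M$ is obtained from $Z[M]$ by contracting submodules, and contracting sub-paths of a path again yields a path.

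I expect the only real obstacle to be exactly this edge-case bookkeeping --- making sure the ``fewer arc labels than the nominal arity'' cases (a decision structure equivalent to a BT or TR but using one or zero labels) are handled, which the path observation in the second paragraph disposes of cleanly. Everything else is a direct appeal to Theorem~\ref{kbt_characterisation}, Corollary~\ref{uniqueconstructionmap}, Lemma~\ref{bt_construction_map}, and the $2$-BT $=$ BT, $1$-BT $=$ TR identifications from Section~\ref{sec:btsandltl}.
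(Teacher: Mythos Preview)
Your proposal is correct and follows the same route the paper takes: the paper derives the corollary ``immediately'' from Theorem~\ref{kbt_characterisation} together with the identifications $2$-BT $=$ BT and $1$-BT $=$ TR recorded in Section~\ref{sec:btsandltl}, without spelling out any further argument. Your write-up is simply a more careful version of that one-line deduction, making explicit the edge case where $Z$ uses strictly fewer labels than the nominal arity (in which case you observe directly that $Z$ must be a path), a point the paper leaves implicit.
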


\begin{figure}
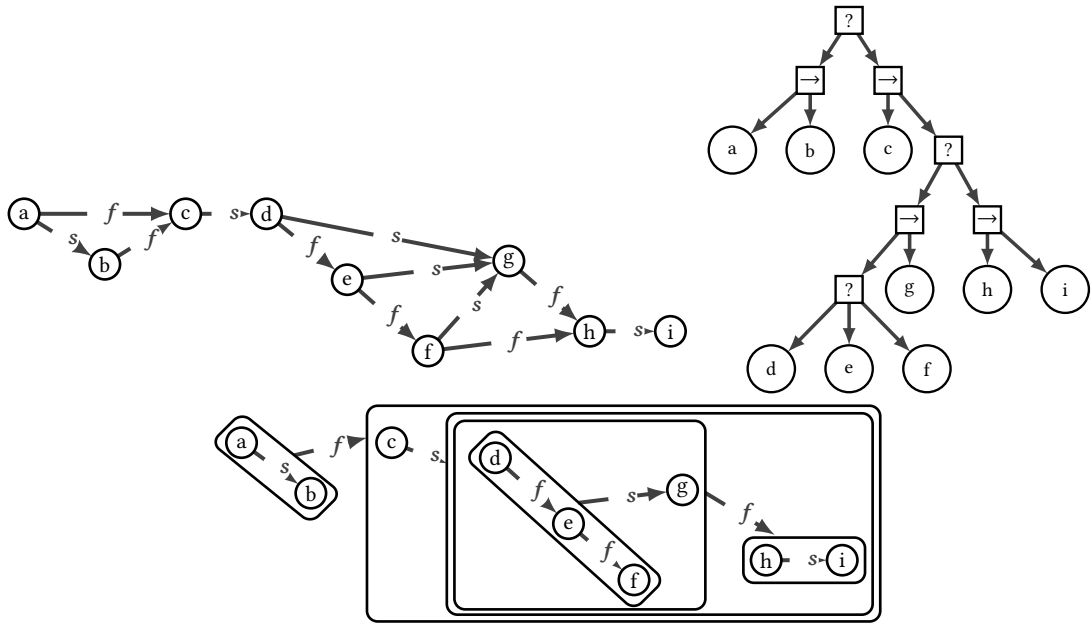

    \centering
    \includestandalone{otherfigs/btswitch}
    \includestandalone[width=0.3\columnwidth]{otherfigs/bt4}
    \includestandalone{otherfigs/btswitchmodule}
    \caption{A decision structure, its module decomposition and a structurally equivalent BT. Observe the connection between modules in the decision structure and subtrees in the BT.}
    \label{fig:bt_and_decision}
\end{figure}

A similar result is provable for DTs.
\begin{thm} \label{dt_characterisation}
Let $Z$ be a decision structure with two distinct arc labels. Then $Z$ is structurally equivalent to a DT if and only if every quotient graph is isomorphic to the structure with a single source and two sinks, and the factor corresponding to a source of any quotient graph consists of a trivial module containing one node.
\end{thm}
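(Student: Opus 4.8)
The plan is to first rephrase the statement as a combinatorial fact about binary trees, and then prove each direction by induction on $|N(Z)|$; the engine of both directions is an exact description of the modules of a DT.

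\emph{Reduction.} By Corollary~\ref{uniqueconstructionmap} the construction map $\kappa_{DT}$ is unique, and it is the identity on the underlying tree with the `Yes'/`No' arcs relabelled $\bm{\top},\bm{\bot}$; combined with Lemma~\ref{isomorphism of digraphs}, this means a decision structure $Z$ with two arc labels is structurally equivalent to a DT precisely when $Z$ is isomorphic, as a labelled graph, to a tree in which every non-leaf node has out-degree $2$ (with only two labels present, the labelling is then forced). So it suffices to show that such a tree has a module decomposition all of whose quotients are the $V$-shaped structure --- one source, two sinks --- with single-node source factor, and conversely that any two-label $Z$ with this quotient property is such a tree.

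\emph{The modules of a DT are exactly its subtrees --- this is the step I expect to be the main obstacle.} One inclusion is immediate: any subtree $\mathrm{subtree}(v)$ is a module, since every arc out of one of its nodes goes to a child and hence stays inside (so the subtree has no out-arcs at all), and its only incoming arc from outside enters at $v$, which is its source. For the converse, take a module $M$ with $2\le|M|<|N(Z)|$. Because $Z[M]$ is a decision structure it has a unique source, which forces $Z[M]$ connected, so $M$ is a connected subgraph of the tree with a topmost node $s$. If $M$ were not downward-closed, some $x\in M$ would have a child $w\notin M$, say along its $\bm{\top}$-arc; the module interface would then force every node of $M$ to carry a $\bm{\top}$-arc, hence to be an internal node (out-degree $2$). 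Dually, a $Z[M]$-sink $t$ has a child outside $M$, say along $\bm{\bot}$, forcing every node of $M$ to carry a $\bm{\bot}$-arc as well. Since $w$, and likewise $t$'s $\bm{\bot}$-child, has a unique parent, at most one node of $M$ can send its $\bm{\top}$-arc out of $M$ and at most one its $\bm{\bot}$-arc; counting incidences of the form ``node of $M$ whose child lies in $M$'' then yields at least $2|M|-2$, whereas the tree $Z[M]$ has only $|M|-1$ arcs, forcing $|M|=1$ --- a contradiction. So $M$ is downward-closed and connected, i.e.\ $M=\mathrm{subtree}(s)$. (Singletons and $N(Z)$ are modules anyway, by Lemma~\ref{trivialmodules}.)

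\emph{($\Rightarrow$).} Induct on $|N(Z)|$; the one-node case is vacuous. For $|N(Z)|\ge2$ the root $r$ has children $v_1,v_2$, and by the lemma the maximal proper modules are exactly $\{r\}$, $\mathrm{subtree}(v_1)$ and $\mathrm{subtree}(v_2)$; these form the unique maximal partition $P$ (case (2) of Lemma~\ref{uniquedecomposition} is impossible, since $r$ has out-degree $2$ in every quotient), so the module decomposition opens with $Z/P$, which is the $V$-structure whose source factor $Z[\{r\}]$ is a single node. Recursing on $Z[\mathrm{subtree}(v_1)]$ and $Z[\mathrm{subtree}(v_2)]$, which are smaller DTs, the inductive hypothesis finishes it. For the ($\Leftarrow$) direction, induct on $|N(Z)|$ again; the small cases are trivial or vacuous. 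Otherwise the first quotient $Z/P$ is, by hypothesis, the $V$-structure with parts $S_0$ (source, a single node) and $S_1,S_2$ (sinks); each $Z[S_i]$ is strictly smaller and its module decomposition is a sub-decomposition of $Z$'s (Lemma~\ref{moduleinmodule}), hence inherits the quotient property, so by induction $Z[S_i]$ is isomorphic to a DT $T_i$. Reconstructing $Z$ from its decomposition by module expansions (Definition~\ref{def:moduleexpansion}, via $(Z\cdot^v Q)/N(Q)\cong Z$) gives $Z\cong(Z/P)\cdot^{S_1}T_1\cdot^{S_2}T_2$, the expansion at $S_0$ being trivial; since $S_1,S_2$ are leaves of the tree $Z/P$, each expansion merely grafts the tree $T_i$ onto a leaf and keeps the arc labels, so $Z$ is a tree with two arc labels in which every non-leaf has out-degree $2$ --- a DT. What remains is only bookkeeping on the small cases and the routine check that expansion at a leaf preserves treeness.
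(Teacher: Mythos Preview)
Your proof is correct and follows essentially the same route as the paper's: both directions hinge on the fact that the modules of a tree-structured decision structure are exactly its subtrees, then proceed by recursing on the maximal partition $\{\{r\},\mathrm{subtree}(v_1),\mathrm{subtree}(v_2)\}$ for ($\Rightarrow$) and by reconstruction via module expansion for ($\Leftarrow$). The paper asserts the ``modules $=$ subtrees'' fact without proof, so your counting argument (forcing $2|M|-2\le|M|-1$) is a genuine addition that fills a gap the paper left open, but the overall architecture is the same.
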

In fact, it is straightforward to show that the modules in a decision tree (or any tree-structured decision structure) are precisely the subtrees, regardless of edge labels.

These results allow us to identify which decision structures correspond to these particular architectures, and so allows one to prove interesting properties of the architectures. In addition, they connect the modular subparts of $k$-BTs and DTs with modules in their decision structures and so allow the results of Section~\ref{sec:verrification} to apply immediately to them. We will discuss some of the consequences of these below.

When given a decision structure, the aforementioned methods identify whether it has a structurally equivalent interpretation as a BT, DT, or the other modular control architectures, in a computationally tractable manner.
\begin{thm} \label{testingtimecomplexity}
Testing whether an individual structure $Z$ is a BT, TR, DT or $k$-BT for some $k$ can be done in time $O(n^2)$.
\end{thm}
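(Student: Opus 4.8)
The plan is to reduce each of the four membership questions to a property of the module decomposition of $Z$ that can be checked by a single traversal of that decomposition, and then account for the running time.

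I would start from the characterisations already established. By Theorem~\ref{kbt_characterisation}, $Z$ is structurally equivalent to a $k$-BT for some $k$ exactly when every quotient graph in its module decomposition is a path; by the corollary following it, $Z$ is a BT (resp.\ TR) exactly when, in addition, $Z$ uses at most $2$ (resp.\ $1$) distinct arc labels; and by Theorem~\ref{dt_characterisation}, $Z$ is a DT exactly when $Z$ has two distinct arc labels, every quotient graph is the three-node ``one source, two sinks'' graph, and the source-factor of every quotient is a one-node module. Each condition is an $O(1)$-per-quotient-arc test. The decomposition tree has only $O(n)$ quotient graphs, and since every arc of $Z$ maps to a quotient arc at exactly one level (the level separating its two endpoints), the quotient arcs over all levels number at most $|A(Z)|=O(n^2)$; so once the decomposition is available the verification phase costs $O(n^2)$, and counting distinct arc labels is an $O(n^2)$ scan.

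For the BT, TR and DT tests this is now immediate: scan $Z$ and reject if it uses more than $2$, more than $1$, or other than $2$ distinct arc labels respectively; otherwise the label count $k$ is a constant, so Theorem~\ref{moduletimecomplexity} produces all modules, and hence --- choosing maximal or longest-path partitions level by level by Lemma~\ref{uniquedecomposition} and recursing into factors by Lemma~\ref{moduleinmodule} --- the full module decomposition, in $O(n^2k)=O(n^2)$ time; then run the local checks above.

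The genuinely delicate case is ``$k$-BT for some $k$'', since there $k$ is just the number of distinct arc labels $Z$ happens to use, and a $k$-BT on $n$ leaves can have $k=\Theta(n)$ distinct labels (and $\Theta(n^2)$ arcs), so invoking Theorem~\ref{moduletimecomplexity} as a black box only gives $O(n^3)$. My approach here would be to recognise $k$-BTs directly rather than building the whole decomposition: by Theorem~\ref{kbt_characterisation} it suffices to repeatedly contract the (necessarily path-shaped) top-level modular partition of the current structure --- peeling off one factor at a time, which is legitimate because a prefix (or suffix) of a path quotient is again a module --- recursing into the peeled factor and into the remainder, and rejecting the moment the current quotient is prime but not a path. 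Each contraction level fixes the path's label $\ell_0$ and consumes exactly the $\ell_0$-labelled arcs between its consecutive factors, so each arc of $Z$ is handled at exactly one level; the arc-handling work then telescopes to $O(|A(Z)|)=O(n^2)$, and the at most $O(n)$ recursion levels add only $O(n^2)$ bookkeeping. I expect the crux to be precisely this last point: proving that the label and first factor of the top-level path partition can be found, and their modularity verified, while touching only the arcs about to be deleted (plus $O(n)$ overhead per level), so that the algorithm does not re-scan the whole $\Theta(n^2)$-arc graph at each of the possibly $\Theta(n)$ levels.
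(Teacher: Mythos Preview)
Your handling of the BT and TR cases matches the paper's: first bound the number of distinct labels by the appropriate constant, then invoke the $O(n^2k)$ module-finding algorithm with $k=O(1)$ and check that every quotient is a path. For DTs you route through the module decomposition and Theorem~\ref{dt_characterisation}, which is correct but heavier than necessary; the paper instead observes that a DT-equivalent decision structure is literally a binary tree with two arc labels, so a single depth-first traversal checking that every node has one predecessor and either zero or two successors suffices in linear time.

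The substantive divergence is in the $k$-BT case. You read ``$k$-BT for some $k$'' existentially and correctly note that $k$ may be $\Theta(n)$, which would make the black-box $O(n^2k)$ bound cubic. The paper, however, treats $k$ as a fixed constant throughout its proof: it explicitly writes ``reject if a structure has more than $k$ distinct labels'' and ``we can find all modules in time $O(n^2)$, as number of labels is bounded by a constant $k$''. So under the paper's intended reading the claim follows immediately from Theorem~\ref{moduletimecomplexity}, and your elaborate peeling argument is not needed. Under your (stricter) existential reading, the paper's proof does not actually establish the bound, and your sketch --- contracting one path-factor at a time and charging each arc to the unique level at which it crosses between factors --- is a reasonable route, though as you acknowledge, the step showing that the top-level label and first factor can be identified in time proportional only to the arcs being removed (rather than rescanning the whole graph) is the part that would need a genuine argument to close.
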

Essentially, for $k$-BTs, Algorithm \ref{alg:findmodules} is used to find all modules and check that they are paths. For DTs this is easier, as they are precisely binary trees with two labels and so can be identified by a graph traversal. The full proof is in the Appendix. To illustrate the applicability of this Theorem, consider the decision structure in Figure~\ref{fig:bt_and_decision}. This decision structure is equivalent to a BT, though this would be difficult to discern without such a BT being provided. With the tools discussed so far, we can identify the equivalent tree, also shown in Figure~\ref{fig:bt_and_decision}, in quadratic time. Consider also the complementary problem of identifying decision structures which \emph{do not} correspond to BTs. In Figure~\ref{fig:not_bt} a simple decision structure $T$ is shown. This does not correspond to any BT, but it would be difficult to prove this without these results. With this result, we can confirm this by showing that the only non-trivial module of $T$ is $\{a,b,c,d\}$, whose induced subgraph is not a path.
 \begin{figure}
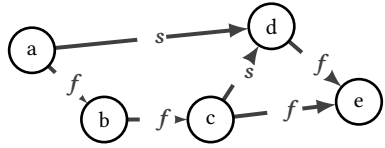

    \centering
    \includestandalone{otherfigs/not_bt_ss}
    %\includestandalone{not_bt}
    \caption{A decision structure $T$ that is not equivalent to any BT.}
    \label{fig:not_bt}
\end{figure}
This example shows that the set of decision structures corresponding to BTs is difficult to characterise without this theorem. One could derive some necessary conditions for decision structures to be BTs, such as that they must have a directed path through all nodes and all arcs labelled by $\bm{s}$ and $\bm{f}$. In addition, one can prove that all such decision structures are \emph{upward planar}. While necessary, the above conditions are not sufficient, as Figure~\ref{fig:not_bt} demonstrates. One might wonder to what degree the BT-ness of a decision structure is a property of the return values on its arcs. Certainly, these values matter, but the graph of Figure \ref{fig:not_bt} is not equivalent to a BT for \emph{any possible arc-labelling}, showing that the property of being equivalent to a BT does not only depend on the arc labels but also on the unlabelled graph structure.
\begin{remark}[Prior characterisations of Behavior Trees] \label{rem:hannaford}
The above result contradicts the result in~\cite{hannaford2019hidden}, which presents a characterisation of graphs corresponding to BTs. The construction of these graphs from BTs was essentially the same as our construction map for a BT, with the addition of two nodes representing the entire tree returning Success or Failure. We can translate to and from such graphs to our decision structures by adding two additional nodes and adding an arc from each node without an $\bm{f}$ arc to the Failure node and each node without an $\bm{s}$ arc to the Success node. Their result incorrectly claims that the class of graphs to which BTs are equivalent under this construction is the whole class of single-source digraphs with two sinks and a directed path through all non-sink nodes, which each have degree two. Figure \ref{fig:not_bt} provides a  counterexample, when appropriately translated, showing that these conditions are only necessary and not sufficient. Note that if the additional `Success' and `Failure' nodes are added, we can prove a nice result, which is that each decision structure either has no labellings of $\bm{s}$ and $\bm{f}$ to its arcs corresponding to BTs, or it has precisely two. These correspond respectively to a specific BT and its negation (in the sense of~\cite{biggar2020framework}). A variant of this is proved in~\cite{hannaford2019hidden}.
\end{remark}
\begin{remark}[Decision structures and Finite State Machines] \label{rem:fsms}
Decision structures are in a sense reactive Finite State Machines. To be more precise, if a transition is added from every node in the decision structure to the source node, and these transitions are taken after each prescribed update step, we obtain a FSM (specifically a \emph{clocked sequential system}~\cite{cavanagh2018sequential}) which has the same ASM as the original structure. This is essentially the translation given in~\cite{generalise,btbook}, though there it is defined using Hierarchical FSMs. We now determine which such FSMs are BTs under this translation: they are exactly those which, when arcs into the start node are removed, form a decision structure in this class. This is to our knowledge the first result in identifying a subclass of FSMs corresponding to BTs. We also note that this equally applies to any of the architectures we have characterised here. This gives intuition towards which FSMs are `modular', though we should note that this translation is not a structural equivalence, because these additional arcs to the source do not correspond to return values of the input actions, and in general the execution models of FSMs and decision structures are slightly different. However, analogous results on modularity in FSMs may be possible in future work, using a definition of a module more tailored to FSMs. See Section~\ref{sec:conclusions}.
\end{remark}
\subsection{A complexity measure for decision structures} \label{sec:complexitymeasure}

In this section we develop a measure of decision structure complexity which incorporates modularity. Our approach follows that of McCabe~\cite{mccabe1976complexity}, where a complexity measure called \emph{cyclomatic complexity} (based on the \emph{cyclomatic number} in graph theory~\cite{berge2001theory}) for control-flow graphs is defined. This counts the number of linearly independent (undirected) cycles in the graph plus one, a number which essentially measures the degree of branching of the program. McCabe then defines a measure called \emph{essential complexity} which is the cyclomatic complexity of the structure after it has been `reduced' by repeatedly contracting subgraphs with single entry and exit nodes. Cyclomatic complexity has been shown to be correlated with the difficulty of testing a piece of code, and has become a standardised metric of code complexity~\cite{watson1996structured}.

There is a clear connection between this idea and the module decomposition of a decision structure. Hence, given the applicability that this measure has found in software engineering, we shall define a similar concept for decision structures. Indeed, because decision structures are graphs, the definition of cyclomatic complexity for decision structures is essentially identical to that of control-flow graphs.

\begin{defn}
Let $Z$ be a decision structure. The \emph{cyclomatic complexity} of $Z$ is the number of linearly independent undirected cycles in $Z$, plus one.
\end{defn}
\begin{lemma} \label{calculatincyclcomplexity}
Let $Z$ be a decision structure with $s$ sinks, and $|N(Z)| = n$ and $|A(Z)| = a$. Then the cyclomatic complexity of $Z$ is $a+s-n+1$.
\end{lemma}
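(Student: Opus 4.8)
The plan is to reduce the statement to the standard cyclomatic number formula for graphs and then account for the fact that a decision structure may fail to be connected and may have more than one sink. Recall that for an undirected graph $G$ with $n$ nodes, $a$ edges and $c$ connected components, the cyclomatic number (the rank of the cycle space, i.e.\ the number of independent cycles) is $a - n + c$. Treating the underlying undirected graph of $Z$, the cyclomatic complexity as defined is therefore $a - n + c + 1$, where $c$ is the number of connected components of the underlying undirected graph of $Z$. So it suffices to prove that $c = s$, i.e.\ that the number of connected components equals the number of sinks $s$.

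To establish $c = s$, I would use the defining properties of a decision structure: it is acyclic and has a unique source. First I would argue every connected component contains at least one sink: within a component, starting from any node and repeatedly following outgoing arcs must terminate (by acyclicity and finiteness), and the terminal node is a sink. Conversely, I would show each connected component contains \emph{at most} one sink; equivalently, that there is exactly one sink per component. The key observation is that the unique source $r$ of $Z$ reaches every node: every node $v$ is the head of some maximal backward chain of arcs $v_0 \to v_1 \to \cdots \to v$ which, being maximal and the graph being acyclic and finite, must begin at a node $v_0$ that is a source — but $r$ is the \emph{only} source, so $v_0 = r$. Hence $r$ is an ancestor of every node, so $Z$ is connected, $c = 1$, and — again using that from every node we can descend to a sink — every node has a path to a sink; since $r$ is connected to all nodes and there is only one component, I then need that $Z$ in fact has exactly $s$ components only if $s = 1$.

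Here I realize the cleaner route: since the unique source reaches all nodes, the underlying undirected graph is connected, so $c = 1$. But then the formula would read $a - n + 2$, which only matches $a + s - n + 1$ when $s = 1$ — so the claim as stated must implicitly be about decision structures whose underlying \emph{undirected} graph can still be disconnected, which contradicts what I just argued. Reexamining: the resolution is that the lemma is counting something slightly different, or $s=1$ is not forced. Let me reconsider — a node could be a source in the sense of "head of no arc" yet the "unique source" condition refers to the distinguished initial node; but Definition~\ref{def:ASM-DG} says "a unique source," which does force connectivity as above. Therefore I expect the intended proof instead treats each sink as contributing a component in a modified graph, or more likely: the correct general identity is that in \emph{any} DAG with a single source, $a - n + 1$ already counts independent cycles of the undirected graph (since $c=1$), and the "$+s$" compensates for a convention where one conceptually adds a return arc from each sink. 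The main obstacle is thus pinning down exactly which graph the cycles are counted in; once that is fixed, the argument is the routine component-count computation $c - 1 + s$ combined with $a - n + c$. I would resolve this by following the author's convention (as in McCabe~\cite{mccabe1976complexity}, where an edge from each exit back to the entry is added), under which the modified graph is connected with $a + s$ edges and $n$ nodes, giving cyclomatic number $a + s - n + 1$ directly.
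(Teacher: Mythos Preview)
Your final resolution is correct and is essentially the paper's proof: the paper constructs $Z'$ by adding a single new node $q$ and an arc from each of the $s$ sinks to $q$, obtaining a single-source single-sink graph with $n+1$ nodes and $a+s$ arcs, and then applies McCabe's formula $e-m+2$ to get $(a+s)-(n+1)+2 = a+s-n+1$. Your variant (adding $s$ arcs from the sinks back to the source, yielding $a+s$ edges on $n$ nodes with $c=1$) is an equivalent standard convention and gives the same number.

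The long detour through trying to prove $c=s$ is a dead end, and you correctly diagnosed why: a decision structure has a unique source reaching every node, so its underlying undirected graph is connected and $c=1$ always. The source of your confusion is that the paper's definition of cyclomatic complexity is \emph{not} the cycle rank of $Z$ itself plus one; as the Remark immediately after the lemma explains, the definition is meant to coincide with McCabe's complexity of the augmented graph $Z'$. Once you adopt that reading (as you eventually do), the computation is a one-liner, and that is exactly what the paper does.
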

\begin{remark}
This definition is chosen for consistency with McCabe's definition for control-flow graphs, which are digraphs with a single source and sink. Decision structures are `control-flow graphs' in McCabe's sense if they have one sink; in general they can have many sinks, but we deal with the issue in this case by adding a single additional node to be the single sink and add arcs from all existing sinks to this node. Our formula then is equal to the cyclomatic complexity (in McCabe's sense) of this resultant control-flow graph. As for control-flow graphs, it is easy to prove that a decision structure is a directed path if and only if it has cyclomatic complexity 1.
\end{remark}
This has yet to incorporate any reference to modules. In the same way that subgraphs with single entry and exit nodes can be considered function calls in structured programming, we consider a module in a decision structure to be a function call to another decision structure. Hence, we seek a measure which incorporates the complexity of the most complex part of the structure.
\begin{defn}
Let $Z$ be a decision structure. The \emph{essential complexity} of $Z$ is the maximum cyclomatic complexity of any quotient graph in its module decomposition.
\end{defn}
Note that if we handled case (2) of Lemma \ref{uniquedecomposition} by choosing any possible maximal partition, we would end up with a tree with the same module complexity, as a path has complexity 1. This definition fits well with intuition about how modularity should work, and further it can be easily computed.
\begin{lemma} \label{essentialtimecomplexity}
For any decision structure $Z$ with $|N(Z)|=n$ and $k$ distinct labels, its essential complexity can be computed in time $O(n^2k)$.
\end{lemma}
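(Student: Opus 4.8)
The plan is to reduce the computation to two ingredients already available: Theorem~\ref{moduletimecomplexity}, which lists all modules of $Z$ in time $O(n^2k)$, and Lemma~\ref{calculatincyclcomplexity}, which computes the cyclomatic complexity of any decision structure from its node, arc and sink counts as $a+s-n+1$. Since the essential complexity is by definition the largest cyclomatic complexity among the quotient graphs of the module decomposition, it suffices to (i) construct the module decomposition and (ii) evaluate $a+s-n+1$ on each quotient graph and return the maximum.

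First I would run Algorithm~\ref{alg:findmodules} to obtain all modules of $Z$, at cost $O(n^2k)$ by Theorem~\ref{moduletimecomplexity}. From this collection I build the module decomposition tree top-down: at a node corresponding to a module $Y$ (starting with $Y = N(Z)$), decide which case of Lemma~\ref{uniquedecomposition} applies, take the corresponding canonical modular partition $P$ of $Y$, form the quotient $Z[Y]/P$ (again a decision structure by Lemma~\ref{prime if maximal}), and recurse on each factor $Z[S]$, $S \in P$. No modules need be recomputed, because by Lemma~\ref{moduleinmodule} the modules of $Z[S]$ are exactly the modules of $Z$ contained in $S$. The resulting tree has $n$ leaves (the singleton modules) and each internal node has at least two children, so there are $O(n)$ internal nodes and hence $O(n)$ quotient graphs; since each quotient graph is itself a decision structure, every one of its nodes has at most $k$ out-arcs, so the total number of nodes and arcs over all quotient graphs is $O(n)$ and $O(nk)$ respectively.

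Second, for each quotient graph $Q$ I read off $|N(Q)|$, $|A(Q)|$ and the number of sinks of $Q$ and apply Lemma~\ref{calculatincyclcomplexity}; this is linear in the size of $Q$, hence $O(nk)$ in total. By Lemma~\ref{primequotients} each quotient is prime or a path, and a path contributes cyclomatic complexity $1$, so case (2) of Lemma~\ref{uniquedecomposition} needs no special treatment (as already noted after the definition of essential complexity); the answer is simply the maximum of the values obtained. Adding the $O(n^2k)$ cost of enumerating the modules gives the claimed $O(n^2k)$ bound.

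The step I expect to dominate the bookkeeping — and the main obstacle — is turning the raw list of \emph{all} modules into the canonical decomposition tree with its explicit quotient graphs within budget. In the path/compressed-form case of Lemma~\ref{uniquedecomposition} the module list can be as large as $\Theta(n^2)$, since every connected sub-path of a path quotient is a module, so one must (a) detect overlaps to isolate the strong modules and the degenerate (``path'') tree nodes, (b) in the degenerate case select the unique partition maximising the path length, and (c) realise each quotient graph (or at least its node, arc and sink counts) without an $\Theta(n^2)$-per-level blow-up. For (c) the module interface is exactly what is needed: arcs into a factor all enter its source and arcs leaving a factor with a fixed label all leave to a fixed target, so each factor contributes at most one arc per label to the quotient; this both keeps the quotient graphs of total size $O(nk)$ and lets the required counts be extracted by scans whose cost, summed over the whole recursion, is $O(n^2k)$.
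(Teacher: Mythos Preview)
Your approach is correct and follows the same two-step skeleton as the paper's proof: invoke Theorem~\ref{moduletimecomplexity} to enumerate the modules in $O(n^2k)$, then use Lemma~\ref{calculatincyclcomplexity} to read off cyclomatic complexities and take the maximum. The paper's proof is much terser --- it simply asserts that the arc, sink and node counts can be stored as each module is computed during Algorithm~\ref{alg:findmodules}, and that the remaining work is $O(n)$ --- whereas you explicitly build the decomposition tree, argue there are $O(n)$ quotient graphs of total size $O(nk)$, and carefully handle the degenerate path case of Lemma~\ref{uniquedecomposition}. Your extra bookkeeping is not a different method, just a more scrupulous accounting of the same one; the paper effectively hides precisely the ``main obstacle'' you flag.
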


Immediately we can determine the essential complexity of structures which we have so far encountered in this paper. Figure \ref{fig:motivation} has cyclomatic complexity 10 and essential complexity 2. The decision structure in Figure \ref{fig:bt_ss} has cyclomatic complexity 6 and essential complexity 1. Figure \ref{fig:bt_and_decision} has cyclomatic complexity 5 and essential complexity 1. Note immediately that the essential complexity is always no greater than the cyclomatic complexity. The structures in these examples break down into many simple modules, so their essential complexity is small. This makes conceptual sense. Figures \ref{fig:bt_ss} and \ref{fig:bt_and_decision} are structurally equivalent to BTs, and so their essential complexity must be 1. In fact, we can prove another characterisation of the $k$-BTs using this measure.
\begin{thm} \label{essential_characterisation}
Let $Z$ be a decision structure with $k$ distinct arc labels. $Z$ is equivalent to a $k$-BT if and only if it has essential complexity 1.
\end{thm}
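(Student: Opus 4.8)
The plan is to obtain this as a short consequence of Theorem~\ref{kbt_characterisation} together with the already-noted fact (the remark following Lemma~\ref{calculatincyclcomplexity}) that a decision structure has cyclomatic complexity $1$ if and only if it is a directed path. By definition, the essential complexity of $Z$ is the maximum cyclomatic complexity taken over the finitely many quotient graphs appearing in $Z$'s module decomposition, and each such quotient graph is itself a decision structure by Lemma~\ref{prime if maximal}. Since the cyclomatic complexity of any decision structure is at least $1$ (it counts independent undirected cycles plus one), the essential complexity of $Z$ is at least $1$, and it equals $1$ exactly when every quotient graph in the module decomposition has cyclomatic complexity exactly $1$.

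Next I would apply the cited characterisation of cyclomatic complexity $1$ to each quotient graph: a quotient graph has cyclomatic complexity $1$ iff it is a directed path. Hence ``$Z$ has essential complexity $1$'' is equivalent to ``every quotient graph in the module decomposition of $Z$ is a directed path''. But that is precisely the condition appearing in Theorem~\ref{kbt_characterisation} for $Z$ to be structurally equivalent to a $k$-BT, so the two characterisations coincide and the theorem follows. One should also check the degenerate quotients for completeness: a single-node quotient and a single-arc quotient are both prime (consistent with Lemma~\ref{primequotients}) and are directed paths of cyclomatic complexity $1$, so no mismatch arises with the ``path'' terminology of Theorem~\ref{kbt_characterisation}.

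I do not expect a substantial obstacle here; the real work was done in establishing Theorem~\ref{kbt_characterisation} and Lemma~\ref{calculatincyclcomplexity}. The only points needing care are: confirming that every quotient graph in the module decomposition is again a decision structure, so that the cyclomatic-complexity-$1$/directed-path equivalence applies to each of them; verifying the lower bound of $1$ on the cyclomatic complexity of a decision structure, so that ``the maximum equals $1$'' forces ``each of them equals $1$''; and ensuring that ``path'' in Theorem~\ref{kbt_characterisation}, ``path of length at least two with equal arc labels'' in Lemma~\ref{primequotients}, and ``directed path'' in the remark after Lemma~\ref{calculatincyclcomplexity} all refer to the same objects, modulo the trivial short paths.
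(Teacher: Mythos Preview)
Your proposal is correct and follows essentially the same route as the paper's proof: invoke Theorem~\ref{kbt_characterisation} to reduce the question to whether every quotient in the module decomposition is a path, and then use the equivalence between cyclomatic complexity~$1$ and being a directed path. The paper's argument is terser but identical in substance; your additional checks on degenerate quotients and on quotients being decision structures are fine but not strictly needed.
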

This states that $k$-BTs are precisely the decision structures with minimal complexity. We believe this provides an argument for interpreting minimally-complex decision structures as $k$-BTs by default.

\section{Modular formal verification of decision structures} \label{sec:verrification}

The goal of formal verification is to discover system errors or to provide trust in the form of a certificate that a system operates correctly. In other words, given a model of a system and a description of the desired specification of that system, we wish to produce either a counterexample of a legal system execution violating the specification or a proof that no such executions are possible. One challenging aspect of verification is the existence of a trade-off between computational complexity and model fidelity. This is part of a more general problem: how does one build large-scale correct and complex systems, while complexity scales with size? The engineering solution to this problem is abstraction and modularity. Specifically one ensures that each subsystems interacts with only a few others, and modifications to one such system have limited flow-on effects. Applying these ideas to formal verification is a major goal of this paper, and a motivation behind our investigation into modules in decision structures.

We seek modularity in verification in the following way. Suppose a system is guaranteed to be correct with regard to some specification. Then, a modification is made to some component of this system---in this case, a modification of the decision structure. If this change is restricted to a well-defined subpart---that is, a module---we seek criteria under which we can deduce the correctness of the altered structure by considering only this local change. We show that replacing a module in a decision structure with a different module causes no more of an effect than replacing an action by another action. Importantly, this result is independent of any verification scheme or any specific representation of the specification or the system. It can be applied to any and all techniques which are able to verify a decision structure, including special cases such as BTs, $k$-BTs, DTs and TRs. The special case of this theorem for the simple verification scheme discussed later was proved for BTs in~\cite{biggar2020framework}. We now show the result in generality.

 Given a decision structure $Z$, we write $Z(w)$ for the action selected by $Z$ in $w$, and $(Z_B,Z_R)$ for the derived action of $Z$. As modules can be considered decision structures and therefore ASMs in their own right, we will use the same notation for modules. The following theorem proves that the operation of contracting modules preserves the decision structure as an ASM.
\begin{thm} \label{contraction}
Let $Z$ be a decision structure, with $H$ a module of $Z$. Let $Z/H$ be the module contraction of $H$, where $H$ is labelled by its derived action $(H_B,H_R)$ in $Z/H$ and the actions labelling other nodes are unchanged. Then $Z_B=(Z/H)_B$.
\end{thm}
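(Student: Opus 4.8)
The plan is to fix an arbitrary state $w\in\world$ and trace the unique execution path, showing that $Z$ and $Z/H$ select actions with the same behavior in $w$; since $w$ is arbitrary this gives $Z_B=(Z/H)_B$. Because a decision structure is acyclic and deterministic, the execution of any decision structure $Y$ from its source in state $w$ is a finite simple path in which, at the current node $v$ labelled by $\eta(v)$, one follows the (unique, by Definition~\ref{def:ASM-DG}) arc whose label is $\eta(v)_R(w)$ if it exists, and otherwise selects $\eta(v)$. Let $\pi=(v_0=s,\dots,v_m)$ be this path for $Z$, with $v_m$ selected, and note that $\eta(v)_R(w)$ and $\eta(v)$ depend only on the action and $w$, not on the ambient structure. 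I would first also dispatch the routine point that $Z/H$ is a genuine decision structure with a well-defined unique source: the partition $\{N(H)\}\cup\{\{v\}:v\notin N(H)\}$ is modular (trivial modules, Lemma~\ref{trivialmodules}), so Lemma~\ref{prime if maximal} applies, and the contracted node $[H]$ carries $(H_B,H_R)$ by hypothesis while all other nodes and all arcs among $N(Z)\setminus N(H)$ (and their labels) are unchanged.

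The structural fact driving everything is that, by Definition~\ref{def:modules}, every arc entering $N(H)$ lands on $H$'s source $h_0$; combined with acyclicity of $Z$ this forces $\pi$ to meet $N(H)$ in at most one contiguous block, entered at $h_0$. This splits into two cases. \emph{Case A: $\pi$ avoids $N(H)$.} Then $s\notin N(H)$, so $\{s\}$ is the source of $Z/H$, and $\pi$ is verbatim an execution path of $Z/H$; both structures select $v_m$, and its action is unchanged, so the behaviours agree. \emph{Case B: $\pi$ meets $N(H)$.} Write $\pi=(v_0,\dots,v_{j-1})$ (outside $N(H)$, possibly empty), then $(v_j=h_0,\dots,v_k)$ (inside $N(H)$), then, if $k<m$, $(v_{k+1},\dots,v_m)$, where the tail stays outside $N(H)$ since re-entering $H$ would revisit $h_0$, contradicting acyclicity.

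For Case B I would prove two claims. (i) The standalone execution of $H=Z[N(H)]$ in $w$ is exactly $(v_j,\dots,v_k)$ with $v_k$ selected: each arc $v_i\to v_{i+1}$ for $j\le i<k$ has both ends in $N(H)$, hence is an arc of $Z[N(H)]$ and is the $\eta(v_i)_R(w)$-arc there; while the $\eta(v_k)_R(w)$-arc out of $v_k$ in $Z$ is either absent ($k=m$) or leaves $N(H)$ ($k<m$), so it is absent in $Z[N(H)]$. Thus $H(w)=v_k$, $H_R(w)=\eta(v_k)_R(w)=:\bm{r}^*$, and $H_B(w)=\eta(v_k)_B(w)$. (ii) In $Z/H$ the unchanged prefix $(v_0,\dots,v_{j-1})$ is executed and the $\eta(v_{j-1})_R(w)$-arc out of $v_{j-1}$ now leads to $[H]$ (or, if $j=0$, execution simply starts at $[H]$, which is then the source of $Z/H$). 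At $[H]$ the return value is $H_R(w)=\bm{r}^*$, and the $\bm{r}^*$-arc out of $[H]$ in $Z/H$ exists iff some node of $N(H)$ sends an $\bm{r}^*$-arc out of $N(H)$ in $Z$ iff $k<m$: if $k=m$ then $v_k$ has no $\bm{r}^*$-arc, so by Definition~\ref{def:modules} no node of $H$ sends an $\bm{r}^*$-arc out of $N(H)$; if $k<m$ the arc $v_k\to v_{k+1}$ is one, and Definition~\ref{def:modules} forces every $\bm{r}^*$-arc leaving $N(H)$ to have the same head $v_{k+1}$. Hence if $k=m$ then $Z/H$ selects $[H]$, with behaviour $H_B(w)=\eta(v_k)_B(w)=Z(w)_B(w)$; and if $k<m$ then $Z/H$ continues $[H]\to v_{k+1}\to\cdots\to v_m$ along the unchanged tail and selects $v_m=Z(w)$. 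Either way the behaviours agree.

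I expect the only real subtlety to be claim (ii) --- verifying that leaving the module on return value $\bm{r}^*$ always lands on the same external node independent of which node of $H$ was reached --- which is precisely where the module interface condition of Definition~\ref{def:modules} is used, together with careful handling of the degenerate subcases ($j=0$, when the source of $Z$ lies in $H$; and $k=m$, when $\pi$ never leaves $H$). The rest is bookkeeping about which nodes, arcs, labels and actions the contraction leaves untouched.
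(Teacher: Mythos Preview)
Your proposal is correct and follows essentially the same approach as the paper: fix $w$, trace the execution path in $Z$, use the module interface condition (Definition~\ref{def:modules}) to argue the path meets $N(H)$ in a single contiguous block entered at $H$'s source, and then match up the execution in $Z/H$ case by case. The only cosmetic difference is that the paper splits into three cases (path avoids $H$; path enters and exits $H$; path ends in $H$) whereas you use two cases with your Case~B handling the last two via the $k<m$ versus $k=m$ distinction; your treatment is if anything slightly more explicit about where the module definition is invoked and about the well-definedness of $Z/H$.
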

Though seemingly understated, this theorem will allow us to treat modules as individual nodes in \emph{any} verification scheme. We will now specify what we mean by a \emph{verification scheme}. A verification scheme is a function which takes a decision structure (or a model of such) and a formal specification in some form and returns either True or False depending on whether that system satisfies the specification. We will assume only that a verification scheme is deterministic, in that if it is applied to two identical structures and specifications then the output must be the same.
\begin{defn}
Let $V$ be some verification scheme. Let $C_a$ be a predicate, dependent on a decision structure $Z$, a node $v\in N(Z)$, the action $\alpha$ labelling $v$, and another action $\beta$. Let $C_m$ be a predicate dependent on $Z$, a module $H$ in $Z$, and another decision structure $Q$. Let $Z_a$ be the structure given by replacing the label of $v$ by $\beta$, and likewise let $Z_m$ be the structure given by replacing $H$ by $Q$. Formally, $Z_m=(Z/H)\cdot^H Q$. We call $C_a$ a \emph{sufficient condition for actions} if, supposing $Z$ satisfies some specification $\varphi$ according to $V$, whenever $C_a(Z,v,\alpha,\beta)$ is true then $Z_a$ also satisfies $\varphi$ according to $V$. We call $C_m$ a \emph{sufficient condition for modules} if, supposing $Z$ satisfies some specification $\varphi$ according to $V$, whenever $C_m(Z,H,Q)$ is true then $Z_a$ also satisfies $\varphi$ according to $V$.
\end{defn}
One trivial sufficient condition is that $\alpha=\beta$ (for actions) or $H=Q$ (for modules). In both cases, $Z_a=Z_m=Z$, and so the correctness is preserved by the determinism of the verification scheme. In general for a specific verification scheme, there are infinitely many other sufficient conditions.
\begin{thm} \label{verificationcorrespondence}
Let $V$ be a fixed verification scheme. There is a one-to-one correspondence between sufficient conditions for modules and actions.
\end{thm}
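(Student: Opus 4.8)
The plan is to exhibit the bijection explicitly by showing that sufficient conditions for actions and sufficient conditions for modules are two descriptions of the same underlying object, mediated by Theorem~\ref{contraction}. The key observation is that a module contraction turns a module $H$ of $Z$ into a single node of $Z/H$ labelled by the derived action $(H_B,H_R)$, and by Theorem~\ref{contraction} this contraction preserves the derived action of the whole structure, hence (since a verification scheme is deterministic and sees only the structure-as-ASM together with the specification) preserves the verification verdict. So the operation ``replace module $H$ in $Z$ by decision structure $Q$'' is, up to the verification scheme, the same as the operation ``replace the node (labelled $H$'s derived action) in $Z/H$ by the node labelled $Q$'s derived action''. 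This is exactly an action replacement in the contracted structure.

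Concretely, I would define the correspondence as follows. Given a sufficient condition for actions $C_a$, define a predicate on triples $(Z,H,Q)$ by $C_m(Z,H,Q) := C_a(Z/H,\, v_H,\, (H_B,H_R),\, (Q_B,Q_R))$, where $v_H$ is the node of $Z/H$ into which $H$ was contracted. Conversely, given a sufficient condition for modules $C_m$, define $C_a(Z,v,\alpha,\beta) := C_m(Z,\{v\},Q_\beta)$, where $Q_\beta$ is the one-node decision structure labelled by $\beta$ — note $\{v\}$ is always a module by Lemma~\ref{trivialmodules}, and $Z\cdot^{\{v\}}Q_\beta$ is just $Z$ with $v$'s label changed to $\beta$, so this is genuinely an action replacement situation. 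Then I would check: (i) each map sends sufficient conditions to sufficient conditions, and (ii) the two maps are mutually inverse. For (i) in the forward direction, if $Z$ satisfies $\varphi$ then so does $Z/H$ (Theorem~\ref{contraction} plus determinism), so $C_a$ applies to give that $(Z/H)_a$ — i.e. $(Z/H)\cdot^{v_H}Q_\beta = Z_m$ — satisfies $\varphi$; one must check that replacing the contracted node by $Q$ and then noting $(Z/H)\cdot^{v_H}Q \cong Z_m$ up to the ASM, again using Theorem~\ref{contraction} in the reverse direction ($(Z_m/N(Q)) \cong Z/H$). For (ii) the round trips reduce to the identities $Z/\{v\} \cong Z$ and $(H_B,H_R)$ being the action labelling $v_H$ in $Z/H$, which are immediate from the definitions of contraction and expansion.

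The main obstacle I expect is bookkeeping around the fact that the correspondence must be a \emph{bijection between the two sets of predicates}, not merely a pair of maps — so I need the round-trip identities to hold \emph{on the nose} (as predicates), which forces some care about what data $C_a$ and $C_m$ are allowed to depend on. In particular $C_a$ depends on $(Z,v,\alpha,\beta)$ where $\alpha$ is \emph{determined} by $(Z,v)$; similarly $C_m$'s dependence on $H$ includes its induced subgraph. I must make sure the two constructions respect this redundancy so that e.g. $C_a \mapsto C_m \mapsto C_a$ returns literally the same predicate. A secondary subtlety is that the module-expansion operation $(Z/H)\cdot^H Q$ appearing in the definition of $Z_m$ must be reconciled with the expansion $(Z/\{v\})\cdot^{v} Q_\beta$ appearing in the definition of $Z_a$; both are instances of Definition~\ref{def:moduleexpansion} and the compatibility is exactly the remark following that definition, namely $(Z\cdot^v Q)/N(Q)\cong Z$ and $N(Q)$ is a module of $Z\cdot^v Q$, but stating this cleanly for arbitrary $Q$ (not just one-node $Q$) is where the argument has to be written with care. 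Everything else is a direct unwinding of definitions against Theorem~\ref{contraction}.
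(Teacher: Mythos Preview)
Your approach is essentially the same as the paper's: both define the forward map $C_m(Z,H,Q) := C_a(Z/H,\,v_H,\,(H_B,H_R),\,(Q_B,Q_R))$ and invoke Theorem~\ref{contraction} to transport the verification verdict across the contraction, with the reverse direction via trivial one-node modules. You are in fact more careful than the paper, which dispatches the converse in one line (``by reversing the above translation'') and does not address the round-trip issue you flag; your concern that $C_m \mapsto C_a \mapsto C_m$ need not return the literal same predicate when $C_m$ inspects the internal structure of $H$ or $Q$ is legitimate, and the paper's proof does not resolve it.
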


Sufficient conditions for actions are quite easy to construct. This is because modifications to a single action are in a sense the simplest possible modification one could make to a decision structure. This theorem shows that modifying modules is just as easy. We will give examples of finding and using sufficient conditions to verify and modify decision structures in the next few sections. From a practical perspective, this is valuable because `modular' architectures are useful precisely because modification does not break existing behavior. This theorem formalises this engineering principle; these structures can be modified in certain ways without breaking formal certificates of correctness. Through the module decomposition, we can identify when this can be done.

A particular set of interesting sufficient conditions is those that are \emph{local}. That is, the condition should depend on the actions and their return values, not how they are situated with the decision structure. Such conditions allow us to improve the efficiency of verification by only requiring computation on the actions, rather than the entire structure. It also allows them to potentially be stored in libraries of verified behavior, as knowing the condition is satisfied between actions allows those actions to be substituted in any decision structure where they are used. We give examples of local conditions in the next section.

\subsection{A verification scheme for reactive ASMs}

To demonstrate the use of the previous result to a practical problem, we will select a specific verification scheme, and develop some sufficient conditions. Our scheme is based off the method for BT verification developed in~\cite{biggar2020framework}, and provides a method for verifying any reactive ASM, including decision structures. It is simple and easy to implement, which makes it useful for explaining these ideas with minimal data. Much of this simplicity comes from its use of abstraction for actions. In this framework, LTL formulas are used as an abstract representation of the behavior of an action interacting with the world. In formal verification it is often assumed that the information of the relations between states induced by the actions is known precisely, to allow for the construction of a transition system representation of the world. Here (following~\cite{biggar2020framework}) we make the weaker assumption that we know this only \emph{abstractly}, consistent with a hierarchical and modular interpretation of these architectures. In other words, our verification scheme depends explicitly on a model of each action, rather than its specific implementation, which may not be known.
\begin{defn}
Let $\alpha$ be an action in $\Act$, $\phi$ an LTL formula, and $\mathcal{W}$ a world. We write $w^\alpha$ for the set of possible subsequent sequences of states after $w$ generated by $\mathcal{W}$ on input $\alpha_B(w)\in\signalspace$. We say $\phi$ \emph{models} $\alpha$ if, for all sequences of states $w_1,w_2,w_3,\dots\in\world$, if $w_2,w_3,\dots \in w_1^\alpha$ then $\phi$ holds in $w_1,w_2,w_3,\dots$. We say $\phi$ is \emph{equivalent to} $\alpha$ if the converse also holds.
\end{defn}
%\begin{defn}
%Let $\mathcal{W}$ be a world that is acted upon by a behavior $\mathcal{B}$. Let $\alpha$ be an action in $\Act$, and $\varphi$ an LTL formula. We say
%$\varphi$ \emph{models} $\alpha$ if for any sequence of states $w_0,w_1,w_2,\dots\in\world$ generated by $\mathcal{W}$, $(\mathcal{B}(w_0) = \alpha_B(w_0)) \implies w_0,w_1,w_2,\dots\models \varphi$.
%\end{defn}` 
Conceptually, this says that whenever $\alpha$ is selected in a state $w$ to produce a signal into $\mathcal{W}$, $\phi$ must hold over the sequence of states from this point. The formula $\phi$ is an answer to the question: ``what can we guarantee about the state of the world during and after we execute this action?". If we visualise the structure of the world $\mathcal{W}$ as a directed graph, an LTL formula is a set of infinite paths. The statement `$\phi$ models $\alpha$' means that the set of paths defined by $\phi$ is a superset of the paths which can occur following a signal from $\alpha$.

Now we show that modelling individual actions by LTL formulas gives a method to correctly verify \emph{any} ASM, generalising Theorem 5.1 of~\cite{biggar2020framework}. To do this we first recall that an ASM is defined implicitly as a Boolean algebra by a finite set of return values. This means that given some $M: \world\to \Delta$ with $\Delta = \{\alpha_1,\dots,\alpha_n\}$, $M^{-1}(\alpha_i)$ is a subset of $\world$ which is defined by a finite Boolean formula whose variables are of the form ${\alpha_j}_R(w) = \bm{r}$ for $j\in\{1,\dots,n\}$ and $\bm{r}$ in some finite subset $X$ of $\returnvals$. Hence we shall from here interpret this as a formula in the finite Boolean algebra generated by these variables.
\begin{lemma} \label{psi canonical}
Let $M: \world\to \Delta$ be an ASM, with $\Delta = \{\alpha_1,\dots,\alpha_n\}$. Suppose we have LTL formulas $\phi_1,\dots,\phi_n$ equivalent to each action $\alpha_i \in \Delta$. Then $\Psi_M := (M^{-1}(\alpha_1) \land \phi_1)\lor \dots\lor (M^{-1}(\alpha_n)\land \phi_n)$ is equivalent to $(M_B,M_R)$.
\end{lemma}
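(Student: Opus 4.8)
The plan is to unfold the definition of equivalence between an LTL formula and an action, and to use the fact that the preimages $M^{-1}(\alpha_1),\dots,M^{-1}(\alpha_n)$ form a partition of $\world$ since $M$ is a function. First I would record the two observations that drive everything. (i) For a sequence $w_1,w_2,\dots$, the propositional formula $M^{-1}(\alpha_i)$, read in the finite Boolean algebra generated by the atomic predicates ${\alpha_j}_R(w)=\bm{r}$ as in the paragraph above, holds in $w_1,w_2,\dots$ precisely when $w_1\in M^{-1}(\alpha_i)$, that is, when $M(w_1)=\alpha_i$. (ii) If $M(w_1)=\alpha_i$ then $M_B(w_1)=M(w_1)_B(w_1)=\alpha_{i,B}(w_1)$, so the set $w_1^{(M_B,M_R)}$ of subsequent state sequences --- which depends on the derived action only through the signal $M_B(w_1)$ --- equals $w_1^{\alpha_i}$. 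In particular the return value component $M_R$ plays no role here, since $w^\alpha$ depends on $\alpha$ only via $\alpha_B(w)$.

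For the forward direction, that $\Psi_M$ \emph{models} $(M_B,M_R)$: take any sequence $w_1,w_2,\dots$ with $w_2,w_3,\dots\in w_1^{(M_B,M_R)}$, and let $i$ be the unique index with $M(w_1)=\alpha_i$. By (ii), $w_2,w_3,\dots\in w_1^{\alpha_i}$; since $\phi_i$ is equivalent to, and hence models, $\alpha_i$, the formula $\phi_i$ holds in $w_1,w_2,\dots$; and by (i) the conjunct $M^{-1}(\alpha_i)$ holds there too. Hence the $i$-th disjunct $M^{-1}(\alpha_i)\land\phi_i$ holds in $w_1,w_2,\dots$, so $\Psi_M$ does as well.

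For the converse: suppose $\Psi_M$ holds in $w_1,w_2,\dots$. Then some disjunct $M^{-1}(\alpha_i)\land\phi_i$ holds, so by (i) $M(w_1)=\alpha_i$, and $\phi_i$ holds in $w_1,w_2,\dots$. Because $\phi_i$ is equivalent to $\alpha_i$, the converse half of equivalence yields $w_2,w_3,\dots\in w_1^{\alpha_i}$, and then (ii) gives $w_2,w_3,\dots\in w_1^{(M_B,M_R)}$. Combining the two directions, $\Psi_M$ is equivalent to $(M_B,M_R)$.

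The only step needing care is pinning down the semantics of the conjuncts $M^{-1}(\alpha_i)$ as LTL formulas: one must check they are genuine state formulas --- Boolean combinations of the atomic predicates ${\alpha_j}_R(w)=\bm{r}$ with no temporal operators --- so that their truth on a run depends only on $w_1$, and that the sets they define are exactly the blocks of the partition of $\world$ induced by $M$. This is set up in the paragraph preceding the lemma, and once granted the argument is a routine case split. I expect this bookkeeping, rather than any real difficulty, to be the main thing to get right.
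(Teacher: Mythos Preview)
Your proof is correct and follows essentially the same route as the paper's: both directions are handled by the observation that $M^{-1}(\alpha_i)$ holds at $w_1$ iff $M(w_1)=\alpha_i$, together with the identification $w_1^{(M_B,M_R)}=w_1^{\alpha_i}$ whenever $M(w_1)=\alpha_i$. Your write-up is in fact more explicit than the paper's about why $w_1^{(M_B,M_R)}$ depends only on the signal $M_B(w_1)$ and about the partition of $\world$ by the preimages, which are exactly the bookkeeping points the paper leaves implicit.
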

Though for Theorem \ref{verification works} we will only need the weaker `models' form of the above lemma, we prove a stronger form because it allows us to derive a useful fact: $\Psi_M$ is the strongest possible modelling formula of $M$. If we keep the models $\phi_i$ of each action $\alpha_i$ the same, then any other formula modelling $M$ is entailed by $\Psi_M$. Because $\Psi_M$ is the \emph{canonical} choice in this sense, whenever we have an ASM $M$ selecting from a set $\Delta=\{\alpha_1,\dots,\alpha_n\}$ which have corresponding modelling formulas $\phi_1,\dots,\phi_n$, then we shall assume that $M$ is modelled by $\Psi_M$.

\begin{thm}[Verification of ASMs] \label{verification works}
Let $\alpha$ be an action, and suppose $\phi_\alpha$ models $\alpha$. Let $\varphi$ be an LTL specification. If $\alpha$ acts on $\mathcal{W}$ and $\always \phi_\alpha \models \varphi$, then every sequence of states generated by $\mathcal{W}$ satisfies $\varphi$.
\end{thm}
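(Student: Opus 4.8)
The plan is to exploit the \emph{reactiveness} of $\alpha$. Since a behavior is by definition just a map $\world\to\signalspace$, the signal $\alpha$ feeds into $\mathcal{W}$ at any moment depends only on the current state; consequently any suffix of a sequence of states generated by $\mathcal{W}$ while $\alpha$ acts on it is again such a sequence (started later). This "shift-invariance" is exactly what promotes the single-step guarantee encoded by ``$\phi_\alpha$ models $\alpha$'' into the $\always$-guarantee that the hypothesis $\always\phi_\alpha\models\varphi$ expects as its antecedent.

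Concretely I would argue as follows. Fix an arbitrary sequence $\sigma = w_1,w_2,w_3,\dots$ of states generated by $\mathcal{W}$ with $\alpha$ acting on it; by definition each $w_{i+1}$ is a possible output of $\mathcal{W}$ on input $\alpha_B(w_i)$, so in particular $w_2,w_3,\dots\in w_1^\alpha$. \textbf{Step 1:} for every $i\ge 1$ the suffix $\sigma^{(i)} := w_i,w_{i+1},\dots$ is itself a sequence of states generated by $\mathcal{W}$ under $\alpha$, begun at $w_i$ — immediate, because $\alpha$ consults nothing beyond the current state — and hence $w_{i+1},w_{i+2},\dots\in w_i^\alpha$. \textbf{Step 2:} apply the definition of ``$\phi_\alpha$ models $\alpha$'' with $w_i$ in the role of the initial state (the ``$w_1$'' of that definition) and $w_{i+1},w_{i+2},\dots$ in the role of the continuation in $w_i^\alpha$; this yields that $\phi_\alpha$ holds on $\sigma^{(i)}$, for every $i\ge 1$. \textbf{Step 3:} by the standard semantics of the derived operator $\always$, the formula $\always\phi_\alpha$ holds on $\sigma$ precisely when $\phi_\alpha$ holds on every suffix $\sigma^{(i)}$, which Step 2 established; so $\always\phi_\alpha$ holds on $\sigma$. \textbf{Step 4:} since $\always\phi_\alpha\models\varphi$ and $\sigma$ is a sequence on which $\always\phi_\alpha$ is satisfied, $\varphi$ is satisfied on $\sigma$ too. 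As $\sigma$ was arbitrary, every sequence generated by $\mathcal{W}$ satisfies $\varphi$. The non-determinism of $\mathcal{W}$ is harmless, since we quantified over all of its trajectories; and Lemma~\ref{psi canonical} is not needed here — it is only used later to instantiate $\phi_\alpha$ as $\Psi_M$ when $\alpha$ is the derived action of an ASM.

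The only genuinely delicate point is Step 1 — the claim that suffixes of closed-loop trajectories are themselves closed-loop trajectories. This requires unwinding the informal phrases ``$\alpha$ acts on $\mathcal{W}$'', ``sequence of states generated by $\mathcal{W}$'', and the defining description of $w^\alpha$ into the precise statement that such a trajectory is any $w_1,w_2,\dots$ with $w_{i+1}$ a legal successor of $w_i$ under input $\alpha_B(w_i)$; once phrased this way the suffix property is a one-line consequence of $\alpha_B$ depending on state alone (the reactiveness discussed in Section~\ref{reactiveness}, which also underlies Assumption~\ref{asmp:negligible} if one wishes to be careful about timing). Everything after Step 1 is routine bookkeeping with the ``models'' relation and the semantics of $\always$.
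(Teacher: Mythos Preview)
Your proposal is correct and follows essentially the same line as the paper's own proof: take an arbitrary trajectory, observe that for every index $i$ the tail $w_{i+1},w_{i+2},\dots$ lies in $w_i^\alpha$ (because the signal at each step is $\alpha_B(w_i)$), deduce $\phi_\alpha$ on every suffix, conclude $\always\phi_\alpha$ and hence $\varphi$. Your added discussion of reactiveness, the suffix property, and the irrelevance of Lemma~\ref{psi canonical} is accurate elaboration of points the paper leaves implicit.
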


As ASMs have a unique derived action, the action $\alpha$ mentioned in this theorem could be an ASM made up of many actions. This theorem very closely follows Theorem 5.1 of~\cite{biggar2020framework}, generalised to all reactive ASMs.
%two small changes. Firstly, our Theorem encompasses all reactive control architectures, while theirs covered only BTs. Secondly, they allows actions to be modelled by any LTL formulas, not merely one-step LTL formulas. We believe one-step formulas encourage more intuitive and reasonable modelling of actions in an ASM, as general LTL formulas allowed actions selected only once to influence the states arbitrarily far into the future, which generally does not correspond well to the concepts we are modelling. We argue it is sufficient to consider one-step formulas, as they allow us to specify the immediate changes of an action, and theses can be repeated by applying the action many times. It is important to also note that we only require the formulas \emph{modelling actions} to be one-step LTL; the LTL specification $\varphi$ is allowed to be full LTL.

\subsection{A sufficient condition in this verification scheme}

Now that we have a concrete verification scheme, we will demonstrate a sufficient condition for modifying actions, which we now know we can apply immediately to modifications of modules. This is by no means the only such, but provides a useful example and will be used extensively in Section \ref{sec:example}.
%In \cite{biggar2020framework}, the authors showed that for BTs, a `modular' structure, verification could be done flexibly, in that refinements of a subtree could be verified by considering only the changes made in that subtree. Here, we show that this can be done for any module in any decision structure, generalising the result to a much broader class of verification problems. This motivates the study of decision structures and modules and shows that our characterisations of modular structures have a direct application to verification.
\begin{lemma} \label{scheme suff condition}
Let $Z$ be a decision structure, with $\alpha$ labelling a node $v$, with $\bm{r}_1,\dots,\bm{r}_m$ labelling the arcs out of $v$. Let $\beta$ be an action, and suppose $\phi_\alpha$ and $\phi_\beta$ model $\alpha$ and $\beta$ respectively. Let $C$ be the predicate which is true if
\begin{itemize}
    \item $\phi_\beta \models \phi_\alpha$, and
    \item $\alpha_R(w) = \bm{r}_i \Leftrightarrow \beta_R(w) = \bm{r}_i$, for any $w\in\world$ and $i\in\{1,\dots,m\}$.
\end{itemize} Then $C$ is a sufficient condition for actions.
\end{lemma}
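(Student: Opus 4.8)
The plan is to reduce the statement to a single semantic entailment between the canonical modelling formulas of the two structures. Recall that in this verification scheme a decision structure $Z$, together with fixed modelling formulas for the actions labelling its nodes, is certified correct with respect to $\varphi$ precisely when $\always\Psi_Z\models\varphi$, where $\Psi_Z$ is the formula of Lemma~\ref{psi canonical} (soundness being Theorem~\ref{verification works}). Let $Z_a$ be $Z$ with the label of $v$ changed from $\alpha$ to $\beta$; this is still a decision structure since no arcs are touched, and we form $\Psi_{Z_a}$ by keeping every unchanged action's modelling formula and using $\phi_\beta$ for $\beta$. Since $\always$ is monotone under $\models$ and $\models$ is transitive, it suffices to prove $\Psi_{Z_a}\models\Psi_Z$: then $\always\Psi_{Z_a}\models\always\Psi_Z\models\varphi$, so $Z_a$ passes the same scheme, which is exactly the conclusion that $C$ is a sufficient condition for actions.

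The combinatorial core is the claim, using only the second bullet of $C$, that for every $w\in\world$ the executions of $Z$ and of $Z_a$ on input $w$ visit the same sequence of nodes and hence select the same node $n(w)$. I would prove this by induction along the execution: both start at the common source, and at any node $u\neq v$ the action, its return value in $w$, and the outgoing arcs are identical in $Z$ and $Z_a$, so the executions step together. At $v$ the outgoing arcs carry the labels $\bm r_1,\dots,\bm r_m$ in both structures, and $\alpha_R(w)=\bm r_i\Leftrightarrow\beta_R(w)=\bm r_i$ forces $\alpha_R(w)$ to match one of these labels iff $\beta_R(w)$ does, and then the same one; so the two executions either both stop at $v$ or both follow the same arc. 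It follows that $Z(w)=Z_a(w)$ whenever $n(w)\neq v$, while $Z(w)=\alpha$ and $Z_a(w)=\beta$ when $n(w)=v$.

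To finish, fix a state sequence $\sigma=w_1,w_2,\dots$ with $\sigma\models\Psi_{Z_a}$. Unfolding $\Psi_{Z_a}=\bigvee_j\big((Z_a)^{-1}(\alpha'_j)\land\phi'_j\big)$ and using that $Z_a(w_1)$ is a single action, $\sigma\models\Psi_{Z_a}$ holds iff $\sigma$ satisfies the formula modelling the action selected by $Z_a$ in $w_1$; likewise $\sigma\models\Psi_Z$ iff $\sigma$ satisfies the formula modelling the action selected by $Z$ in $w_1$. If $n(w_1)\neq v$ the two selected actions, and hence their modelling formulas, coincide, so $\sigma\models\Psi_Z$ follows at once. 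If $n(w_1)=v$ then $Z_a$ selects $\beta$ and $Z$ selects $\alpha$, so $\sigma\models\phi_\beta$, and the first bullet of $C$ gives $\phi_\beta\models\phi_\alpha$, whence $\sigma\models\phi_\alpha$. Either way $\sigma\models\Psi_Z$, so $\Psi_{Z_a}\models\Psi_Z$ and we are done.

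I expect the main obstacle to be bookkeeping rather than depth: stating and proving the ``same execution path'' claim with enough care to cover degenerate cases (e.g. $v$ being the source, or $\alpha$ or $\beta$ already occurring elsewhere as a node label --- both handled automatically by phrasing the claim at the level of the node-selection map $w\mapsto n(w)$), and being precise that $\Psi$ depends only on the preimages $Z^{-1}(\cdot)$ and the fixed action models, so that reusing the other models is legitimate. The one genuinely conceptual point is that the two bullets of $C$ have cleanly separated roles: bullet two makes $Z$ and $Z_a$ route identically, i.e.\ have equal node-selection maps, and bullet one then upgrades ``same selection'' to ``no weaker guarantee'' at the single node where the action actually changed.
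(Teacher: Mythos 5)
Your proposal is correct and follows essentially the same route as the paper's proof: first use the second bullet of $C$ to show that $Z$ and the modified structure traverse the same path on every input (so the preimages of all actions are unchanged), then use $\phi_\beta\models\phi_\alpha$ to conclude $\Psi_{Z'}\models\Psi_Z$ and hence $\always\Psi_{Z'}\models\always\Psi_Z\models\varphi$. The only differences are cosmetic --- you argue the entailment $\Psi_{Z'}\models\Psi_Z$ semantically over state sequences where the paper rewrites the formula using $\phi_\beta\equiv\phi_\alpha\land\phi_\beta$, and your phrasing at the level of the node-selection map is in fact slightly more careful than the paper's, which tacitly identifies the action $\alpha$ with the single node $v$ it labels.
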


This theorem states that if we replace a single action $\alpha$ by another action $\beta$ which has the same return values and whose model $\phi_\beta$ guarantees all the behavior that was guaranteed by $\alpha$, then the modified structure must still be correct with regard to any specification. Intuitively, this works because if this new $\beta$ is selected it still guarantees the behavior of $\alpha$, and as the return values are unchanged all other action in the decision structure are selected under the same conditions. Importantly, this sufficient condition is \emph{local}, in that it can be checked by considering only the actions and their return values. Recalling Theorem \ref{verificationcorrespondence}, this sufficient condition applies equally to modules. %Thus if a module is replaced by another module which guarantees at least as much behavior, and whose return values are the same, then the modified structure must still be correct with respect to any specification. 
As a practical example, for BTs (and similarly for $k$-BTs) this means that any subtree can be replaced by a subtree with the same Success and Failure conditions and with stronger guarantees while preserving correctness (this is proved specifically for BTs in~\cite{biggar2020framework}). %For TRs this means if a sublist is replaced by another sublist with a logically stronger model and whose preconditions fail in the same states the correctness is preserved; for DTs we can replace any subtree with one with stronger guarantees. 
We explore this in the following example.

\section{Illustrative Example} \label{sec:example}

We now demonstrate our results on a concrete robotic example. This example is inspired by the work of~\cite{klockner2016behavior} on high-altitude pseudo-satellite (drone) control with Behavior Trees. We utilise a decision structure to design the high-level autonomous decision-making, composed from a finite number of actions for the solar-powered drone. We seek to verify if our designs are correct with respect to a system specification represented via an LTL formula.

 The vehicle in this example is equipped with a camera, and is capable of sensing its altitude, location, the local weather conditions and the light levels. This vehicle is intended to complete a reconnaissance task, where it must fly safely while awaiting receipt of a goal location, at which point it must fly to this location and take a photograph of the goal. The vehicle should never run out of power while in the air and should never fly at a low altitude during dangerous weather.
\subsection{Modelling the world, actions and specification}
Formally, we shall describe our world model by the variables shown in Table \ref{tab:variables}.
\begin{table}[h]
    \centering
    \begin{tabular}{c|c c c c}
        Variable & Possible Values & & & \\
        \hline
        Battery  & $b0$ & $bLow$ & $bMid$ & $bHigh$\\
        Weather & $calm$ & $windy$ & $storm$ & \\
        Altitude & $landed$ & $low$ & $high$ & \\
        Light level & $dark$ & $dim$ & $bright$ &\\
        Located at goal& $at$ & $\neg at$ & & \\
        Goal known & $goal$ & $\neg goal$ & & \\
        Have photograph & $photo$ & $\neg photo$ & &
    \end{tabular}
    \caption{Variables describing the world}
    \label{tab:variables}
\end{table}
The variables corresponding to the goal, photograph and location are standard Boolean variables, but the variables for battery level, weather, altitude and light level have more possible values. We treat each value as a Boolean variable and enforce that precisely one is true at any time. Of course, this rule must be enforced somewhere within the logic. We do this, as in~\cite{biggar2020framework}, by introducing additional LTL formulas representing the world model. In this case we enforce the following rules.
\begin{itemize}
    \item Disjointedness: Variables take exactly one value. For instance, we express this for Weather as $(calm \land\neg windy\land \neg storm)\lor(\neg calm\land windy \land \neg storm)\lor (\neg calm\land\neg windy\land storm)$ and similarly for the other variables in Table~\ref{tab:variables}.
    \item Progression: Battery and weather conditions must degrade sequentially. That is, $calm \Rightarrow\X\neg storm$ and $(bHigh \Rightarrow\X(bHigh\lor bMid))\land(bMid \Rightarrow\X(bLow\lor bMid\lor bHigh))$. This enforces that storms must be preceded by windy weather, and that the battery being dead must be preceded by it being at mid-level, then low.
    \item Fairness: It is necessary to assert fairness in many LTL-based schemes. In this case, we would like to ensure the agent is not always prevented from completing the mission due to the battery or weather conditions. Additionally, we assume that a goal is eventually known. We therefore assert that $\event\always calm \land \event \always bright \land \event\always goal$. This allows storms to come an arbitrary large number of times, to which the agent must respond appropriately, but ensures the agent can eventually make progress. Here we have asserted that the light levels become bright, but we have yet to enforce the relationship between light levels and battery levels (caused by the solar panels on the drone). We do this with the following formula $ (bright \lor(dim \land high)) \Rightarrow (\event(\neg b0 \land \neg bLow) \land (bMid \Rightarrow \X(bMid \lor bHigh)))$, which requires that if it is bright, or dim but at high altitude, then the battery eventually climbs to $bMid$ and does not drop below $bMid$ during that time.
    \item Initial Conditions: we will also enforce the initial conditions $ \neg storm \land\neg landed \land bHigh\land \neg at$.
\end{itemize}

Equipped with this world model, we can now describe and construct LTL models of each action that is available to the vehicle. We give these descriptions and models in Table \ref{tab:variables}.
\begin{table}[h]
    \centering
    \begin{tabular}{c||p{4cm} | p{7cm} | p{4cm}}
        Name & Description & Model & Return conditions \\
        \hline
        \textbf{Land} & Land the vehicle. & $\X landed \land((at\land \X at)\lor (\neg at \land \X\neg at))$ & \\
        \textbf{Ascend} & Rise to high altitude, taking off if necessary. & $\X high \land((at\land \X at)\lor (\neg at \land \X\neg at))$ & $\bm{s}$: $high$ \\
        \textbf{Descend} & Go to low altitude, taking off if necessary. & $\X low \land((at\land \X at)\lor (\neg at \land \X\neg at))$ & $\bm{s}$: $low$ \\
        \textbf{Avoid} & Fly safely to avoid dangerous weather, by landing or increasing height. & $((windy \lor storm) \Rightarrow \X(landed \lor high)) \land ((at \land \X at) \lor (\neg at \land \X\neg at))$ & \\
        \textbf{Photograph} & Take a photo. & $(bright\land at) \Rightarrow \event photo$& $\bm{s}$: $photo$, $\bm{f}$: $\neg bright \lor\neg at$, $\bm{m}$: $landed \lor high$ \\
        \textbf{GoTo} & Fly to the goal. & $(high \Rightarrow \X high) \land (landed \lor (\X\neg landed \land \event at))$ & $\bm{s}$: $at$, $\bm{f}$: $\neg at \land windy$, $\bm{m}$: $landed\lor low$\\
        \textbf{Circle} & Circle while maintaining altitude and location &$ ((\neg at \land \X\neg at)\lor (at \land \X at)) \land ((high \land \X high) \lor (\neg high \land \X\neg high))$ & \\
    \end{tabular}
    \caption{Actions available to the vehicle and their models.}
    \label{tab:actions}
\end{table}
There are two points to note now from this table. Firstly, not all actions have return conditions, in which case we assume they return any other unspecified return value. This is equivalent to an action in a BT which only returns `Running'. There are three distinct return values present, $\bm{s}$, $\bm{f}$ and $\bm{m}$. We shall interpret $\bm{s}$ and $\bm{f}$ in their usual BT sense as Success and Failure. We shall use $\bm{m}$ as a third intermediate value, when it is helpful to distinguish more possibilities in the return values. For instance, the Photograph action returns $\bm{m}$ if it is not at low altitude. The additional value is useful here, because in this case the action does not Succeed or Fail per se, but the information is potentially relevant for the structure---for instance, the resultant photographs may be of poor quality. When designing the conditions under which these values are returned we attempt to follow reasonable intuition and to ensure that these variables do not depend on memory, as this would violate reactiveness~\cite{kbts}. The second point is the repeated pattern $((at \land \X at) \lor (\neg at \land \X\neg at))$, which states that the variable $at$ is invariant under that action. That is, when the action is selected the value of that variable must be unchanged in the subsequent state.

In addition to these listed conditions, we shall also use any Boolean formula of variables as shorthand for an action, where it is assumed that this action returns $\bm{s}$ if the formula is true and $\bm{f}$ if it is false, thus corresponding precisely to a Condition node in a BT. We shall also use the actions Battery, Light and Weather (Fig.~\ref{fig:h1 and q1} and \ref{fig:d5}) as three-valued conditions. Specifically, the return conditions are the following. Battery: $\bm{s}$: $bHigh$, $\bm{m}$: $bMid$, $\bm{f}$: $b0 \lor bLow$; Light: $\bm{s}$: $bright$, $\bm{m}$: $dim$, $\bm{f}$: $dark$; Weather: $\bm{s}$: $calm$, $\bm{m}$: $windy$, $\bm{f}$: $storm$. All of the above conditions will be modelled by the formula True.

Next we will present the desired specification of the system formulated in LTL. We will verify the design of the overall system against this specification. The specification is given as the following formula $\varphi = \always ((b0 \Rightarrow landed) \land (storm \Rightarrow (landed \lor high))) \land \event photo$. This formula expresses that the agent should always be landed if its battery level becomes 0, and should always be either landed or at high altitude if there is a storm. In addition, it must eventually have successfully obtained a photograph.

\subsection{Verifying a decision structure}
Now, we propose a decision structure on these actions to control the vehicle. This structure, denoted as $Z_1$, is shown in Figure~\ref{fig:d1}.
\begin{figure}
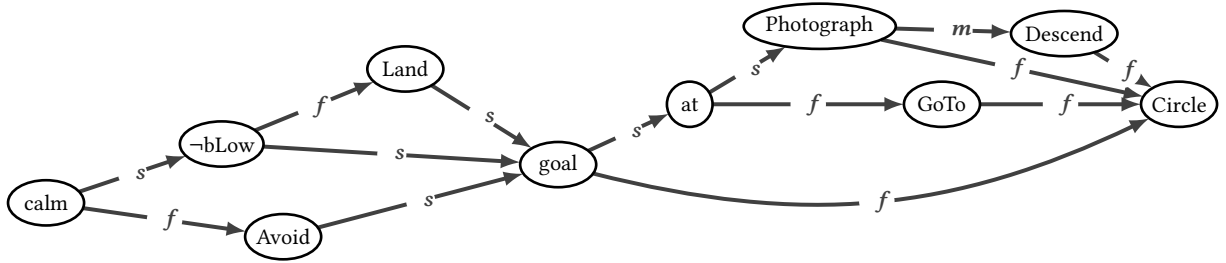

    \centering
    \includestandalone{example_tikz/d1}
    \caption{$Z_1$: An initial attempt at a decision structure for high-level control of the drone. There is not necessarily a one-to-one correspondence between return values of the actions and arcs out of the nodes they label, as discussed in Remark \ref{rem:return values and arcs}}
    \label{fig:d1}
\end{figure}
This structure and our knowledge of the action models describe the following decision-making for the drone. Initially, we check whether the weather is calm, selecting the Avoid action if not. Otherwise the battery level is queried, and the vehicle lands if the battery is low. Otherwise, we check for the presence of a goal, and either move to the goal if it exists or take a photograph if the agent is already at the goal's location, descending to low altitude if necessary. While the design intuitively seems correct, as with many cyber-physical systems, this informal analysis is not sufficient for such a safety-critical application. Hence, we will apply the verification scheme of Section \ref{sec:verrification}. We do this by first constructing $\Psi_1$, the LTL model of $Z_1$. We can easily implement a program to construct this from any decision structure using a graph traversal procedure. Next we must incorporate the world model. Let $init$ denote the initial condition formula described above, and let $rules$ be the conjunction of the other rule formulas. The initial conditions hold only in the first state of any execution, but the other rules such as disjointedness must hold in every state. Hence overall we can represent these requirements by the formula $init \land \always rules$. Finally, the actual verification of the structure involves checking the entailment $init \land \always rules \land \always \Psi_1\models \varphi$, which can be performed using off-the-shelf LTL software, for which we use Spot~\cite{spot}.

What we find is that $Z_1$ does \emph{not} satisfy the specification $\varphi$. In fact, Spot returns a counterexample in which a state is reached where $windy \land bLow$ holds. In this case $Z_1$ selects Avoid, but then in the next state $high \land b0$ holds, which violates $\varphi$. Essentially, $Z_1$ did not check the battery level due to $windy$ being true at that time step, and so did not land when required. One potential fix for this problem is shown in the left-hand side of Figure~\ref{fig:h1 and q1}, where the battery levels are checked before the weather condition. However, this modification alone is not sufficient, as verifying the modified structure still produces a counterexample. In this case, low battery levels force the vehicle to land, after which it never takes off again and so never reaches the goal. This can be fixed with the addition of an Ascend-labelled node, as shown in the right-hand side of Figure~\ref{fig:h1 and q1}. The result of both modifications is the structure $Z_2$ in Figure~\ref{fig:motivation}. Performing the verification procedure once more on this structure confirms that it satisfies the specification $\varphi$.

\subsection{Modifying modules}
Let us demonstrate how we can modify $Z_2$ while preserving its correctness. To begin, find all modules and construct its module decomposition using Algorithm~\ref{alg:findmodules}. The decomposition is shown in Figure~\ref{fig:motivation}. We will now demonstrate several results of the paper with this structure. Observe first that the essential complexity is 2---this can be easily calculated from the module decomposition. We can thus immediately conclude that this structure is not structurally equivalent to any $k$-BT. It is likewise easy to see that this is not structurally equivalent to a DT.

Now we make a modification to this decision structure. We know, from the results in this paper, that if the modification is limited to a module, we can check the correctness using the sufficient condition for actions presented in Lemma~\ref{scheme suff condition}.
\begin{remark}[All modification are in modules]
Every possible modification to any decision structure is in fact confined to some module, and there is a unique smallest module containing any such modification. %This module is at least a single-node module and at most the entire structure.
Theorem~\ref{verificationcorrespondence} states that checking the correctness of a module modification is on par with modifying actions. However, it should be noted that this verification becomes more complex as the size of the module grows. Of course, if the smallest module containing a modification is the entire structure, then the entire verification is essentially repeated from scratch. However, the modules gives us insight as to which nodes are affected by changes. Suppose changes were made in $Z_2$ to GoTo and Ascend. These are contained in a module by themselves, suggesting such a change has limited impact. By contrast, if GoTo and Photograph are modified, their smallest containing module is \{at, GoTo, Ascend, Photograph, Descend\}, all of which may be impacted by this change.
\end{remark}
Suppose now that we replace the module $H = \{$b0, bLow, calm, Land, bHigh, bright, Avoid\} by the structure $Q$. We shall denote the subgraph $Z_2[H]$ by $K$ and both $K$ and $Q$ are shown in Figure~\ref{fig:h1 and q1}. This constructs a structure $Z_3 = (Z_2/H)\cdot^H Q$. We will show $Z_3$ is correct by considering only $K$ and $Q$, using the equivalence of sufficient conditions for actions and modules. By Lemma~\ref{scheme suff condition} we know it is sufficient to check that $K$ and $Q$ return each possible value under the same conditions and $\Psi_Q\models \Psi_K$. In actuality, as we have that $\always rules$ holds, we only need to check that $\always rules \land \Psi_Q\models \Psi_K$. As with the model, finding the return conditions of a decision structure is straightforward given return conditions of each of its actions. We can therefore easily confirm that both $K$ and $Q$ only return $\bm{s}$, and both do so precisely when $calm \land (bHigh \lor (bMid\land bright))$. Finally we check that $\always rules \land \Psi_Q\models \Psi_K$, and find that this is true. Hence we know that $Z_3$ still satisfies $\varphi$.
\begin{figure}
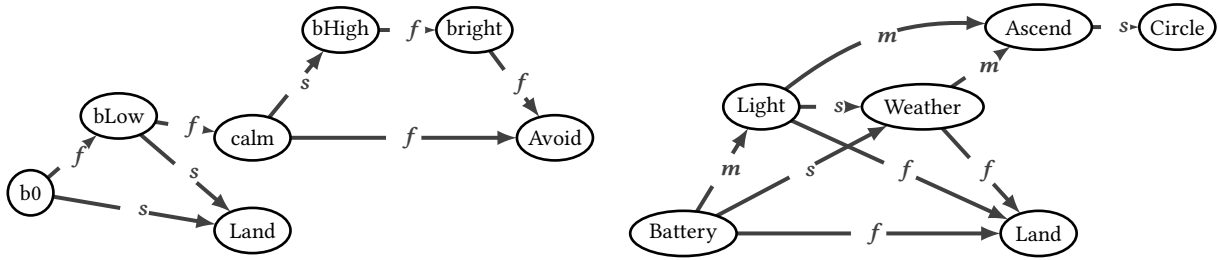

    \centering
    \includestandalone[width = 0.45\columnwidth]{example_tikz/d2start}
    \includestandalone[width = 0.45\columnwidth]{example_tikz/d4start}
    \caption{The structure $K$ (left) is a subgraph of $Z_2$ (Figure~\ref{fig:motivation}). We will replace it by the structure $Q$ (right).}
    \label{fig:h1 and q1}
\end{figure}
We now make another modification to $Z_3$, as an example of some slightly different analysis. Consider the module $H_2 = $ \{goal, at, GoTo, Photograph, Descend, Ascend, Circle\} in $Z_3$. We shall use $K_2$ as a name for the subgraph $Z_3[H_2]$. We will replace this with a structure $Q_2$, to construct a new decision structure $Z_4 = (Z_3/H_2)\cdot ^{H_2} Q_2$, shown in Figure~\ref{fig:d5}. Once again, we test and find that $\always rules \land \Psi_{Q_2}\models\Psi_{K_2}$. In this case, we do not in fact need to check the return conditions---the above result is a sufficient conditions for actions which are sinks in a decision structure. This can be observed by noting that both modules are selected whenever reached, and their return values do not affect how any subsequent actions are selected. Thus we again conclude that $Z_4$ is still correct with respect to $\varphi$.
\begin{figure}
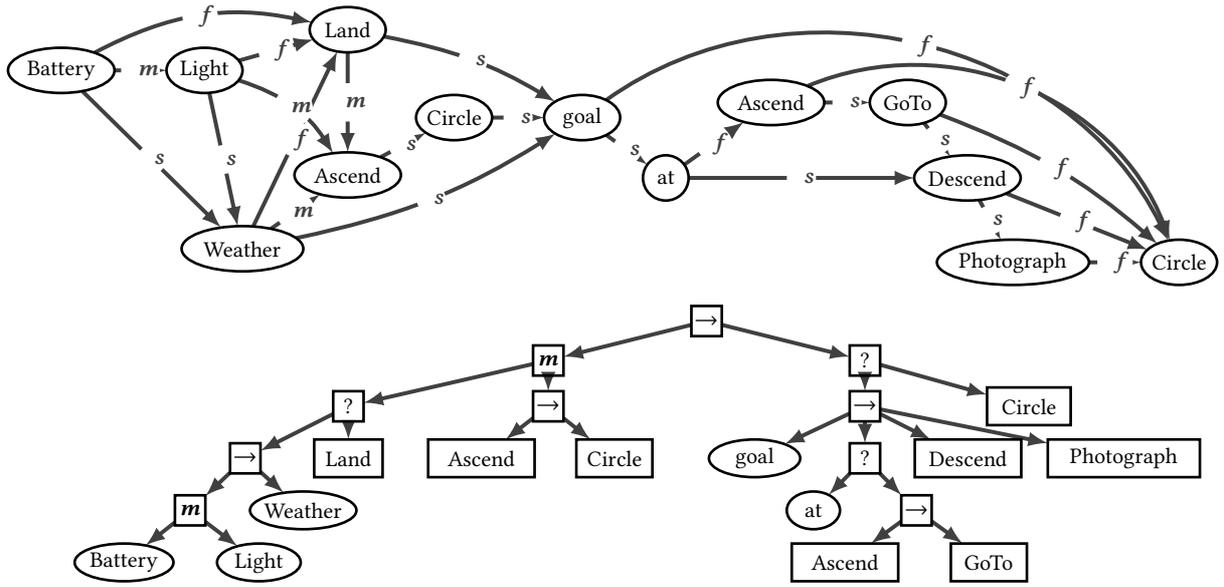

    \centering
    \includestandalone{example_tikz/d5}
    \includestandalone{example_tikz/final_3bt}
    \caption{(\emph{Above}:) The decision structure $Z_4$. (\emph{Below}:) A structurally equivalent 3-BT. Here we use $\seq$ and $\fb$ for the operators $*_{\bm{s}}$ and $*_{\bm{f}}$ respectively, consistent with the usual notation for BTs. We use $\bm{m}$ for the operator $*_{\bm{m}}$.}
    \label{fig:d5}
\end{figure}
\begin{remark}[Why is this useful?]
It is worth recalling at this point the value of this result. While in this example the modules modified were not significantly smaller than the entire structure, in a larger system this is often the case. In that situation, checking that a single module satisfies a local sufficient condition is a more significant improvement in execution time over repeating the verification overall. Verifying a single module also allows for many optimisations, such as ignoring unused variables, which can greatly improve efficiency. Most importantly, this module replacement was \emph{independent of the specification $\varphi$}. The specification may not be known or set at each stage of the design process, but this replacement preserves correctness for all specifications. In the introduction we flagged the construction of libraries of correct behavior as a use case for such results. This is possible precisely because we can determine which modules can replace other modules without needing to know the specification we will test against. Also, some sufficient conditions may be useful despite not being local, such as the strong form of Theorem 5.7 in~\cite{biggar2020framework}. There a  sufficient condition is used which depends on whether the specification is a \emph{safety} or \emph{liveness} condition, but still provides an improvement for verification. Theorem~\ref{verificationcorrespondence} extends to these cases. However, sufficient conditions are generally not also \emph{necessary} conditions. Thus, if we modify part of a decision structure, every module containing this change may fail to satisfy a local sufficient condition even if the overall modified structure does satisfy the specification.
\end{remark}
\begin{remark}[Ad hoc sufficient conditions]
This second example showed how sufficient conditions for actions are very straightforward to construct with intuition once existing sufficient conditions are known. The informal argument given for ignoring the return conditions for $H_2$ and $Q_2$ is an example. This is truly a sufficient condition for actions, following steps similar to the proof of Lemma~\ref{scheme suff condition}, though we have omitted this proof for the sake of brevity. Our results show that we can apply this intuition equally to modules to obtain powerful analysis tools.
\end{remark}
Figure~\ref{fig:d5} depicts the resultant decision structure $Z_4$. Performing a module decomposition reveals that the essential complexity of $Z_4$ is now 1. In other words, in the process of refining $Z_4$ we have been able to remove subparts which were particularly complicated (and thus prone to conceptual errors) with better-structured replacements, while preserving the correctness. Further, as $Z_4$ has complexity 1, it is now structurally equivalent to a $k$-BT, which can be found from its module decomposition. This $k$-BT (which uses three return values, so is a 3-BT) is also shown in Figure~\ref{fig:d5}.
\begin{remark}[Return values and arcs] \label{rem:return values and arcs}
There need not be a one-to-one correspondence between return values of an action labelling a node and the labels on the arcs out of that node. For instance, consider the action GoTo in $Z_4$. GoTo can return $\bm{m}$ if the vehicle is landed or at low altitude, but there is no $\bm{m}$ arc out of GoTo in $Z_4$ so this value is ignored. Conversely, there are $\bm{m}$ and $\bm{s}$ arcs out of Land in $Z_4$, despite the fact that Land never returns these values. Omitting these arcs would produce an identical ASM, but they are included because without them the structure has greater essential complexity, and thus is no longer a 3-BT. In~\cite{generalise} the authors show that a BT can be constructed which is (structurally) equivalent to any DT. Actions in DTs are assumed to not return values, this translation consists of adding additional arcs to the decision structure which do not correspond to values returned by the actions---in the process reducing the essential complexity from 2 to 1. %This is a possible question for future research: how can arcs be added to a decision structure to reduce its essential complexity without changing the ASM?
\end{remark}
\section{Conclusions and future work}  \label{sec:conclusions}
In this paper we presented decision structures as reactive control architectures, and showed how modules in decision structures capture modularity in reactive decision-making. We showed that modules are easily computed and can be used to analyse the complexity of decision structures, and to characterise BTs, $k$-BTs, DTs and TRs. Finally we showed how these findings allow verification schemes to be applied in a modular way to decision structures. This paper introduces a number of new concepts, so there are significant avenues for future work. One such direction would be to find other architectures realisable as decision structures. Another, as mentioned in the above Remark, would be to examine how arcs can be added to structures to minimise their essential complexity. A different important goal would be to apply the module concept and complexity measure in an appropriate way to Finite State Machines. While Hierarchical FSMs exist as the `modular' form of FSMs, as all FSMs are HFSMs the additional `structuredness' is not strictly enforced. By contrast, BTs, structured programs, object-oriented programs and control-flow graphs with low essential complexity (in McCabe's sense~\cite{mccabe1976complexity}) all achieve modularity by enforcing a specific structure. As a result some have argued for BTs over HFSMs~\cite{btbook,isla2015handling} as BTs are more `modular'. However, if there existed an appropriate module definition for FSMs we could enforce that FSMs be `modular' by requiring low essential complexity (in the sense of this paper), which would correspond closely with the ideas of modularity in structured programming and in this paper. 
\section*{Acknowledgements}This work is partially supported by Defence Science and Technology Group, through agreement MyIP: ID10266 entitled ``Hierarchical Verification of Autonomy Architectures'' and the Australian Government, via grant AUSMURIB000001 associated with ONR MURI grant N00014-19-1-2571. We would like to thank Petter \"Ogren for his careful reading of the paper and insightful comments regarding cyclomatic complexity.
\appendix
\section{Algorithms}
The following algorithm (Algorithm~\ref{alg:findmodules}) finds the set of all modules in a decision structure. Its correctness and time complexity are given by Theorem~\ref{moduletimecomplexity}.
\begin{algorithm}
\SetAlgoLined
\SetKwInOut{Input}{Input}
\SetKwInOut{Output}{Output}
\SetKwData{Mdl}{m}
\SetKwData{Vrt}{v}
\SetKwData{Dsj}{disjoint}
\SetKwData{Comp}{completed}
\Input{ A decision structure $Z$ with $n$ nodes and $\ell$ arc labels}
\Output{ The set of modules of $Z$ of size at least two (smaller modules are trivial)}
Add $\ell$ additional nodes $R_c$ to $Z$ for each label $c$, and for node in $Z$ lacking a $c$-arc we add one to $R_c$\;
 $\Comp \gets \emptyset$ \tcc*[r]{\Comp will store the set of modules}
 Let $T$ be an array of nodes of $Z$ in topological order\;
 \For{$i\gets 0$ \KwTo $n-1$}{
 $\Vrt \gets T[i]$ \tcc*[r]{On this iteration we construct all modules with source \Vrt}
  $S := \{ w : \text{in-degree}(w)-1$ for each successor $w$ of $\Vrt$\} 
  \tcc*{$S$ stores the successors of all visited nodes on this iteration, along with the number of their predecessors which are yet to be visited.}
  $M \gets \{\Vrt\}$ \tcc*[r]{$M$ stores the elements of the current module}
  \For{$j\gets i+1$ \KwTo $n-1$}{
  $w \gets T[j]$\;
  \If{$w\in S$}{
    \If(\tcc*[f]{All of the predecessors of $w$ have been visited from $v$}){$S[w] = 0$}{
      Remove $w$ from $S$, add $w$ to $M$\;
      \For{each successor $t$ of $w$}{
        \eIf{$t$ is in $S$}{
            Decrement $S[t]$ \tcc*[r]{If $t$ is in $S$, then we have visited another predecessor}
        }{
            Set $S[t] = \text{in-degree}(t)-1$ \tcc*[r]{Otherwise, we add $t$ to $S$.}
        }
     }
     \If{$|S| = \ell$}{
        Add $M$ to \Comp \tcc*[r]{$M$ is a module when it has exactly $\ell$ successors}
     }
    }
  }
  }
 }
 \Return{\Comp}\;
 \caption{FindModules}
 \label{alg:findmodules}
\end{algorithm}
\section{Proofs}
\subsection{Section \ref{sec:decisionstructures} proofs}

\begin{proof}[Proof of Lemma \ref{isomorphism of digraphs}]
Let $G$ and $H$ be the decision structures without actions labelled, and let $\bm{r}\in\returnvals$ be some return value which is not the label of any arc in $G$ or $H$. There is no loss of generality by assuming the existence of such a value, because we have allowed the return value set $\returnvals$ to be of arbitrary (possibly infinite) cardinality. For any return value $\bm{j}\in\returnvals$ we shall write $\alpha_{\bm{j}}$ to represent an action $\alpha$ with constant $\bm{j}$ return value.
($\Rightarrow$) Assume $|G|=|H|=n$. Let $G,H\in \Swt$ be structurally equivalent. Hence there exists ordering of $N(G)$ and $N(H)$ such that applying any actions $\alpha_1,\dots,\alpha_n$ in this order makes $G=H$ as ASMs. Construct a graph isomorphism $\Gamma:N(G)\to N(H)$ as follows. Let $s_G$ and $s_H$ be the sources of $G$ and $H$ respectively. Then applying a sequence of actions $\alpha_{\bm{r}}$ which all have constant $\bm{r}$ return values always selects $s_G$ in $G$ and $s_H$ in $H$, so these nodes must always have the same label. Hence set $\Gamma(s_G)=s_H$. Fix a topological order on $G$. Let $v$ in $G$ receive the $\bm{j}$ arc from $s_G$. Then apply the actions $\alpha_{\bm{j}}$ to $s_G$, $\beta_{\bm{r}}$ to $v$ and $\gamma_{\bm{r}}$ to all other nodes. The resultant ASM always selects $\beta$. Hence applying these same actions in permuted order to $H$ gives an ASM that always selects $\beta$, which labels some node $h$. However, the source of $H$ must be labelled by $\alpha$, so $h$ is not the source. $\alpha$ always returns $\bm{j}$, and all other actions always return $\bm{r}$, so $h$ must be the head of the $\bm{j}$ arc out of $s_H$. Set $\Gamma(v)=h$. Applying this argument inductively in topological order constructs an isomorphism $\Gamma:N(G)\to N(H)$, and the actions are applied to corresponding nodes in the isomorphism.

($\Leftarrow$) Let $G,H\in\Swt$ be isomorphic arc-labelled graphs. Then there exists an ordering of both node sets such that for any $\alpha_1,\dots,\alpha_n\in\Act$ we obtain identical isomorphic graphs with identical node labellings and hence identical ASMs as the ASM is derived from the structure.
\end{proof}
\begin{proof}[Proof of Corollary~\ref{uniqueconstructionmap}]
Suppose that there existed another construction map $\mu$ for $T$. Then for any $x\in T$ and any $\alpha_1,\dots,\alpha_n\in\Act$, $x$ is structurally equivalent to $\kappa_T(x)$ and  $\mu(x)$. Then $\kappa_T(x)$ and $\mu(x)$ are structurally equivalent, and hence are isomorphic as labelled graphs for any $x$, so $\kappa_T$ is unique.
\end{proof}
\begin{proof}[Proof of Lemma \ref{bt_construction_map}]
This gives a decision structure from any BT, because the result is always acyclic (because nodes only pass ticks to nodes to their right in the original BT) and has a single source given by the leftmost action in the BT. Additionally, it depends only on the order nodes are ticked in the tree, so if two structurally equivalent BTs are fed into this map the outputs are clearly isomorphic. We need to show now that the ASM is the same as for the original BT. By construction, we begin by ticking the leftmost child, which is the source, and continue to subsequent nodes when the child returns Success or Failure and new nodes are ticked. Eventually either the tree returns Success or Failure, in which case the last node ticked returned that value, or some leaf node returned Running and so was selected. These nodes are exactly those selected by the decision structure, by construction.
\end{proof}
\subsection{Section \ref{sec:modularity} proofs}
\begin{proof}[Proof of Lemma \ref{trivialmodules}]
Clearly $N(Z)$ is a module, as there are no vertices not in it. Likewise $\emptyset$ is a module as there are no arcs into or out of it. Let $v\in N(Z)$. All arcs into this subgraph go to the source ($v$) and there can be no more than one arc out with any given label, so it is a module.
\end{proof}
\begin{proof}[Proof of Theorem \ref{moduletimecomplexity}]
We prove that Algorithm~\ref{alg:findmodules} is correct and has time complexity $O(n^2k)$.
First, we discuss the time complexity. Adding auxiliary nodes $R_c$ and arcs to them is $O(nk)$. Topological ordering $Z$ takes $O(n+m)$ time, where $m$ is the number of arcs. Constructing $S$ is $O(k)$, and finally iterating through the successors of a node $w$ takes $O(k)$ time. Given we iterate through the nodes in a nested fashion, we obtain overall time complexity of $O(n^2k)$.

Now we show that, fixing a node $v$ for the outer iteration, the sets added to $completed$ are precisely the modules of size at least two with source $v$. Since we iterate over all nodes in the outer iteration, all modules are output. Let $M$ be the set of nodes at some point in the inner iteration and let $w$ be the subsequent node in the topological order. We show first that every node in $M$ is reachable only through $v$. This is true trivially for $v$ itself. Assume true currently for $M$ for induction. The subsequent node $w$ is added to $M$ only if $w\in S$ ($w$ is reachable from $v$) and $S[w] = 0$ (every predecessor of $w$ is in $M$). By the inductive hypothesis, all nodes in $M$ are reachable only through $v$, and thus so is $w$, completing the proof by induction. Since all arcs into $M$ go to $v$, $M$ is a module if and only if it has one successor for each label, which happens exactly when it has $\ell$ successors. Thus every set added to $completed$ is a module.

Now we show all modules are added to $completed$. Let $X$ be a module with source $v$. If $M=X$, then $M$ has exactly $\ell$ successors and so is output. Suppose now for induction that $M\subset X$ and all remaining nodes in $X$ have not yet been visited. Again let $w$ be the subsequent node in the inner loop. If $w\not\in S$ or $w\in S$ but $S[w] > 0$, then $w$ is either not reachable from $v$ or reachable on a path not through $v$, by the previous argument. In these cases, $w\not\in X$ and $w$ is not added to $M$, so the inductive hypothesis still holds. If $w\in S$ and $S[w] = 0$, then all predecessors of $w$ are in $M$. If $w$ were not in $X$ then as it is reachable from some node of $M$ and thus $X$ it is reachable from all nodes in $X$. This is impossible because at least one node of $X$ comes after $w$ in the topological order, and so $w\in X$ and $w$ is added to $M$. By induction, eventually $M=X$ and so all modules are constructed.
\end{proof}
\begin{proof}[Proof of Lemma \ref{prime if maximal}]
(\emph{Claim: Z/P has no pairs of parallel arcs}) Suppose otherwise, with $(u,v)$ a pair of nodes with an $\bm{a}$ arc and $\bm{b}$ arc from $u$ to $v$. Since $Z$ has no such arcs, either $u$ or $v$ must be a non-trivial module in $Z$. There are then definitely a pair of arcs out of $u$ to the source of $v$ in $Z$; we will refer to this node as $\alpha_v$. Now let $q$ be the last node in a topological order of $u$. All nodes in $u$ have $\bm{a}$ and $\bm{b}$ arcs, but neither of those arcs out of $q$ can go to $u$ because this would violate the topological order. Thus there is an $\bm{a}$ arc and $\bm{b}$ arc from $q$ to $\alpha_v$, contradicting the fact that $Z$ is not a multigraph and so has no parallel arcs. (\emph{Claim: Z/P is acyclic}) Let $P=\{M_1,\dots,M_n\}$. Suppose there is a cycle $M_1, M_2, \dots, M_k,M_1$ in $Z/P$. Then for each $M_i$ there is an arc $m_i \to s_{(i+1)\mod k}$ in $Z$, where $s_{(i+1)\mod k}$ is the source of $M_{(i+1)\mod k}$ and $m_i\in M_i$. However $m_i$ is reachable from $s_i$ in $Z$ for all $i$, so there must be a cycle $s_1,\dots m_1, s_2,\dots, m_2,\dots,s_k,\dots, m_k,s_1$ in $Z$, which is a contradiction so $Z/P$ is acyclic. Now note that if $M\in P$ is a source in $Z/P$ then its source $s_M$ is a source in $Z$, as any arcs into $s_M$ in $Z$ induce arcs into $M$ in $Z/P$. But $Z$ has a single source, so there is at most one in $Z/P$, but as $Z/P$ is acyclic there is at least one and hence precisely one source. Now assume $P$ is a maximal partition. Any non-trivial module in the $Z/P$ is a union of modules in $Z$, as each node in the $Z/P$ corresponds to such a module. If $Z/P$ is not prime, such a union exists, and is itself a module in $Z$, which is not $N(Z)$, contradicting the maximality of the modules it contains.
\end{proof}
\begin{proof}[Proof of Lemma \ref{uniquedecomposition}]
We know that a modular partition always exists, as the one-element trivial modules always form a partition. Suppose that all maximal modules are pairwise disjoint. Then each node is contained in precisely one maximal module, and these are a partition, so there is a unique maximal partition, satisfying case (1). Now suppose that there are maximal modules $M_1$ and $M_2$ with $x\in M_1 \cap M_2$. Let $K=M_1\cap M_2$, $Q=M_1\setminus K$, $H=M_2\setminus K$, and denote their respective sources by $s_1$ and $s_2$. \emph{Claim: exactly one of $s_1$ and $s_2$ is in $M_1\cap M_2$}. Suppose $s_1,s_2\in M_1\cap M_2$. Then $s_1=s_2$ as $M_1$ and $M_2$ can only have one source. In this case, suppose there existed nodes $k\in M_1\cap M_2$ and $h\in M_2\setminus M_1$ with \begin{tikzcd} k \ar[r,"\bm{r}"] & h \end{tikzcd}. As $M_1$ is a module, it must have sinks in $Z[M_1]$, and arcs labelled $\bm{r}$ from all sinks to $h$. As $M_2$ is a module, all arcs into it must go to its source, which is not $h$, so these sinks must themselves be in $M_2$. As the sources and sinks of $M_1$ are in $M_2$, we conclude $M_1\subseteq M_2$. This is a contradiction, as $M_1$ is maximal. By the same argument for $M_2$, we conclude that $s_1\neq s_2$. As $x$ is reachable from both sources, and there cannot be arcs into a module which do not go to its source, we conclude one of the sources is reachable from the other (and not vice  versa). We assume without loss of generality this is $s_2$. As $x$ is reachable from $s_2$, $s_2\in M_1\cap M_2$. And so $s_2\in M_1\cap M_2$ and $s_1\not\in M_1\cap M_2$. \emph{Claim: $M_1\cup M_2 = N(Z)$}. Let $z$ be the final node in a topological order of $M_1\cap M_2$, so no arcs from $z$ go to nodes in $M_1$. For contradiction, consider any node $v\not\in M_1\cup M_2$. $Z$ is connected, so either (1) $v$ has an arc to $s_1$, (2) $v$ has an arc to $s_2$, (3) $v$ receives an arc from $h\in M_1$, (4) $v$ receives arc from $k\in M_2$, where in all cases we will call the label on the arc $\bm{r}$. (2) is impossible, as $s_2\in M_1$ but is not the source. If (3), then as $M_1$ is a module there is an $\bm{r}$ arc from $z$ to $v$, but $z\in M_2 \implies$ every node in $M_2$ has an $\bm{r}$ arc either internal or to $v$. If (4), then either $z$ has an $\bm{r}$ arc to $v$ and so all of $M_1$ and $M_2$ have $\bm{r}$ arcs that are internal or go to $v$, or $z$ has an $\bm{r}$ arc to a node $k\in M_2\setminus M_1$. In the second case, all of $M_1$ has $\bm{r}$ arcs internal or to $k$ and $M_2$ has $\bm{r}$ arcs internal or to $v$. These hold for any $v$, so we conclude $M_1\cup M_2$ is itself a module, so there cannot exist $v\not\in M_1\cup M_2$ as then we would contradict the maximality of $M_1$ and $M_2$. \emph{Claim: $M_1\setminus M_2$, $M_1\cap M_2$, $M_2\setminus M_1$ are modules}. Let $q$ be the final node in a topological order of $M_1\setminus M_2$. As $z\in M_1$ there must be an $\bm{r}$ arc from $z$ to a node $h\in M_2\setminus M_1$. As $M_1$ is a module, all $\bm{r}$ arcs out of it go to $h$. Thus every node in $M_1\setminus M_2$ must have an $\bm{r}$ arc within $M_1$, as arcs to $h$ contradict the modularity of $M_2$. Thus, as $M_1\cup M_2$ and $M_2$ is a module, $q$ has \emph{exactly one} out-arc which goes to $s_2$ with label $\bm{r}$. Thus, all arcs out of $M_1$ have label $\bm{r}$ and go to $h$, as arcs out of $M_1$ induce arcs out of $q$. All arcs into $M_2$ come from $M_1$, so all arcs to $M_2\setminus M_1$ go to $h$. There are no arcs out of $M_2\setminus M_1$, so it is a module. Similarly, all arcs out of $M_1\setminus M_2$ are labelled $\bm{r}$ and go to $s_2$, so it is also a module. All arcs into $M_1\cap M_2$ go to $s_2$ and all arcs out are labelled $\bm{r}$ and go to $h$, so $M_1\cap M_2$ is also a module. Hence the modular partition $P=\{M_1\setminus M_2,M_1\cap M_2,M_2\setminus M_1\}$ gives the quotient $Z/P$ the structure of a path of length two labelled $\bm{r}$. Note that in case (2) there must be no maximal partitions. To see this, let $M$ be a maximal module in a partition which contains the source. $M$ cannot properly contain or be contained in $M_1$, so $M = M_1$, but then all other modules in the partition must be contained in $M_2$, so cannot be maximal. \emph{Claim: For any modular partition $P'$, if $Z/P'$ is a path it is also labelled $\bm{r}$}. We showed earlier there is a node $q\in M_1\setminus M_1$ with a single $\bm{r}$ arc to $s_2$. Suppose $q\in M \in P'$. The only arc out of $q$ is labelled $\bm{r}$ so either there is an $\bm{r}$ arc out of $M$ in $Z/P'$ or $M$ is a sink in $Z/P'$. If $M$ is a sink in $Z/P'$, then $M$ must contain $M_2$, contradicting its maximality. Thus $M$ has precisely one arc labelled $\bm{r}$ in $Z/P'$ so if $Z/P'$ is a path it is labelled $\bm{r}$. \emph{Claim: there exists a unique $P$ giving the path $Z/P$ maximum length.} Let $T=\{T_1,\dots,T_n\}$ and $R=\{R_1,\dots,R_n\}$ be partitions of $Z$ which give a maximum length ($n$) $\bm{r}$-labelled path as a quotient. We assume the $T_i$ and $R_i$ are in the order they occur in this path. Let $i$ be the smallest value such that $R_i\neq T_i$. As all previous are the same, $R_i$ and $T_i$ have the same source, and so by a previous result one must contain the other. Suppose $T_i=R_i\cup X$, $X\neq \emptyset$. All arcs out of $R_i$ are labelled $\bm{r}$ and go to one node, and there must be at least one arc to $X$ as $T_i$ is a module, so $X$ has one source and all arcs from $R_i$ go to that source. But then as $T_i$ is a module $X$ is a module, with all arcs out of $X$ labelled $\bm{r}$ and going to the source of $T_{i+1}$. Then the modular partition $T'=\{T_1,\dots,R_i,X,T_{i+1},\dots,T_n\}$ gives a $(n+1)$-length path as the quotient $Z/T'$, which contradicts our assumption that $T$ had maximum length. Hence $T$ is unique.
\end{proof}
\begin{proof}[Proof of Lemma \ref{primequotients}]
This follows immediately from Lemmas \ref{prime if maximal} and \ref{uniquedecomposition}.
\end{proof}
\begin{proof}[Proof of Lemma \ref{moduleinmodule}]
Suppose $X$ is a module in $Z$. Then $X$ has a single source in $Y$. As every node in $Y$ is in $Z$, for every node $y\in Y$ with an arc \begin{tikzcd} x \ar[r,"\bm{r}"] & y \end{tikzcd} where $x\in X$, all nodes in $X$ have a $\bm{r}$ arc which is either internal to $X$ or goes to $y$, so $X$ is a module i $Z[Y]$. Suppose now $X$ is a module in $Z[Y]$. Any node in $X$ which is not the source cannot receive arcs from $N(Z)\setminus Y$, as this contradicts the modularity of $Y$ as every node not the source of $X$ cannot be the source of $Y$. We know that for arcs from $X$ to other parts of $Y$, all arcs with the same label go the same nodes. Now suppose we have an arc \begin{tikzcd} x \ar[r,"\bm{r}"] & z \end{tikzcd}, with $x\in X$ and $z\in N(Z)\setminus Y$. Then, all nodes in $Y$ have $\bm{r}$ arcs either internal to $Y$ or going to $z$. There cannot be an arc from $X$ to $Y\setminus X$ labelled $\bm{r}$, as then $x$ must have an $\bm{r}$ arc in $Z[Y]$, which it does not. Hence all nodes in $X$ have $\bm{r}$ arcs either internal or which go to $z$. Thus $X$ is a module of $Z$.
\end{proof}
\begin{proof}[Proof of Theorem \ref{kbt_characterisation}]
($\Rightarrow$) Let $T$ be a $k$-BT that is structurally equivalent to $Z$. Every $k$-BT is structurally equivalent to one in compressed form, and so using Lemma \ref{isomorphism of digraphs} we can assume without loss of generality that $T$ is in this form. If $|T|=1$, then the module decomposition is trivially a path. Suppose now that for all $k$-BTs $T$ with $|T|\leq n$ every module in the module decomposition is a path, and the label on the uppermost quotient's path is $\bm{j}$ if and only if the root operator of the tree is $*_j$. Let $|T|=n$, and let $T$ have root operator $*_i$, with children $T_1,\dots,T_n$. Let $x\in\{1,\dots,n\}$. Whenever $T_x$ is ticked, the first node ticked is the leftmost, and so all arcs into $T_x$ in $Z$. Also, every node in $T_x$ can be subsequently ticked after the leftmost node, so the subgraph formed by $T_x$ in $Z$ has a single source. Whenever $T_x$ is ticked in $T$ but not selected, it must have returned $\bm{i}$ and the subsequent node ticked is the source of $T_{x+1}$. Hence all arcs out of $T_x$ in $Z$ are labelled $\bm{i}$ and go to a single node, so $T_x$ is a module. Thus $P =\{T_1,\dots,T_n\}$ is a module partition, and $Z/P$ is a path labelled $\bm{i}$ of length $n-1$. We want to show this is maximum length. By the proof of Lemma \ref{uniquedecomposition} we know that there is a longer path if and only if some $T_x$ has a quotient which is a path labelled $\bm{i}$. However as $T$ is compressed form, the root operator of $T_x$ is not $*_i$, and by the inductive hypothesis the uppermost quotient is not labelled $\bm{i}$. The proof of Lemma \ref{uniquedecomposition} showed that if any quotient by modular partition of a structure is a path labelled $\bm{r}$, then all quotients which are paths are labelled $\bm{r}$. Hence $P$ is of maximum length, and so $P$ is the first partition in the module decomposition of $Z$. The factors are $Z[T_1],\dots,Z[T_n]$. By the inductive hypothesis, their quotients are again paths, so we are done.

($\Leftarrow$) Let $Z$ be a switching structure where every quotient is a path. Construct a $k$-BT recursively, where the root operator is given by the label on the uppermost path, and the factors along the path become the children of the root from left to right. Repeating this process constructs a unique $k$-BT $T$. At each layer in the decomposition, the label on the factor path is different than the quotient path it is contained in, as otherwise the quotient path could have had greater length. Thus $T$ is in compressed form. However, the previous argument showed how for $k$-BTs in compressed form their module decomposition corresponds directly to their subtrees, so the module decomposition of $\kappa_{k\text{-BT}}(T)$ has the same module decomposition as $Z$. By repeating performing module expansions we can see that two structures with identical module decompositions must be isomorphic, and hence structurally equivalent by Lemma \ref{isomorphism of digraphs}. Thus $T$ is structurally equivalent to $Z$.
\end{proof}
\begin{proof}[Proof of Theorem \ref{dt_characterisation}]
Suppose $Z$'s module decomposition has this structure and has two distinct labels. When we perform a module expansion, it must be on a sink, which expands into a node with arcs to two new sinks. Repeatedly expanding sinks into nodes with arcs to pairs of sinks leaves a binary tree, with two arc labels, which is therefore a DT. Now suppose $Z$ is a DT. In a tree, the modules are precisely the subtrees, so the maximal partition consists of both children of the root node, which are the maximal subtrees, and the trivial module that is the root. Taking quotient recursively by this partition gives the module decomposition, where every quotient is the graph with one source and two sinks.
\end{proof}
\begin{proof}[Proof of Theorem \ref{testingtimecomplexity}]
We will prove for $k$-BTs first, as BTs and TRs are special cases. Firstly, reject if a structure has more than $k$ distinct labels, which can be done in $O(nk)$ time. Assume now $Z$ has at most $k$ labels. If all quotients in the decomposition are paths, then there is a directed path through all nodes. Hence, all modules must be connected subsets of this path. We can find this path, or prove it doesn't exist, in $O(n+m)$ by topological sorting and counting the length of the longest path in that order. By Theorem \ref{moduletimecomplexity}, we can find all modules in time $O(n^2)$, as number of labels is bounded by a constant $k$. Iterate repeatedly through $N(Z)$ in this order. On each iteration, check that a module has an induced subgraph that is a path, and if so contract the structure by that module. If it is a $k$-BT this  process constructs its module decomposition and all quotients must be paths. If not, then at some point there must be no modules whose induced subgraphs are paths, so we return False. One each iteration we reduce the number of nodes by at least one so we visit at most $O(n^2)$. DTs can be identified by a depth-first search on the structure, checking that each node has exactly one predecessor and exactly two or zero successors, and there are exactly two distinct labels.
\end{proof}
\begin{proof}[Proof of Lemma \ref{calculatincyclcomplexity}]
Construct a new graph $Z'$ by adding a new node $q$ and $s$ arcs from every sink of $Z$ to $q$. Then $Z$ has a single source and sink, and can be considered a program graph as in~\cite{mccabe1976complexity}. It is proved there that the cyclomatic complexity of a program graph with $e$ arcs and $m$ nodes is $e-m+2$. $Z'$ has $n+1$ nodes and $a+s$ arcs, so its cyclomatic complexity is $(a+s) - (n+1) +2 = a+s-n-1+2=a+s-n+1$.
\end{proof}
\begin{proof}[Proof of Theorem \ref{essentialtimecomplexity}]
We know by Theorem \ref{moduletimecomplexity} that all modules can be found by an $O(n^2k)$ time algorithm. If we store the number of arcs, sinks and nodes in each module as it is computed, it only requires $O(n)$ additional time to calculate the cyclomatic complexities of each and to find the maximum, so this is still $O(n^2k)$.
\end{proof}
\begin{proof}[Proof of Theorem \ref{essential_characterisation}]
By Theorem \ref{kbt_characterisation}, a structure with $k$ labels is a $k$-BT iff all quotients in the decomposition are paths. Paths have cyclomatic complexity 1, so $k$-BTs have essential complexity 1. Now suppose a structure $Z$ has essential complexity 1. Then each module has cyclomatic complexity 1, so is a path. Hence $Z$ is a $k$-BT.
\end{proof}
\begin{comment}
\begin{proof}[Proof of Theorem \ref{expansion_kbt_characterisation}]
We know by Theorem \ref{kbt_characterisation} that a structure is a $k$-BT if and only if all quotients in the decomposition are paths. We can reconstruct the $k$-BT with expansions in the reverse order of the decomposition. For each quotient, perform an expansion for each of its factors, working from highest to lowest. If a factor contains a path of length longer than one, repeatedly expanding the last node on the path will construct the same module. As by Lemma \ref{moduleinmodule}, the module expansion in another module expansion is still a module of the entire graph, so this method reconstructs the structure. Also, it is an if and only if because if we perform the module decomposition on a structure in this language, the quotients must all be paths as each expansion is a path.
\end{proof}
\begin{proof}[Proof of Theorem \ref{expansion_dt_characterisation}]
Again, this is essentially a translation of Theorem \ref{dt_characterisation}.
\end{proof}
\end{comment}
\subsection{Section \ref{sec:verrification} proofs}
\begin{proof}[Proof of Theorem \ref{contraction}]
Let $w\in\world$ be a state. Let $p_w$ and $p_w'$ be the paths in $Z$ and $Z/H$ respectively traversed on input $w$ to $Z(w)$ and $(Z/H)(w)$. There are three cases. In case (1), $p_w$ does not contain any nodes of $H$. As each node and arc on $p_w$ is unmodified in $Z/H$, $p_w=p_w'$ and so $Z(w) = (Z/H)(w)$ and $Z_B(w)=(Z/H)_B(w)$. In case (2) there is at least one node in $H$ on $p_w$ but $Z(w)\not\in H$. Then $p_w = v_1,\dots, v_m, s_H,\dots,h,q,\dots,Z(w)$, where $s_H$ is the source of $H$, $h\in H$ and $v_1,\dots,v_m,q,\dots,Z(w)\not\in H$. Let $\bm{r}$ be the label on the arc $h\to q$. This form is general, because if there are any nodes in $H$ on $p_w$ then they must form a single subpath beginning with $s_H$. As $s_H,\dots,h$ is entirely within $H$, $H(w)=h$. As $v_1,\dots,v_m$ are unchanged, $H$ must be reached in $p_w'$. $H$ therefore returns $\bm{r}$, and must have an $\bm{r}$ arc to $q$. The nodes after $q$ are also unchanged, so $p_w'=v_1,\dots,v_m,H,q,\dots,Z(w)$, and so $Z_B(w) = (Z/H)_B(w)$. In the final case (3), $Z(w)\in H$. Let $\bm{r}_1,\dots,\bm{r}_n$ be the return values on arcs out of $H$. Then for all $i\{1,\dots,n\}$, $Z(w)_R(w)\neq \bm{r}_i$, as there must be an $\bm{r}_i$ arc out of $Z(w)$ in $Z$. Then $p_w$ has form $v_1,\dots, v_m, s_H,\dots,Z(w)$. $p_w'$ must also have prefix $v_1,\dots,v_m$, and the return value of $H$ does not match any out-arc in $Z/H$, so $H$ is selected. However $H(w) = Z(w)$. But then $Z_B(w) = Z(w)_B(w) = H(w)_B(w) = (Z/H)(w)_B(w)=(Z/H)_B(w)$.
\end{proof}
\begin{proof}[Proof of Theorem \ref{verificationcorrespondence}]
Let $C(Z,v,\alpha,\beta)$ be a sufficient condition for actions. Then $C'(Z,H,Q) = C(Z/H,H,(H_B,H_R),(Q_B,Q_R))$ is a sufficient condition for modules. Suppose $C(Z/H,H,(H_B,H_R),(Q_B,Q_R))$ is true. There is a single node $H$ in $Z/H$, labelled by action $(H_B,H_R)$, which is replaced by $(Q_B,Q_R)$. The resultant structure is $((Z/H)\cdot^H Q)/N(Q)$. If $Z/H$ satisfies a specification $\varphi$ according to $V$, then so does $((Z/H)\cdot^H Q)/N(Q)$. However by Theorem \ref{contraction}, $Z_B=(Z/H)_B$ and $((Z/H)\cdot^H Q)_B = (((Z/H)\cdot^H Q)/N(Q))_B$. Thus $Z$ satisfies $\varphi$ whenever $Z/H$ does, and likewise for $(Z/H)\cdot^H Q$ and $((Z/H)\cdot^H Q)/N(Q)$. Thus $C'$ is a sufficient condition for modules. The converse can be shown by reversing the above translation. % or by observing that every sufficient condition for modules gives rise to one for actions through the one-element trivial modules. Thus by the Cantor-Bernstein-Schr\"oder Theorem there is a one-to-one correspondence.
\end{proof}
\begin{proof}[Proof of Lemma \ref{psi canonical}]
Let $w_1,w_2.w_3,\dots\in\world$ be a sequence of states. ($\Rightarrow$) Suppose $w_2,w_3,\dots \in w_1^M$. Then $\exists \alpha_i\in A$ s.t. $w_2,w_3,\dots \in w_1^{\alpha_i}$. By equivalence, $\phi_i$ holds in $w_1,w_2,w_3,\dots$. Also, $M^{-1}(\alpha_i)$ holds in $w$, so $\phi_i\land M^{-1}(\alpha_i)$ holds in this sequence and so $\Psi_M$ holds. ($\Leftarrow$) Suppose $w_1,w_2,w_3,\dots\models\Psi_M$. Then, given the preimage of $M$ is a partition, there exists a unique $j\in\{1,\dots,n\}$ such that $\phi_j \land M^{-1}(\alpha_j)$ holds. Given $\phi_j$ holds, by equivalence we know $w_2,w_3,\dots \in w_1^{\alpha_j}$. However, $M(w_1) = \alpha_j\implies w_1^{\alpha_j} = w_1^M$, and so $\Psi_M$ is equivalent to $(M_B,M_R)$.
\end{proof}
\begin{proof}[Proof of Theorem \ref{verification works}]
Let $w_0,w_1,\dots\in\world$ be a sequence of states generated by $\mathcal{W}$. For each $i$ the signal is given by $\alpha_B(w_i)$, so $w_{i+1},w_{i+2},\dots \in {w_i}^\alpha$. Hence $\phi_\alpha$ holds in $w_i,w_{i+1},\dots$. Thus $\always\phi_\alpha$ holds in $w_0,w_1,\dots$. Then $\varphi$ holds in $w_0,w_1,\dots$.
\end{proof}
\begin{proof}[Proof of Lemma \ref{scheme suff condition}]
Assume $C$ holds. Let $Z'$ be the structure formed by replacing $\alpha$ by $\beta$ on $v$. Let $w\in\world$ be a state. Suppose first that $Z(w)\neq \alpha$. Consider the unique path $p_w$ from the source of $Z$ to $Z(w)$. If $\alpha$ is not on this path, all nodes on this path are unchanged in $Z'$, so $Z'(w)=Z(w)$. If $\alpha$ is on this path, then $\alpha_R(w)=\bm{r}_i$ for some $i$, but also $\beta_R(w)=\bm{r}_i$, and so the same arc is taken at every step on this path in $Z'$, and $Z'(w)=Z(w)$. Now suppose $Z(w)=\alpha$. Then $Z'(w)=\beta$. Thus for all actions $\gamma$, $Z'^{-1}(\gamma)=Z^{-1}(\gamma)$. Assume now we have models $\phi_i$ for all actions $\gamma_i$, and that by this verification scheme $Z$ satisfies some specification $\varphi$ using these models. Then $\always \Psi_Z\models\varphi$. However, $\Psi_{Z'}=(Z^{-1}(\gamma_1)\land\phi_1)\lor\dots\lor (Z^{-1}(\gamma_n)\land\phi_n)\lor (Z^{-1}(\alpha)\land\phi_\beta)$. However, as $\phi_\beta\models \phi_\alpha$, then $\phi_\beta \equiv \phi_\alpha\land\phi_\beta$ and so $\Psi_{Z'}=(Z^{-1}(\gamma_1)\land\phi_1)\lor\dots\lor (Z^{-1}(\gamma_n)\land\phi_n)\lor (Z^{-1}(\alpha)\land\phi_\alpha\land\phi_\beta)\models \Psi_Z$. Hence $\always \Psi_{Z'}\models \always \Psi_Z\models \varphi$, so $Z'$ also satisfies $\varphi$.
\end{proof}

\bibliographystyle{ACM-Reference-Format}
\bibliography{references}
\end{document}